\newcolumntype{C}[1]{>{\centering\arraybackslash}p{#1}}
\newcommand{\paren}[1]{\left( #1 \right)}
\newcommand{\sqbrac}[1]{\left[ #1 \right]}
\newcommand{\set}[1]{\left\{ #1 \right\}}
\newcommand{\abs}[1]{\left\vert #1 \right\vert}
\newcommand{\norm}[2]{\left\Vert #1 \right\Vert_{#2}}
\newcommand{\calA}{\mathcal{A}}
\newcommand{\calD}{\mathcal{D}}
\newcommand{\calF}{\mathcal{F}}
\newcommand{\calG}{\mathcal{G}}
\newcommand{\calN}{\mathcal{N}}
\newcommand{\calO}{\mathcal{O}}
\newcommand{\calU}{\mathcal{U}}
\newcommand{\calX}{\mathcal{X}}
\newcommand{\calY}{\mathcal{Y}}
\newcommand{\bbE}{\mathbb{E}}
\newcommand{\bbI}{\mathbb{I}}
\newcommand{\bbN}{\mathbb{N}}
\newcommand{\bbP}{\mathbb{P}}
\newcommand{\bbR}{\mathbb{R}}
\newtheorem{theorem}{Theorem}
\newtheorem{lemma}[theorem]{Lemma}
\newtheorem{proposition}[theorem]{Proposition}
\newtheorem{definition}[theorem]{Definition}
\newcommand{\Var}{\text{Var}}
\newcommand{\Vup}{V^{\uparrow}}
\newcommand{\vbar}{\overline{v}}
\newcommand{\Vbar}{\overline{V}}
\newcommand{\Vdag}{V^{\dagger}}
\newcommand{\bfp}{\mathbf{p}}
\newcommand{\bfw}{\mathbf{w}}
\newcommand{\bfx}{\mathbf{x}}
\newcommand{\bfell}{\boldsymbol{\ell}}
\newcommand{\LineComment}[1]{\State \textcolor{gray}{\textit{// #1}}}
\newif\ifpoints
\begin{document}

%

%

\twocolumn[

\aistatstitle{OEUVRE: OnlinE Unbiased Variance-Reduced loss Estimation}

\aistatsauthor{Kanad Pardeshi \And Bryan Wilder \And Aarti Singh}

\aistatsaddress{Machine Learning Department \\ Carnegie Mellon University \And Machine Learning Department \\ Carnegie Mellon University \And Machine Learning Department \\ Carnegie Mellon University} ]

\begin{abstract}

Online learning algorithms continually update their models as data arrive, making it essential to accurately estimate the expected loss at the current time step. The prequential method is an effective estimation approach which can be practically deployed in various ways. However, theoretical guarantees have previously been established under strong conditions on the algorithm, and practical algorithms have hyperparameters which require careful tuning. We introduce OEUVRE, an estimator that evaluates each incoming sample on the function learned at the current and previous time steps, recursively updating the loss estimate in constant time and memory. We use algorithmic stability, a property satisfied by many popular online learners, for optimal updates and prove consistency, convergence rates, and concentration bounds for our estimator. We design a method to adaptively tune OEUVRE's hyperparameters and test it across diverse online and stochastic tasks. We observe that OEUVRE matches or outperforms other estimators even when their hyperparameters are tuned with oracle access to ground truth.




\end{abstract}

\section{Introduction}

\begin{figure*}[th]
    \centering
    \includegraphics[width=0.9\textwidth]{./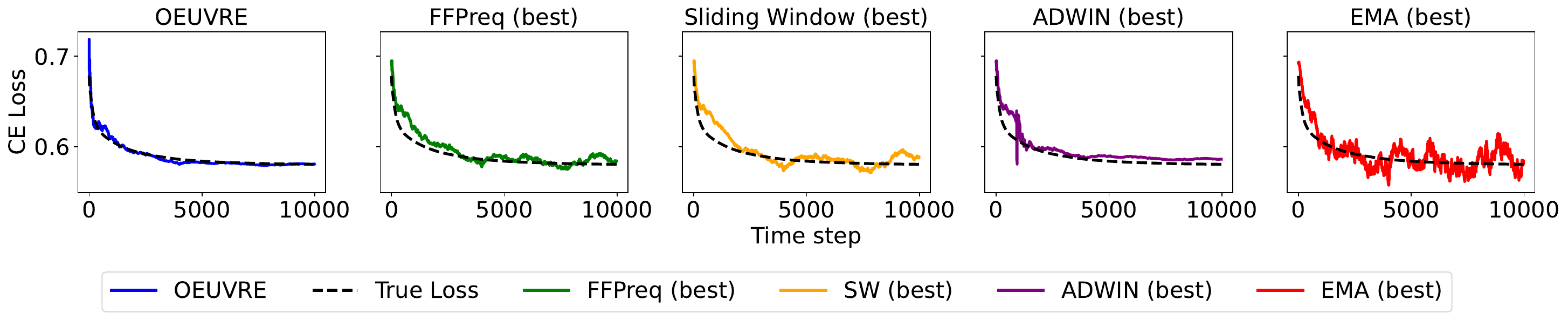}
    \caption{We illustrate the behavior of OEUVRE and several baselines on a representative run of the Diabetes Health Indicators dataset. The hyperparameter for each baseline was chosen using grid search to minimize RMSE. We see that our proposed estimator provides a more accurate continuous estimate of the true loss compared to the baselines without the need for hyperparameter tuning.}
    \label{fig:intro_fig}
    \vspace{-1em}
\end{figure*}

Online learning is an important part of modern-day machine learning, and algorithms that update their models over time are widely used in applications such as recommendation algorithms and portfolio selection in online time series. In such settings, a measure of the model's \textit{expected loss} at the current time step is a naturally useful quantity for tasks like monitoring model performance over time. This measure can also be used to determine early stopping in training when the labels are costly to obtain, and when we would want to stop training due to required compute cost or time once the model is sufficiently accurate. Moreover, a good estimate of the model's current performance can play a crucial role in online model selection \cite{foster2017parameter}.

A common approach to online loss estimation is the prequential method \cite{dawid1984present, vovk2005algorithmic}, which enables performance estimation for online models in a sample-efficient way. Here, the incoming sample is first used to evaluate the current model's performance before using it to update the model. The prequential method can be used with different methods of combining evaluations, such as fading factors, sliding windows, \cite{gama2013evaluating} and exponential moving averages. Moreover, drift detection methods like ADWIN \cite{bifet2007learning}, EDDM \cite{baena2006early}, and Page-Hinkley \cite{page1954continuous, sebastiao2017supporting} can be modified for online loss estimation, resulting in adaptive sliding windows.

A key challenge in online loss estimation is that while the incoming data may be i.i.d., the observed sequence of losses on these samples is not i.i.d. as the learned functions evolve over time. Existing theoretical results for prequential estimation \cite{gama2013evaluating} require the strong assumption that the sequence of learned functions $f_t$ converges to a fixed limit $f^*$, and show that the estimate is consistent with this assumption. Drift detection methods \cite{bifet2007learning, dos2016fast} provide detection guarantees for general data streams but do not exploit properties of the learning algorithm for convergence guarantees. In practice, these methods have hyperparameters which need to be set carefully for accurate loss estimation.

In this work, we present OnlinE Unbiased Variance-Reduced loss Estimation (OEUVRE), an online loss estimator which evaluates the incoming sample on the current model \textit{and} the model from the previous time step. We use these two evaluations to update the estimate, with the past evaluation helping to reduce the variance of the updated estimate. The influence of the past estimate is controlled by a sequence of weights $(\gamma_{t})$, where each $\gamma_{t}$ is chosen to minimize a variance bound. This minimization requires knowledge of the extent of change in the learned function between two consecutive time steps, and \textit{algorithmic stability} provides us with exactly that information. Stability bounds are known for many popular online algorithms such as Follow-the-Leader (FTL), Follow-the-Regularized-Leader (FTRL), and Online Mirror Descent (OMD), and OEUVRE can readily be used with these frameworks.

The simple step of evaluating the incoming sample on an additional model leads to powerful theoretical results when coupled with algorithmic stability. We show that in the setting with i.i.d. data and for losses with bounded variance, OEUVRE converges in $L^2$ to the expected loss of the currently learned model, along with proving convergence rates that scale with the stability rate of the algorithm. We prove that OEUVRE forms a martingale, which enables us to establish novel asymptotic convergence and concentration results for online loss estimation. The rates in our results are closely related to the rate of algorithmic stability, thus linking statistical performance to optimization dynamics. To our knowledge, this is the first work to systematically leverage the stability of learning algorithms for online performance estimation.

We then design a method to improve OEUVRE's practical performance without affecting its rate of convergence by adaptively estimating its hyperparameters from observed data. We demonstrate the empirical effectiveness of our estimator through experiments on linear regression, logistic regression, prediction with expert advice, and neural networks. We observe that OEUVRE consistently provides accurate estimates of the expected loss of the currently learned model across tasks and noise regimes. OEUVRE matches or outperforms existing baselines, even when the hyperparameters of each baseline are tuned for optimal estimation. We provide a comparison of OEUVRE against baseline methods for a representative run in Figure \ref{fig:intro_fig}.

\vspace{-0.5em}
\section{Related Work}
\vspace{-0.5em}

To monitor online performance at every step, practitioners commonly use prequential (interleaved test-then-train) evaluation, yielding estimators such as the plain prequential average, sliding windows, fading-factor variants, and exponential moving averages (EMA) \cite{gama2013evaluating}. The prequential average is consistent under i.i.d. data when the learned predictors $f_t$ converge to a limit $f^\star$ \cite{gama2013evaluating}. Moreover, although concentration bound can be obtained for the prequential average \cite{gama2009issues}, these bound hold for the \textit{time-averaged} loss, and not the loss for the current time step.  Our approach instead leverages algorithmic stability, a much weaker convergence assumption, to design an estimator for the current model's expected loss with stronger $L^2$ convergence given i.i.d. data. Sliding windows and fixed-decay EMAs impose a fixed bias–variance trade-off; OEUVRE can adapt to the problem setting by reducing the variance proportional to the difficulty of the problem through the stability of the learning algorithm and provide a consistent estimate.


Drift detection methods like ADWIN \cite{bifet2007learning}, EDDM \cite{baena2006early}, Page-Hinkley \cite{page1954continuous, sebastiao2017supporting} and KSWIN \cite{dos2016fast} are designed to identify changes in the data distribution through hypothesis testing or thresholding on observed values. While these methods can be modified to create adaptive sliding window estimators for loss estimation, they are primarily designed for non-stationary environments with concept drift. In contrast, OEUVRE is specifically designed for loss estimation, providing explicit convergence guarantees that exploit algorithmic stability properties of the learner. Moreover, the sensitivity of drift detection methods depends on tunable hyperparameters, whereas OEUVRE's hyperparameters can be set adaptively (Section \ref{sec:adaptive_const_est}) without affecting convergence rates given that the model is trained on i.i.d. data.

Algorithmic stability \cite{bousquet2002stability, bousquet2020sharper} bounds the extent to which a new sample can change the function learned by a learning algorithm on a dataset. Uniform stability has previously been connected to generalization bounds \cite{bousquet2002stability, bousquet2020sharper}. Stability guarantees are known for a wide variety of commonly used algorithms, including regularized Empirical Risk Minimization \cite{bousquet2002stability}, Stochastic Gradient Descent \cite{hardt2016train}, and online learning frameworks \cite{sahaInterplayStabilityRegret2012}. Our work is the first to connect uniform stability with loss estimation, and our theoretical guarantees establish a link between the estimator's convergence rate and the algorithm's stability rate. We list some useful known stability bounds in Table \ref{tab:stability_results} and elaborate on this connection in Section \ref{sec:known_stability_results}.

Variance reduction techniques \cite{botev2017variance, gower2020variance} and stochastic approximation \cite{borkar2008stochastic, lai2003stochastic} have been highly successful in stochastic optimization. They underpin methods for stochastic gradient descent \cite{johnson2013accelerating, defazio2014saga, shalev2013stochastic, schmidt2017minimizing} and Q-learning \cite{wainwright2019variance, khamaru2021instance, wang2024sample}, where they yield faster convergence rates. Broad mechanisms include storing snapshots and gradient tables, as well as classical control-variate and importance-sampling strategies. In contrast, our work adapts the spirit of variance reduction to the setting of \textit{loss estimation}: we leverage evaluations at consecutive time steps to more accurately estimate the expected performance of the learning algorithm at the current step.



\section{Problem Setup}
We assume a sequence of samples $\set{z_t}_{t \in \bbN^+}$ arriving in an online fashion, where $z_t = (x_t, y_t)$, $x_t \in \calX$ are the features, $y_t \in \calY$ are the labels, and $(x_t, y_t)$ are drawn i.i.d. from the distribution $\calD$ over $\calX \times \calY$. The i.i.d. assumption is standard in analyzing online loss estimation \cite{gama2013evaluating}. Using these samples, we learn a sequence of functions $\paren{f_t}_{t \geq 1}$ using some online learning algorithm, where function $f_t: \calX \to \calY$ is learned using the first $(t-1)$ samples, $\set{z_s}_{s \in [t-1]}$. We further assume that all $f_t$'s belong to a hypothesis class $\calG$. We define $\set{\calF_t}_{t \in \bbN}$ as the canonical filtration defined on the sample sequence $\set{z_t}$.

We measure the performance of the learning algorithm using a loss function $\ell: \calX \times \calY \to \bbR^+$. We define $\ell_t(z) = \ell_t((x, y)) = \ell(f_t(x), y)$. The expected loss of $f_t$ is given by $\bbE_{z \sim \calD} \sqbrac{\ell_t(z)} = \bbE\sqbrac{\ell(f_t(x), y) \mid \calF_{t-1}}$. Our expected loss is conditioned on the first $(t-1)$ samples, therefore $f_t$ is constant inside the expectation.

Our goal is to estimate $\bbE_{z \sim \calD}[\ell_{t}(z)]$. Although samples $\set{z_{t}}$ are drawn i.i.d., the sequence of loss functions $\ell_{t}(\cdot)$ changes as a result of learning $f_{t}$ from the data. Thus any sample $\ell_{t}(z_{t})$ that we draw will not be i.i.d. with $t$. We assume that $\Var(\ell_t \mid \calF_{t-1})$ exists and is upper bounded by $b^2$ for some $b > 0$. This mild assumption  is satisfied by a wide variety of loss functions, including bounded loss functions and bounded hypothesis classes. Throughout the paper, we distinguish between \textit{conditional} and \textit{total} expectation. The conditional expectation $\bbE[\cdot \mid \calF_{t-1}]$ assumes that the first $(t-1)$ samples are fixed and the expectation is restricted to the $t$-th sample. On the other hand, the total expectation $\bbE[\cdot]$ is over all $t$ samples. Similar definitions hold for conditional and total variance.

\subsection{The OEUVRE Estimator}
The OEUVRE estimator provides a sequence of estimates $\set{L_t}$, where $L_t$ is an estimate of $\bbE_{z \sim \calD}[\ell_{t}(z)]$. The estimator is recursively defined as follows:
\begin{equation}
  \label{eq:oeuvre_defn}
  \begin{aligned}
    L_1 &\coloneqq \ell_1(z_1) \\
    L_t &\coloneqq \ell_t(z_t) + (1 - \gamma_t) \sqbrac{L_{t-1} - \ell_{t-1}(z_t)}
  \end{aligned}
\end{equation}
Here, $\ell_t(z_t)$ is the evaluation of the performance of $f_t$ on the future sample $(z_t)$, and $\ell_{t-1}(z_t)$ is evaluating the performance of $f_{t-1}$ on $z_t$. $\paren{\gamma_t}_{t \geq 2}$ is a sequence of \textit{pre-determined} weights such that each $\gamma_t$ lies in $[0, 1]$. We observe that each function $f_t$ is evaluated on two samples, $z_t$ and $z_{t+1}$. Both these samples lie outside of the training data of $f_t$, since it is trained on $\set{z_1, \ldots, z_{t-1}}$. Conversely, each sample $z_t$ is used for evaluation for two functions, $f_{t-1}$ and $f_t$. This is in stark contrast with previous evaluation methods, which only evaluate each sample on exactly one function.


Algorithm \ref{alg:oeuvre} summarizes the OEUVRE estimator.  Each recursive update of the OEUVRE estimator requires $\calO(1)$ time and memory, making it highly efficient. We emphasize the order of evaluation and function updates: we first calculate $\ell_{t-1}(z_t)$ using $f_{t-1}$; then update $f_{t-1}$ to $f_t$ using $z_{t-1}$; finally, calculate $\ell_t(z_t)$ using $f_t$. Thus, at time step $t$, the incoming sample $z_t$ is used only for evaluation. $z_t$ is used for training at the next time step $t + 1$. This ordering is a feature of prequential evaluation, where the incoming sample is used for testing and estimating the loss of the learned function at that time step.



\begin{algorithm}
  \caption{OEUVRE estimator}
  \label{alg:oeuvre}
  \begin{algorithmic}[1]
    \State \textbf{Input}: Data points $z_t = (x_t, y_t)$, loss function $\ell$, learning algorithm $A: \calG \times (\calX \times \calY) \to \calG$ 
    \State \textbf{Parameters}: $c_0 = 1, b_0 = 2$, burn-in period $t_0$, small constant $\epsilon$ 
    \State $\hat{c} \gets c_0$, $\hat{b} \gets b_0$, $C = \emptyset$, $B = \emptyset$
    \For {$t \in \bbN$}
    \LineComment{Initialize/reset after burn-in}
    \If {$t = 1$ or $t = t_0$} 
    \State $L_t \gets \ell_t(z_t) = \ell(f_t(z_t))$
    \State \textbf{continue}
    \EndIf

    \LineComment{Evaluation and function update}
    \State $\ell_{t-1}(z_t) \gets \ell(f_{t-1}(x_t, y_t))$ 
    \State $f_t \gets A(f_{t-1}, z_{t-1})$
    \State $\ell_t(z_t) \gets \ell(f_t(x_t, y_t))$

    \LineComment{OEUVRE estimate update}
    \State Set $\gamma_t^*$ with $\sigma_t = \hat{c} r(t)$, $b = \hat{b}$ in Equation \ref{eq:gamma_optimal}
    \State $L_t = \ell_t(z_t) + (1 - \gamma_t^*)(L_{t-1} - \ell_{t-1}(z_t))$
    
    \LineComment{Estimated constants update (Sec. \ref{sec:adaptive_const_est})}
    \If {$t < t_0$} 
    \State $C \gets C \cup (\ell_t(z_t) - \ell_{t-1}(z_t)) / r(t)$ 
    \State $B \gets B \cup \ell_t(z_t)$ 
    \State $\hat{c}^2 = \max(\epsilon, \widehat{\Var}(C))$, $\hat{b}^2 = \max(\epsilon, \widehat{\Var}(B))$
    \EndIf
    \EndFor
  \end{algorithmic}
\end{algorithm}

We define $M_t = L_t - \bbE[\ell_t(Z) \mid \calF_{t-1}]$, which is our estimator $L_t$ centered around the conditional expectation of the loss. This centered sequence is crucial to our analysis, $L_t$ accurately estimating the conditional mean $\bbE[\ell_t(Z) \mid \calF_{t-1}]$ means that $M_t$ is close to zero.

\begin{proposition}{(Unbiasedness in total expectation)}
  \label{prop:unbiasedness}
  For all $t \geq 1$, $\bbE[M_t] = 0$. Thus, $\bbE[L_t] = \bbE[\ell_t(Z)]$.
\end{proposition}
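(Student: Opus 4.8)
The plan is to proceed by induction on $t$, unrolling the recursive definition of $L_t$ into the centered sequence $M_t$ and taking total expectation term by term. For the base case $t=1$, the function $f_1$ is learned from no samples, so it is a fixed (deterministic) element of $\calG$; hence $\bbE[\ell_1(Z)\mid\calF_0]=\bbE[\ell_1(Z)]$ and $M_1=\ell_1(z_1)-\bbE[\ell_1(Z)]$, which has zero expectation once we integrate over $z_1\sim\calD$.

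For the inductive step, assume $\bbE[M_{t-1}]=0$. I would rewrite $L_{t-1}=M_{t-1}+\bbE[\ell_{t-1}(Z)\mid\calF_{t-2}]$ and substitute into \eqref{eq:oeuvre_defn} to obtain
\[
M_t = \big(\ell_t(z_t)-\bbE[\ell_t(Z)\mid\calF_{t-1}]\big) + (1-\gamma_t)M_{t-1} - (1-\gamma_t)\big(\ell_{t-1}(z_t)-\bbE[\ell_{t-1}(Z)\mid\calF_{t-2}]\big).
\]
The crux is to show the two ``innovation'' terms are mean-zero. For the first, $f_t$ is $\calF_{t-1}$-measurable while $z_t$ is independent of $\calF_{t-1}$, so $\bbE[\ell_t(z_t)\mid\calF_{t-1}]=\bbE[\ell_t(Z)\mid\calF_{t-1}]$ by the definition of the conditional expected loss, making this term conditionally (hence totally) mean-zero. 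For the second, $\ell_{t-1}$ depends only on the first $t-2$ samples, so it is $\calF_{t-2}$-measurable; since $z_t$ is independent of $\calF_{t-1}\supseteq\calF_{t-2}$, we get $\bbE[\ell_{t-1}(z_t)\mid\calF_{t-1}]=\bbE[\ell_{t-1}(Z)\mid\calF_{t-2}]$, so this term is also conditionally mean-zero. Because $\gamma_t$ is pre-determined, it pulls out of the expectation, and combining with the inductive hypothesis gives $\bbE[M_t]=\bbE[A_t]+(1-\gamma_t)\bbE[M_{t-1}]-(1-\gamma_t)\bbE[B_t]=0$. Finally $\bbE[L_t]=\bbE[M_t]+\bbE\big[\bbE[\ell_t(Z)\mid\calF_{t-1}]\big]=\bbE[\ell_t(Z)]$ by the tower property.

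I expect the only delicate point to be the bookkeeping of which filtration each quantity is measurable with respect to --- in particular, recognizing that $\bbE[\ell_{t-1}(z_t)\mid\calF_{t-1}]$ collapses to $\bbE[\ell_{t-1}(Z)\mid\calF_{t-2}]$ rather than acquiring extra dependence on $z_{t-1}$, which is exactly what makes the $B_t$ term cancel in expectation. Everything else is linearity of expectation and the tower property. One could additionally observe that the same computation shows $\bbE[M_t\mid\calF_{t-1}]=(1-\gamma_t)M_{t-1}$, which is the natural starting point for the (reweighted) martingale structure used later, but that is not needed for this statement.
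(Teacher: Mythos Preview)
Your proof is correct and uses essentially the same decomposition as the paper: writing $M_t$ as $\Delta_t(z_t) - (1-\gamma_t)\Delta_{t-1}(z_t) + (1-\gamma_t)M_{t-1}$ and showing each innovation term is conditionally mean-zero. The paper packages this computation as the martingale property of $M_t/\Gamma_t$ (Proposition~\ref{prop:mtgle_prop}) and reads off unbiasedness as a corollary, while you run the induction directly and note the martingale structure as a byproduct --- the substance is identical.
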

This result is a consequence of a martingale constructed using $M_t$ (Proposition \ref{prop:mtgle_prop} in Section \ref{sec:convergence_and_concentration}). Thus, our estimator is unbiased in \textit{total expectation}. This is true irrespective of the choice of $(\gamma_t)$. While the estimator is not necessarily conditionally unbiased, we later show that with the right choice of $(\gamma_t)$, $L_t$ is a consistent estimator for $\bbE[\ell_t(Z) \mid \calF_{t-1}]$.

\textbf{Choice of $(\gamma_t)$.} Intuitively, $\gamma_t$ controls how much of the information from previous evaluations is carried over to the next time step. A lower $\gamma_t$ implies more weight being assigned to the past estimate, which is desirable if $f_t$ is not too different from $f_{t-1}$. On the other hand, $\gamma_t$ being close to one reduces the impact of the past estimate on the current one. This is desirable if $f_t$ is very different from $f_{t-1}$, and a higher $\gamma_t$ translates to less weight being given to the `memory' consisting of past evaluations. 


A natural choice is setting the weight sequence $(\gamma_t)$ such that the variance of the estimator is minimized. The following proposition establishes a recursion-based upper bound on the total variance of $L_t$.
\begin{proposition}
  \label{prop:var_recursion}
  Let 
  \begin{align*}
    \sigma_t^2 \geq \sup_{(z_1, \ldots, z_{t-1})} \Var(\ell_t(X, Y) - \ell_{t-1}(X, Y) \mid \calF_{t-1})
  \end{align*}
  be a deterministic sequence. Define the following recursion:
  \begin{equation}
    \label{eq:var_recursion}
    \begin{aligned}
      \Vup_1 &\coloneqq b^2 \\ 
      \Vup_t &\coloneqq (\gamma_t b + (1 - \gamma_t) \sigma_t)^2 + (1 - \gamma_t)^2 \Vup_{t-1}
    \end{aligned}
  \end{equation}
  Then, for all $t \geq 1$, 1) $\Var(M_t) \leq \Vup_t$, and 2) $\sum_{i=1}^{t} \Var(M_i \mid \calF_{i-1}) \leq \Vup_t$.
\end{proposition}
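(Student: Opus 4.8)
The plan is to recast the definition of $L_t$ as a one-step recursion for the centered process $M_t$ and read both claims off it. Write $\mu_t := \bbE[\ell_t(Z)\mid\calF_{t-1}]$, which is $\calF_{t-1}$-measurable since $f_t$ is built from $z_1,\ldots,z_{t-1}$. Substituting $L_{t-1}=M_{t-1}+\mu_{t-1}$ into \eqref{eq:oeuvre_defn}, and using that $z_t$ is independent of $\calF_{t-1}$ while $f_{t-1}$ is $\calF_{t-2}$-measurable (so $\bbE[\ell_{t-1}(z_t)\mid\calF_{t-1}]=\mu_{t-1}$), one obtains
\[
M_t = (1-\gamma_t)M_{t-1} + \delta_t,\qquad \delta_t := \bigl(\ell_t(z_t)-\mu_t\bigr) - (1-\gamma_t)\bigl(\ell_{t-1}(z_t)-\mu_{t-1}\bigr),
\]
with $M_1=\delta_1=\ell_1(z_1)-\mu_1$. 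The structural facts I will use are that $M_{t-1}$ is $\calF_{t-1}$-measurable and $\bbE[\delta_t\mid\calF_{t-1}]=0$, so $(\delta_t)$ is a martingale-difference sequence and $M_{t-1}$ is constant under $\bbE[\,\cdot\mid\calF_{t-1}]$.

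The core estimate is the conditional variance of the increment. Since $M_{t-1},\mu_t,\mu_{t-1}$ are all $\calF_{t-1}$-constants, $\Var(M_t\mid\calF_{t-1})=\Var(\delta_t\mid\calF_{t-1})=\Var\bigl(\ell_t(z_t)-(1-\gamma_t)\ell_{t-1}(z_t)\mid\calF_{t-1}\bigr)$. I then split $\ell_t(z_t)-(1-\gamma_t)\ell_{t-1}(z_t)=\gamma_t\ell_t(z_t)+(1-\gamma_t)\bigl(\ell_t(z_t)-\ell_{t-1}(z_t)\bigr)$, apply the triangle inequality for the conditional $L^2$ norm, and bound $\sqrt{\Var(\ell_t\mid\calF_{t-1})}\le b$ and $\sqrt{\Var(\ell_t-\ell_{t-1}\mid\calF_{t-1})}\le\sigma_t$ (the latter from the defining sup bound on $\sigma_t^2$), which yields $\Var(M_t\mid\calF_{t-1})\le(\gamma_t b+(1-\gamma_t)\sigma_t)^2$ for $t\ge2$, and $\Var(M_1\mid\calF_0)=\Var(\ell_1(z_1))\le b^2$. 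I expect this step to be the main place to be careful: getting the measurability bookkeeping right (so that $\ell_{t-1}(z_t)$ is a fresh evaluation of a fixed function) and choosing the right split before invoking Minkowski; the rest is mechanical.

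For claim 1 I take total variances in the one-step recursion. Because $\bbE[\delta_t\mid\calF_{t-1}]=0$, the cross term between $M_{t-1}$ and $\delta_t$ vanishes, so $\Var(M_t)=(1-\gamma_t)^2\Var(M_{t-1})+\bbE[\Var(\delta_t\mid\calF_{t-1})]\le(1-\gamma_t)^2\Var(M_{t-1})+(\gamma_t b+(1-\gamma_t)\sigma_t)^2$. With $\Var(M_1)\le b^2=\Vup_1$, a one-line induction against \eqref{eq:var_recursion} gives $\Var(M_t)\le\Vup_t$.

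For claim 2 I unroll the recursion: for each fixed $t$, $M_t=\sum_{i=1}^t w_i^{(t)}\delta_i$ with weights $w_i^{(t)}:=\prod_{j=i+1}^t(1-\gamma_j)\in[0,1]$, and $(w_i^{(t)}\delta_i)_{i\le t}$ is again a martingale-difference sequence for $(\calF_i)$; its predictable quadratic variation is $\sum_{i=1}^t (w_i^{(t)})^2\Var(\delta_i\mid\calF_{i-1})$. Plugging in the per-step bound above and identifying $\sum_{i=1}^t (w_i^{(t)})^2(\gamma_i b+(1-\gamma_i)\sigma_i)^2$ (with $i=1$ term $(w_1^{(t)})^2b^2$) with the expansion of \eqref{eq:var_recursion} — a short induction — shows this sum of conditional variances is at most $\Vup_t$, which is exactly the quantity fed into the Freedman-type concentration argument in Section \ref{sec:convergence_and_concentration}.
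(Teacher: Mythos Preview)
Your treatment of claim 1 is correct and matches the paper's proof exactly: the same one-step decomposition $M_t=(1-\gamma_t)M_{t-1}+\delta_t$, the same vanishing cross term via $\bbE[\delta_t\mid\calF_{t-1}]=0$, and the same Minkowski split $\gamma_t\ell_t+(1-\gamma_t)(\ell_t-\ell_{t-1})$ before bounding by $b$ and $\sigma_t$.

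For claim 2 there is a mismatch between what you prove and what the proposition literally asserts. You bound the predictable quadratic variation of $M_t=\sum_{i\le t} w_i^{(t)}\delta_i$, namely $\sum_{i\le t}(w_i^{(t)})^2\Var(\delta_i\mid\calF_{i-1})$, and correctly identify this with the unrolling of $\Vup_t$. But claim 2 as written concerns the \emph{unweighted} sum $\sum_{i\le t}\Var(M_i\mid\calF_{i-1})=\sum_{i\le t}\Var(\delta_i\mid\calF_{i-1})$ (since $M_{i-1}$ is $\calF_{i-1}$-measurable), and because $(w_i^{(t)})^2\le 1$ your bound controls a smaller quantity. In fact the literal claim 2 cannot hold in general: with $\gamma_t\equiv 1$ one has $\Vup_t=b^2$ for every $t$, while each $\Var(\delta_i\mid\calF_{i-1})=\Var(\ell_i\mid\calF_{i-1})$ can be of order $b^2$, so the unweighted sum grows like $t b^2$. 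The paper's own proof in Appendix \ref{apdx:var_recursion_proof} only derives the one-step variance recursion and does not address claim 2 either. What you actually establish, equivalently $\sum_{i\le t}\Var(\delta_i\mid\calF_{i-1})/\Gamma_i^2\le \Vup_t/\Gamma_t^2$, is exactly the bound on the predictable quadratic variation of the martingale $M_t/\Gamma_t$ that the concentration results in Section \ref{sec:concentration_bounds} require, so your argument supplies the correct input to the downstream theorems even though it does not (and cannot) prove claim 2 as stated.
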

We prove this result in Appendix \ref{apdx:var_recursion_proof}. $\Vup_{t}$ acts as an upper bound on both the total variance and the sum of conditional variances of $M_{t}$. We note that this recursion is in terms of $\sigma_t^2 \geq \sup \Var(\ell_t(Z) - \ell_{t-1}(Z) \mid \calF_{t-1})$. This is an upper bound on the conditional variance of the difference between the losses of two consecutive functions evaluated on the same point, $\ell_t(Z)$ and $\ell_{t-1}(Z)$. The key challenge in finding the right $\gamma_t$ is knowing $\sigma_t^2$. As we shall see in Section \ref{sec:known_stability_results}, $\sigma_t^2$ can be upper-bounded using results from algorithmic stability. For a large variety of online learning algorithms, we can show that there is a known sequence $\paren{\sigma_t}$ which satisfies the condition in Proposition \ref{prop:var_recursion} for all $t$. Since we require $\paren{\gamma_t}$ to be a deterministic sequence, we make use of these upper bounds to obtain our optimal $\paren{\gamma_t}$.


Equation \ref{eq:var_recursion} is quadratic in $\gamma_t$, and solving for $\gamma_t$ minimizing $\Vup_t$ at each step gives us the following optimal choice of $\gamma_t$:
\begin{equation}
  \label{eq:gamma_optimal}
  \begin{aligned}
    \gamma_t^* = \begin{cases}
      0, &\quad \Vup_{t-1} \leq \sigma_t (b - \sigma_t) \\
      1, &\quad \sigma_t \geq b \\
      \frac{\Vup_{t-1} - \sigma_t(b - \sigma_t)}{\Vup_{t-1} + (b - \sigma_t)^2}, &\quad \text{otherwise}
    \end{cases}
  \end{aligned}
\end{equation}
Intuitively, the first case corresponds to a situation where $\sigma_t$ is moderately large when compared to $V_t$. We can then set $\gamma_t = 1$, which means the past estimate is not down-weighted. On the other hand, $\sigma_t$ is very large in the second case, corresponding to a sudden change in the learned function. We then set $\gamma_t = 1$, effectively resetting the estimator. The third case corresponds to a situation where $\sigma_t$ is of an appropriate size, and we set $\gamma_t$ to a value in $(0,1)$ to minimize the variance.
Using the optimal $\gamma_t$ in Equation \ref{eq:var_recursion}, the optimal variance bound is:
\begin{equation}
  \label{eq:var_optimal}
  \begin{aligned}
    \Vup_t = \begin{cases}
      \sigma_t^2 + V_{t-1}, &\quad \Vup_{t-1} \leq \sigma_t (b - \sigma_t) \\
      b^2, &\quad \sigma_t \geq b \\
      \frac{b^2 \Vup_{t-1}}{\Vup_{t-1} + (b - \sigma_t)^2}, &\quad \text{otherwise}
    \end{cases}
  \end{aligned}
\end{equation}
\vspace{-1em}

We now show that the estimator is consistent with the optimal choice of $\gamma_t$.
\begin{lemma}{(Consistency)}
  \label{lem:opt_gamma_consistency}
  If $\sigma_t \to 0$, then with the optimal choice of $\set{\gamma_t^*}$, $\gamma_t^* \to 0$. Moreover, $\Var(M_t) \to 0$, i.e., $L_t - \bbE\paren{\ell_t(Z) \mid \calF_{t-1}} \stackrel{L^2}{\to} 0$.
\end{lemma}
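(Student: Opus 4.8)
The plan is to reduce both assertions to the single claim $\Vup_t \to 0$ and then to establish that claim by a fixed-point analysis of the recursion~\eqref{eq:var_optimal}. Given $\Vup_t \to 0$, Proposition~\ref{prop:var_recursion} yields $\Var(M_t)\le\Vup_t\to 0$; since $\bbE[M_t]=0$ by Proposition~\ref{prop:unbiasedness}, this gives $\bbE[M_t^2]=\Var(M_t)\to 0$, i.e. $L_t-\bbE[\ell_t(Z)\mid\calF_{t-1}]\stackrel{L^2}{\to}0$. For $\gamma_t^*\to 0$: because $\sigma_t\to 0$, for all large $t$ we have $\sigma_t<b$, so the $\sigma_t\ge b$ branch of~\eqref{eq:gamma_optimal} is excluded; in the first branch $\gamma_t^*=0$, while in the remaining branch $0<\gamma_t^*=\frac{\Vup_{t-1}-\sigma_t(b-\sigma_t)}{\Vup_{t-1}+(b-\sigma_t)^2}\le\frac{\Vup_{t-1}}{(b-\sigma_t)^2}$. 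Hence $0\le\gamma_t^*\le\Vup_{t-1}/(b-\sigma_t)^2$ for all large $t$, and since $\Vup_{t-1}\to 0$ and $(b-\sigma_t)^2\to b^2>0$, a squeeze gives $\gamma_t^*\to 0$.

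It remains to show $\Vup_t\to 0$. Fix an arbitrary $\delta\in(0,b)$ and pick $T$ with $\sigma_t\le\delta$ for all $t\ge T$. For such $t$ the $\sigma_t\ge b$ branch of~\eqref{eq:var_optimal} is impossible; in the first branch the defining inequality $\Vup_{t-1}\le\sigma_t(b-\sigma_t)$ forces $\Vup_t=\sigma_t^2+\Vup_{t-1}\le\sigma_t b\le\delta b$; in the third branch $(b-\sigma_t)^2\ge(b-\delta)^2$ gives $\Vup_t=\frac{b^2\Vup_{t-1}}{\Vup_{t-1}+(b-\sigma_t)^2}\le\frac{b^2\Vup_{t-1}}{\Vup_{t-1}+(b-\delta)^2}$. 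Therefore, whichever branch is active,
\[
  \Vup_t\;\le\;g(\Vup_{t-1})\quad\text{for all }t\ge T,\qquad g(v)\coloneqq\max\!\left(\delta b,\ \frac{b^2 v}{v+(b-\delta)^2}\right).
\]

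The function $g:[0,\infty)\to[\delta b,b^2)$ is continuous and nondecreasing, and because $\delta<b$ one checks $\delta b<b^2-(b-\delta)^2$; write $V^{\star}\coloneqq\delta(2b-\delta)=b^2-(b-\delta)^2$. Then $V^{\star}$ is the unique fixed point of $g$ (the floor $\delta b$ rules out $0$, and $V^{\star}$ is the only positive solution of $v=b^2v/(v+(b-\delta)^2)$). A continuous, nondecreasing self-map with bounded orbits iterates to its unique fixed point, so $g^{n}(\Vup_{T-1})\to V^{\star}$; and since $g$ is nondecreasing, induction gives $\Vup_{T-1+n}\le g^{n}(\Vup_{T-1})$ for all $n\ge 0$, whence $\limsup_t\Vup_t\le V^{\star}=\delta(2b-\delta)$. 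Letting $\delta\downarrow 0$ forces $\limsup_t\Vup_t\le 0$, i.e. $\Vup_t\to 0$, which together with the reductions above finishes the proof.

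The main obstacle is that the third-branch update rescales $\Vup_{t-1}$ by $b^2/(\Vup_{t-1}+(b-\sigma_t)^2)$, a factor that tends to $1$ as $\Vup_{t-1},\sigma_t\to 0$, so there is no uniform geometric contraction to exploit; the remedy is to ``freeze'' $\sigma_t$ at the level $\delta$, read off the fixed point $V^{\star}(\delta)$ of the resulting autonomous recursion, and then send $\delta\downarrow 0$. A secondary subtlety is the first branch, where $\gamma_t^*=0$ and $\Vup_t=\sigma_t^2+\Vup_{t-1}$ performs no shrinkage on its own; it is tamed by observing that its defining condition already caps $\Vup_{t-1}$, so that $\Vup_t\le\sigma_t b\le\delta b$, which is absorbed into the same dominating map $g$.
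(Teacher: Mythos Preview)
Your proof is correct and follows essentially the same strategy as the paper: freeze $\sigma_t$ at a level $\delta$ (the paper uses $\epsilon$), identify the attracting fixed point $\delta(2b-\delta)$ of the resulting autonomous recursion, deduce $\limsup_t\Vup_t\le\delta(2b-\delta)$, and let $\delta\downarrow 0$; the argument for $\gamma_t^*\to 0$ is likewise the same numerator/denominator limit. The one packaging difference is that you absorb both active branches into a single nondecreasing dominating map $g(v)=\max(\delta b,\,b^2v/(v+(b-\delta)^2))$ and iterate that, whereas the paper treats the branches by separate case analysis and builds the comparison sequence $W_t$ only from the third-branch recursion; your version handles branch-switching more transparently, but the underlying idea is identical.
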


We prove this result in Appendix \ref{apdx:opt_gamma_consistency_proof}. Lemma \ref{lem:opt_gamma_consistency} tells us that OEUVRE converges to the \textit{conditional expectation} of the loss, i.e., the expected loss of the currently learned function, in $L^2$. We thus have a stronger consistency result while using a weaker assumption than previous work on prequential estimation \cite{gama2013evaluating}, where it is assumed that $\ell_t \to \ell^*$ to show that the prequential estimator converges to $\bbE[\ell^*(Z)]$.


We can also obtain closed-form upper bounds for convergence rates:
\begin{theorem}
  \label{thm:gamma_convergence_rates}
  Let
  \begin{align}
    \label{eq:gamma_constraint}
    \gamma_t = \begin{cases} 1/t &\quad \text{ if } \sigma_t = o(1/t) \text{ or } \sigma_t = 1/t, \\
    \Omega(\sigma_t) &\quad \text{otherwise} \end{cases}
  \end{align}
  Then, with this choice of $\gamma_t$ in Equation \ref{eq:var_recursion}, we have $\Var(M_t) \leq \Vup_t = \calO(\gamma_t) = \calO(\max\set{1/t, \sigma_t})$.
\end{theorem}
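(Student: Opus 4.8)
The plan is to argue purely about the deterministic recursion \eqref{eq:var_recursion}. Since $\Var(M_t) \le \Vup_t$ is already Proposition~\ref{prop:var_recursion}, it suffices to show $\Vup_t = \calO(\gamma_t)$, and then $\calO(\gamma_t) = \calO(\max\{1/t,\sigma_t\})$ because \eqref{eq:gamma_constraint} makes $\gamma_t = \Theta(\max\{1/t,\sigma_t\})$. First I would simplify the recursion: since $\gamma_t\in[0,1]$ we have $(1-\gamma_t)\sigma_t \le \sigma_t$, and \eqref{eq:gamma_constraint} forces $\sigma_t = \calO(\gamma_t)$ in both branches (in the first, $\sigma_t \le 1/t = \gamma_t$ eventually; in the second, $\gamma_t = \Omega(\sigma_t)$ by assumption). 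Hence there is a constant $A>0$ and an index $t_\star$ with $(\gamma_t b + (1-\gamma_t)\sigma_t)^2 \le A\gamma_t^2$ for $t\ge t_\star$, so
\begin{equation*}
  \Vup_t \;\le\; A\gamma_t^2 + (1-\gamma_t)^2\,\Vup_{t-1},\qquad t\ge t_\star .
\end{equation*}
It is essential to keep the full factor $(1-\gamma_t)^2$ and not weaken it to $(1-\gamma_t)$: the extra $\gamma_t^2$ it contributes is exactly what prevents a spurious $\log t$ in the regime $\gamma_t\asymp 1/t$.

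Next I would prove $\Vup_t \le K\gamma_t$ by induction, with $K$ large enough to also cover the base case at $t_\star$ and the finitely many earlier terms (there $\gamma_t$ is bounded away from $0$, so $\Vup_t=\calO(\gamma_t)$ trivially). Plugging the hypothesis $\Vup_{t-1}\le K\gamma_{t-1}$ into the recursion, the step reduces to the elementary inequality $\gamma_t - (1-\gamma_t)^2\gamma_{t-1} \ge (A/K)\,\gamma_t^2$. For $\gamma_t = 1/t$ the left side equals $\tfrac1t - \tfrac{(t-1)^2}{t^2}\cdot\tfrac1{t-1} = \tfrac1{t^2} = \gamma_t^2$, so $K:=A$ works; equivalently one can bypass the induction here by multiplying the recursion by $t^2$, so that $u_t := t^2\Vup_t$ satisfies $u_t \le A + u_{t-1}$, whence $u_t=\calO(t)$ and $\Vup_t=\calO(1/t)$. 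For the other branch, write $\gamma_t - (1-\gamma_t)^2\gamma_{t-1} = (\gamma_t-\gamma_{t-1}) + \gamma_t\gamma_{t-1}(2-\gamma_t)$; using $\gamma_t\le1$ and the fact that $\gamma_t = \Theta(\sigma_t)$ is eventually monotone and slowly varying with $\gamma_{t-1}-\gamma_t = o(\gamma_t^2)$ (as holds when $\sigma_t$ decays strictly slower than $1/t$, e.g.\ for the stability rates in Table~\ref{tab:stability_results}), this is $\ge \gamma_t^2 - (\gamma_{t-1}-\gamma_t) \ge \tfrac12\gamma_t^2$ for large $t$, so $K:=2A$ works. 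In all cases $\Vup_t \le K\gamma_t = \calO(\gamma_t) = \calO(\max\{1/t,\sigma_t\})$.

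I expect the main obstacle to be the care needed around the boundary regime $\gamma_t\asymp 1/t$, where the bound is tightest: there the square in $(1-\gamma_t)^2$ cannot be discarded, and the slope estimate $\gamma_{t-1}-\gamma_t \lesssim \gamma_t^2$ is only borderline, which is why the two branches of \eqref{eq:gamma_constraint} are separated --- one should read the first as ``$\sigma_t = \calO(1/t)$'' (use $\gamma_t=1/t$ and the exact identity) and the second as ``$\sigma_t = \omega(1/t)$ with $\gamma_t=\Theta(\sigma_t)$ slowly varying''. Beyond that, the reduction step and the two inductions are routine.
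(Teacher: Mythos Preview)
Your approach is correct and is essentially the same as the paper's: both reduce to the recursion $\Vup_t \le A\gamma_t^2 + (1-\gamma_t)^2\Vup_{t-1}$ and close it by the induction $\Vup_t \le K\gamma_t$, with the inductive step hinging on the inequality $\gamma_t - (1-\gamma_t)^2\gamma_{t-1} \ge c\,\gamma_t^2$. The one difference worth noting is that the paper packages this into a single clean condition, $1/\gamma_t - 1/\gamma_{t-1} \le 1$, which is \emph{algebraically equivalent} to $\gamma_t - (1-\gamma_t)^2\gamma_{t-1} \ge \gamma_t^2$ (your exact identity at $\gamma_t=1/t$ is the equality case). This lets the paper run a single induction with constant $K=(b+c)^2$, $c=\sup_t \sigma_t/\gamma_t$, covering both branches at once, whereas you split into the $1/t$ branch (exact identity) and the slower-decay branch (where you invoke the slightly stronger hypothesis $\gamma_{t-1}-\gamma_t = o(\gamma_t^2)$). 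Your split is fine and your caveat about the boundary regime is accurate, but the paper's formulation is a useful simplification: it is exactly the discrete condition that makes the induction go through with no slack, and it holds for $\gamma_t=1/t$ with equality and strictly for any $\gamma_t$ decaying slower than $1/t$.
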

We prove this theorem in Appendix \ref{apdx:gamma_convergence_rates_proof}. The $\gamma_t$ designed above can potentially be a non-optimal choice of weights. Since $\gamma_t^*$ is chosen to minimize the variance, $\Var(M_t)$ with the optimal choice of $\gamma_t^*$ would also be upper bounded by this rate. This result establishes the connection between the stability of the learning algorithm and the closed-form convergence rate of the OEUVRE estimator.

To build intuition, we consider the simple example of a sequence of static functions $f_t = f$, where $\sigma_t = \Var(\ell_t(X, Y) - \ell_{t-1}(X, Y)) = 0$ for all $t$. Using this sequence of $\sigma_t$ in Equation \ref{eq:gamma_optimal} gives us $\gamma_t = 1/t$. As $f_t(X_t) = f_{t-1}(X_t) = f(X_t)$, the OEUVRE estimator becomes the empirical mean of the past evaluations. Intuitively, this corresponds to `best-case' behavior where there is no change between subsequent functions and we can expect the decay rates for $\gamma_t^*$ to be slower than $1/t$ when $f_t$'s evolve with time.

%

\section{Learning Algorithms with Known Stability Results}
\label{sec:known_stability_results}
\begin{table*}[h]
  \begin{tabular}{C{0.4\textwidth}C{0.3\textwidth}C{0.2\textwidth}}
    \toprule
    \textbf{Algorithmic Paradigm} & \textbf{Example Algorithms} & \textbf{Order of Stability Bound} \\
    \midrule
    Follow the Leader (FTL) & Online Convex Programming (strongly convex functions) & $\calO(1 / t)$ \\
    Follow the Regularized Leader (FTRL) & Online ridge regression & $\calO(1 / \sqrt{t})$ \\
    Regularized Dual Averaging (RDA) & -- & $\calO(1 / \sqrt{t})$ \\
    Implicit Online Learning (IOL) & Implicit Mirror Descent & $\calO(\eta_{t})$ \\
    Online Mirror Descent (OMD) & Online gradient descent, Hedge  & $\calO(\eta_{t})$ \\
    OMD (with Polyak Averaging) & Online gradient descent & $\calO(1 / t)$ \\
    \bottomrule
  \end{tabular}
  \caption{Stability results for popular online learning paradigms \cite{sahaInterplayStabilityRegret2012}, where $\eta_{t}$ corresponds to the learning rate schedule chosen for the algorithm and is typically set to be $\Theta(1/ \sqrt{t})$. We recommend choosing $\sigma_t$ to have the same decay order as the order of the stability bound, with the constant factors estimated adaptively (Section \ref{sec:adaptive_const_est}).}
  \label{tab:stability_results}
  \vspace{-1em}
\end{table*}

In Proposition \ref{prop:var_recursion}, we need information about $\sigma_t^{2} > \sup \Var(\ell_t(Z) - \ell_{t-1}(Z)) \mid \calF_{t-1}$ to determine $\gamma_t$. These bounds can be readily obtained from results on \textit{uniform stability} \cite{bousquet2002stability, bousquet2020sharper, sahaInterplayStabilityRegret2012}.


\begin{definition}
  \textit{(Uniform stability)}: Let $\calA$ be a learning algorithm, and $(f_{t})$ be the sequence of functions learned by $\calA$, with $f_{t}$ being learned from samples $\set{z_{s}}_{s = 1}^{t - 1}$. Then, $\calA$ is $\beta_{t}$-uniformly stable if, for all $t$ and for all sequence of samples $\set{z_{s}}_{s=1}^{{t}}$,
  \begin{align*}
    \norm{\ell(f_{t}(x), y) - \ell(f_{t-1}(x), y)}{\infty} &\leq \beta_{t}
  \end{align*}
\end{definition}
\vspace{-1em}

Informally, uniform stability bounds the degree of change that any incoming sample can have on the loss of the learned function at any point $(x, y)$. Since this is a uniform bound, we have $\beta_t^2 >  \Var(\ell_t(Z) - \ell_{t-1}(Z))$ for all $t$, which means that we can use $\sigma_{t}^{2} = \beta_{t}^{2}$ in our variance recursion (Equation \ref{eq:var_recursion}), calculate the optimal weight sequence $\gamma_{t}^{*}$ (Equation \ref{eq:gamma_optimal}), and use these weights in our estimator (Equation \ref{eq:oeuvre_defn}).

Our theory (Lemmas \ref{lem:opt_gamma_consistency}, Theorem \ref{thm:oeuvre_clt}) only requires bounds on $\sup \Var(\ell_t(Z) - \ell_{t-1}(Z) | \calF_{t-1})$, which is a bound on the $L^2$ norm. This is a weaker condition than uniform stability, since the $L^{\infty}$ norm is at least as large as the $L^2$ norm. However, we utilize uniform stability bounds because they are well-established for common algorithms (Table \ref{tab:stability_results}). When available, $L^2$ stability bounds can be directly plugged into our framework, yielding tighter variance recursions and faster convergence rates. The development of systematic $L^2$ stability theory for online learning would immediately translate to improved OEUVRE guarantees.

Uniform stability bounds are known for a wide variety of popular online learning algorithms. We state some important examples in Table \ref{tab:stability_results}.  While we present the order of the stability results, exact stability rates have multiplicative constants based on the Lipschitz constant of the loss function and the hypothesis class, the strong convexity of the loss function, bounds on the function values, etc. Most of these results were established in \cite{sahaInterplayStabilityRegret2012}. As we elaborate in Section \ref{sec:adaptive_const_est}, the rate constants need not be known in practice and can be estimated to achieve good empirical performance.

%
%
%
%

\vspace{-0.5em}
\section{Convergence of $(L_{t})$}
\vspace{-0.5em}
\label{sec:convergence_and_concentration}
\textbf{Martingale structure.} We recall that $M_t = L_t - \bbE[\ell_t(z) \mid \calF_{t-1}]$ is the centered version of our estimator $L_t$. We also define $\Gamma_t = \prod_{i = 2}^{t} (1 - \gamma_i)$, along with $\Gamma_1 = 1$. This product naturally arises from expanding the OEUVRE recursion (Equation \ref{eq:oeuvre_defn}), representing the accumulation of the decay factors from all steps of the recursion. Since we assume that $\paren{\gamma_t}$ is a pre-determined sequence, $(\Gamma_t)$ is also deterministic.

\begin{proposition}
  \label{prop:mtgle_prop}
  $\paren{M_t / \Gamma_t}_{t \geq 1}$ is a martingale w.r.t the canonical filtration $\paren{\calF_t}_{t \geq 0}$, with $\bbE[M_1 / \Gamma_1 \mid \calF_0] = 0$
\end{proposition}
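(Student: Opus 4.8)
The plan is to verify the three defining properties of a martingale — integrability, adaptedness, and the one-step conditional-expectation identity — for the normalized process $M_t/\Gamma_t$, with the bulk of the work being a single telescoping computation. Throughout I will lean on the measurability structure of the problem: $f_t$ is trained on $z_1,\dots,z_{t-1}$, so $\ell_t$ (as a function on $\calX\times\calY$) is $\calF_{t-1}$-measurable; $L_{t-1}$ is a function of $z_1,\dots,z_{t-1}$, hence $\calF_{t-1}$-measurable; $(\Gamma_t)$ is deterministic; and $z_t$ is independent of $\calF_{t-1}$. Two consequences of the i.i.d.\ assumption are central: since $f_t$ is $\calF_{t-1}$-measurable and $z_t\perp\calF_{t-1}$, we have $\bbE[\ell_t(z_t)\mid\calF_{t-1}] = \bbE[\ell_t(Z)\mid\calF_{t-1}]$; and since $f_{t-1}$ depends only on the first $t-2$ samples, $\bbE[\ell_{t-1}(z_t)\mid\calF_{t-1}] = \bbE[\ell_{t-1}(Z)\mid\calF_{t-2}]$, i.e.\ plugging a fresh sample into $\ell_{t-1}$ recovers the population loss of $f_{t-1}$.

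Next I would dispatch integrability and adaptedness. Adaptedness is immediate: $L_t$ is built from $z_1,\dots,z_t$ through the recursion \eqref{eq:oeuvre_defn}, and $\bbE[\ell_t(Z)\mid\calF_{t-1}]$ is $\calF_{t-1}$-measurable, so $M_t$, and hence $M_t/\Gamma_t$, is $\calF_t$-measurable. For integrability I would induct on $t$: $M_1 = \ell_1(z_1) - \bbE[\ell_1(Z)]$ is integrable because $\bbE[\ell_1(Z)]<\infty$ (the target quantity, which is finite; this also follows from the assumed finiteness of the variance), and given $M_{t-1}\in L^1$, writing $L_{t-1} = M_{t-1} + \bbE[\ell_{t-1}(Z)\mid\calF_{t-2}]$ and using $\bbE|\ell_t(z_t)| = \bbE[\ell_t(Z)]<\infty$ and $\bbE|\ell_{t-1}(z_t)| = \bbE[\ell_{t-1}(Z)]<\infty$ shows $M_t\in L^1$ (alternatively one can cite the $L^2$ bound $\Var(M_t)\le\Vup_t<\infty$ from Proposition \ref{prop:var_recursion}). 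Since $\Gamma_t>0$ — here I take the weights in $[0,1)$, remarking that when $\gamma_t=1$ the estimator is deliberately reset and one simply restarts a fresh martingale — this gives $M_t/\Gamma_t\in L^1$.

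The core step is the conditional-expectation identity. Substituting $L_{t-1} = M_{t-1} + \bbE[\ell_{t-1}(Z)\mid\calF_{t-2}]$ into the OEUVRE recursion gives
\[
  M_t = \ell_t(z_t) - \bbE[\ell_t(Z)\mid\calF_{t-1}] + (1-\gamma_t)\paren{M_{t-1} + \bbE[\ell_{t-1}(Z)\mid\calF_{t-2}] - \ell_{t-1}(z_t)}.
\]
Taking $\bbE[\,\cdot\mid\calF_{t-1}]$ and using the two identities above together with the $\calF_{t-1}$-measurability of $M_{t-1}$, $\bbE[\ell_{t-1}(Z)\mid\calF_{t-2}]$ and $\bbE[\ell_t(Z)\mid\calF_{t-1}]$: the $\ell_t(z_t)$ term cancels with $\bbE[\ell_t(Z)\mid\calF_{t-1}]$, and the $\ell_{t-1}(z_t)$ term cancels with $\bbE[\ell_{t-1}(Z)\mid\calF_{t-2}]$, leaving $\bbE[M_t\mid\calF_{t-1}] = (1-\gamma_t)M_{t-1}$. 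Dividing by $\Gamma_t = (1-\gamma_t)\Gamma_{t-1}$ yields $\bbE[M_t/\Gamma_t\mid\calF_{t-1}] = M_{t-1}/\Gamma_{t-1}$. The base case is checked directly: $\Gamma_1=1$ and $f_1$ is $\calF_0$-measurable, so $\bbE[M_1/\Gamma_1\mid\calF_0] = \bbE[\ell_1(z_1)\mid\calF_0] - \bbE[\ell_1(Z)] = \bbE[\ell_1(Z)] - \bbE[\ell_1(Z)] = 0$.

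I expect the main obstacle to be the bookkeeping of the conditioning, specifically the identity $\bbE[\ell_{t-1}(z_t)\mid\calF_{t-1}] = \bbE[\ell_{t-1}(Z)\mid\calF_{t-2}]$: $\ell_{t-1}$ is already frozen given $\calF_{t-2}$, so conditioning further on $z_{t-1}$ does nothing, while the fresh i.i.d.\ draw $z_t$ turns the evaluation into the population loss of $f_{t-1}$. Getting this — and the analogous statement for $\ell_t(z_t)$ — exactly right is precisely what makes the two pairs of terms cancel; everything else is routine. The only other point worth flagging is the degenerate case $\gamma_t = 1$, handled by the restart remark above.
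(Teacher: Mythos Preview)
Your proposal is correct and follows essentially the same route as the paper: the paper introduces the shorthand $\Delta_t(x,y)=\ell_t(x,y)-\bbE[\ell_t(x,y)\mid\calF_{t-1}]$, rewrites $M_t=\Delta_t(z_t)-(1-\gamma_t)\Delta_{t-1}(z_t)+(1-\gamma_t)M_{t-1}$, and then takes the conditional expectation to obtain the same cancellation you describe. Your treatment is in fact more careful, since you explicitly address adaptedness, integrability, and the degenerate case $\gamma_t=1$, which the paper leaves implicit.
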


We prove this result in Appendix \ref{apdx:mtgle_prop_proof}. This martingale structure allows us to use tools from martingale theory to prove strong convergence results.

\vspace{-0.5em}
\subsection{Asymptotic Convergence}
\vspace{-0.5em}
\label{sec:asymptotic_convergence}
\begin{theorem}
\label{thm:oeuvre_clt}
Let $v^*$ be a random variable taking values in $[\nu^2, b^2]$, where $\nu > 0$. Let the following conditions hold:
\begin{enumerate}[label=(\roman*), leftmargin=*]
    \item $\Var(\ell_t \mid \calF_{t-1}) \in [\nu^2, b^2]$.
    \item $\Var(\ell_t \mid \calF_{t-1}) \stackrel{\text{p}}{\to} v^*$.
    \item $\paren{\ell_t(z) - (1 - \gamma_t)\ell_{t-1}(z)}$ has bounded kurtosis.
    \item The weight sequence $(\gamma_t)$ satisfies Equation \ref{eq:gamma_constraint} with the modified condition that $\gamma_t = \omega(\sigma_t)$.
\end{enumerate}
  Then, $M_t / \sqrt{V_t} \stackrel{\text{d}}{\to} \calN(0, 1)$, where $V_t = \sum_{i=1}^t \Var(M_i \mid \calF_{i-1})$.
\end{theorem}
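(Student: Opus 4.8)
The plan is to derive the statement from a martingale central limit theorem for the martingale $N_t := M_t/\Gamma_t$ of Proposition~\ref{prop:mtgle_prop}, normalized by its own predictable quadratic variation so that the (possibly random) limiting variance scale is absorbed. First I would record the one-step structure of the estimator. Subtracting conditional means from the recursion~\eqref{eq:oeuvre_defn} gives $M_t = (1-\gamma_t)M_{t-1} + d_t$ with
\begin{equation*}
d_t \;=\; \bigl(\ell_t(z_t) - \bbE[\ell_t(Z)\mid\calF_{t-1}]\bigr) \;-\; (1-\gamma_t)\bigl(\ell_{t-1}(z_t) - \bbE[\ell_{t-1}(Z)\mid\calF_{t-2}]\bigr),
\end{equation*}
so $(d_t)$ is a square-integrable martingale difference sequence, and since both conditional means are $\calF_{t-1}$-measurable we have $\Var(M_t\mid\calF_{t-1}) = \bbE[d_t^2\mid\calF_{t-1}] = \Var\bigl(\ell_t(z_t) - (1-\gamma_t)\ell_{t-1}(z_t)\mid\calF_{t-1}\bigr)$. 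The increments of $N_t$ are $\xi_t := d_t/\Gamma_t$ and its predictable quadratic variation is $\langle N\rangle_t := \sum_{i=1}^t \Var(\xi_i\mid\calF_{i-1}) = \sum_{i=1}^t \Gamma_i^{-2}\Var(M_i\mid\calF_{i-1})$; it then suffices to verify the hypotheses of a conditional-Lindeberg martingale CLT (e.g.\ the martingale CLT of Hall--Heyde in self-normalized form), namely that $\langle N\rangle_t\to\infty$ and the Lindeberg condition holds, and to identify the resulting normalization with $V_t$. Here the bounded-kurtosis hypothesis (iii) is what controls the gap between the predictable and the optional quadratic variation, ensuring that $N_t/\sqrt{\langle N\rangle_t}$ is asymptotically standard rather than mixed normal.

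Second, I would pin down the magnitude of the conditional variances. Writing $\ell_t(z) - (1-\gamma_t)\ell_{t-1}(z) = \bigl(\ell_t(z)-\ell_{t-1}(z)\bigr) + \gamma_t\,\ell_{t-1}(z)$, bounding the first summand's conditional variance by $\sigma_t^2$, applying Cauchy--Schwarz to the cross term with $\Var(\ell_{t-1}(Z)\mid\calF_{t-2})\le b^2$, and using $\gamma_t = \omega(\sigma_t)$ from (iv), I get $\Var(M_t\mid\calF_{t-1}) = \gamma_t^2\,\Var(\ell_{t-1}(Z)\mid\calF_{t-2})(1+o(1))$. Condition (i) then gives $\Var(M_t\mid\calF_{t-1}) \ge \frac{1}{2}\nu^2\gamma_t^2$ for large $t$, while (ii) gives $\Var(M_t\mid\calF_{t-1})/\gamma_t^2 \xrightarrow{\text{p}} v^*$. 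In the regime where the statement is non-vacuous one has $\sum_t\gamma_t^2 = \infty$ (otherwise $V_t$ converges), hence $\sum_t\gamma_t = \infty$ and $\Gamma_t\to 0$; combining the lower bound with the telescoping identity $\Gamma_i^{-1}-\Gamma_{i-1}^{-1} = \gamma_i\Gamma_i^{-1}$ shows $\langle N\rangle_t \asymp \sum_{i\le t}\gamma_i^2\Gamma_i^{-2} \to\infty$. The convergence in probability in (ii) identifies the asymptotic scale as the \emph{random} $v^*$, and it is precisely this that forces the self-normalization: dividing $M_t$ by $\sqrt{V_t}$ cancels $v^*$ and yields a standard rather than a mixed normal.

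Third, I would verify the conditional Lindeberg condition. For every $\epsilon>0$,
\begin{equation*}
\frac{1}{\langle N\rangle_t}\sum_{i=1}^t \bbE\bigl[\,\xi_i^2\,\mathbf{1}\set{|\xi_i| > \epsilon\sqrt{\langle N\rangle_t}}\,\bigm|\,\calF_{i-1}\bigr] \;\le\; \frac{1}{\epsilon^2\langle N\rangle_t^2}\sum_{i=1}^t \bbE[\xi_i^4\mid\calF_{i-1}],
\end{equation*}
and (iii) gives $\bbE[\xi_i^4\mid\calF_{i-1}] \le K\,\Var(\xi_i\mid\calF_{i-1})^2$ for a universal $K$, so the right side is at most $K\epsilon^{-2}\langle N\rangle_t^{-1}\max_{i\le t}\Var(\xi_i\mid\calF_{i-1})$. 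Since $\Var(\xi_i\mid\calF_{i-1}) \asymp \gamma_i^2\Gamma_i^{-2}$ is increasing in $i$ in this regime, with $\langle N\rangle_t \asymp \gamma_t\Gamma_t^{-2}$ by the telescoping estimate, this ratio is $\asymp \gamma_t \to 0$ and Lindeberg holds. The martingale CLT then gives $N_t/\sqrt{\langle N\rangle_t} \xrightarrow{\text{d}} \calN(0,1)$, and multiplying numerator and denominator by $\Gamma_t$ turns this into $M_t/\sqrt{V_t}\xrightarrow{\text{d}}\calN(0,1)$.

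The main obstacle I anticipate is the bookkeeping forced by $\Gamma_t\to 0$: the decay weights concentrate the mass of $M_t$ on a window of the last $\Theta(1/\gamma_t)$ steps, so each asymptotic estimate above ($\langle N\rangle_t\asymp\gamma_t\Gamma_t^{-2}$, the monotonicity and size of $\Var(\xi_i\mid\calF_{i-1})$, and the identification of $V_t$ with the predictable quadratic variation of $M_t$) must be justified from the fact that $\gamma_t$ is slowly varying, which has to be extracted from~\eqref{eq:gamma_constraint}. The second difficulty is that (ii) only gives convergence \emph{in probability} to a \emph{random} limit, so the argument must be carried out in the self-normalized / stable-convergence version of the CLT, tracking that $v^*$ enters $\langle N\rangle_t$ and the Lindeberg ratio only as a multiplicative factor that cancels upon division by $\sqrt{V_t}$. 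The remaining pieces---the one-step variance identity, the Cauchy--Schwarz reduction, and the kurtosis bound---are routine.
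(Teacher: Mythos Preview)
Your proposal follows essentially the same route as the paper: apply the Hall--Heyde martingale CLT to $N_t = M_t/\Gamma_t$, verify Lindeberg through the bounded-kurtosis hypothesis, and pin down the conditional-variance asymptotics by writing $d_t = \gamma_t\,\ell_{t-1}(z_t) + (\ell_t(z_t) - \ell_{t-1}(z_t))$ and using $\gamma_t = \omega(\sigma_t)$ so the second piece is negligible. The only packaging difference is that the paper first normalizes by the \emph{deterministic} total variance $\bar V_n = \Var(M_n/\Gamma_n)$, shows $\sum_i \Var(\xi_i\mid\calF_{i-1})/\bar V_n \xrightarrow{\text{p}} v^*/\bar v^*$ via a Stolz--Ces\`aro argument, and then invokes the mixed-normal form of the CLT; you go directly to the self-normalized conclusion. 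Your telescoping estimate $\langle N\rangle_t \asymp \gamma_t\Gamma_t^{-2}$ is a clean shortcut for what the paper does through two auxiliary lemmas about the ratios $\Var(\Lambda_i)/\Var(\Lambda_{i-1})$ and $\sum_i \Var(\Lambda_i)^2 / (\sum_j \Var(\Lambda_j))^2$.

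One point to flag: your final step, ``multiplying numerator and denominator by $\Gamma_t$ turns this into $M_t/\sqrt{V_t}$,'' is not a literal identity, since $\Gamma_t^2\langle N\rangle_t = \sum_i (\Gamma_t/\Gamma_i)^2\Var(M_i\mid\calF_{i-1})$ whereas the statement defines $V_t = \sum_i \Var(M_i\mid\calF_{i-1})$. For $\gamma_t = 1/t$ one has $\Gamma_t/\Gamma_i = i/t$, so the former is $\asymp 1/t$ while the latter converges to a positive constant; they are not asymptotically equivalent, and the ``slowly varying'' heuristic will not close this. You flag this identification as an anticipated obstacle, but it is more than bookkeeping. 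In fact the paper's own proof works throughout with $\sum_i \Var(M_i/\Gamma_i\mid\calF_{i-1}) = \langle N\rangle_t$ (this is the $V_n$ appearing in its supporting lemma), so the normalizer the argument actually produces is $\Gamma_t\sqrt{\langle N\rangle_t}$ rather than $\sqrt{\sum_i\Var(M_i\mid\calF_{i-1})}$; the $V_t$ in the statement should be read accordingly, and your derivation is then complete as written.
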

We prove this theorem in Appendix \ref{apdx:oeuvre_clt_proof}. Conditions $(i)$ and $(ii)$ ensure that the conditional variances of the losses converge to some non-degenerate random variable. Condition $(iii)$ is a mild restriction on the kurtosis of consecutive difference, and should be satisfied by loss distributions with light tails, such as sub-Gaussian and sub-exponential distributions. Condition $(iv)$ is a restriction on the weight sequence $\set{\gamma_t}$, which is similar to that in Theorem \ref{thm:gamma_convergence_rates}. We discuss the conditions in detail in Appendix \ref{apdx:oeuvre_clt_proof}. While $V_t$ might not be known in practice, it can be upper-bounded by the recursive variance $\Vup_t$ from Proposition \ref{prop:var_recursion}, which can be used to get asymptotically valid (though more conservative) confidence intervals.



\vspace{-0.5em}
\subsection{Fixed-time and Time-uniform Concentration Bounds}
\vspace{-0.5em}
\label{sec:concentration_bounds}
Uniform stability bounds can also be used to establish concentration bounds for $M_t$:
\begin{theorem}
    \label{thm:oeuvre_concentration}
    Let the loss function $\ell$ be bounded in $[0, b]$. Let $\set{\sigma_t}$ be a sequence upper-bounding the uniform stability of the function sequence learned by algorithm $\calA$. Let $\gamma_t$ be defined as in Equation \ref{eq:gamma_optimal} or \ref{eq:gamma_constraint}. Then,

    \vspace{-0.5em}
    \begin{enumerate}[label = (\alph*), leftmargin=*]
            \item (Fixed-time bound) For any $\epsilon > 0$ and $t \in \bbN$,
            \vspace{-0.5em}
    \begin{align*}
        \bbP\paren{\abs{M_T} \geq \epsilon} &\leq 2\exp\paren{-\frac{\epsilon^2}{2 \Vup_T}}
    \end{align*}
            \vspace{-0.5em}
            \item (Time-uniform bound) For any $c > 0$ and $T \in \bbN$
            \vspace{-0.5em}
    \begin{align*}
        \bbP &\paren{\exists t \in [T]: \abs{M_t} \geq \sqrt{\frac{c}{2 \Vup_T}} \cdot \paren{\frac{\Vup_T \Gamma_t}{\Gamma_T} + \frac{\Vup_t \Gamma_T}{\Gamma_t}}} \\
        &\qquad \leq 2\exp\paren{-c}
    \end{align*}
    \end{enumerate}
    \vspace{-1em}
\end{theorem}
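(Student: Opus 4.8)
The plan is to exploit the martingale structure from Proposition~\ref{prop:mtgle_prop}: writing $N_t = M_t / \Gamma_t$, we know $(N_t)$ is a martingale with respect to $(\calF_t)$. The increments are $\Delta_t = N_t - N_{t-1} = M_t/\Gamma_t - M_{t-1}/\Gamma_{t-1}$. Unwinding the OEUVRE recursion \eqref{eq:oeuvre_defn} and the definition $M_t = L_t - \bbE[\ell_t(z)\mid\calF_{t-1}]$, one gets
\begin{align*}
  \Gamma_t \Delta_t &= M_t - (1-\gamma_t) M_{t-1} \\
  &= \ell_t(z_t) - \ell_{t-1}(z_t) + (1-\gamma_t)\bbE[\ell_{t-1}(z)\mid\calF_{t-2}] - \bbE[\ell_t(z)\mid\calF_{t-1}],
\end{align*}
which is a bounded random variable since $\ell \in [0,b]$: each of $\ell_t(z_t)$, $\ell_{t-1}(z_t)$, and the two conditional expectations lies in $[0,b]$, so $|\Gamma_t \Delta_t| \le 2b$ deterministically (in fact a sharper range argument gives a bound that plugs cleanly into the variance recursion). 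The first step, then, is to pin down a deterministic bound $|\Delta_t| \le a_t$ for an explicit $a_t$ proportional to $1/\Gamma_t$, and simultaneously recall from Proposition~\ref{prop:var_recursion} that $\sum_{i=1}^t \Var(M_i\mid\calF_{i-1}) \le \Vup_t$, hence $\sum_{i=1}^t \Var(\Delta_i\mid\calF_{i-1}) = \sum_{i=1}^t \Var(M_i\mid\calF_{i-1})/\Gamma_i^2$; combined with the boundedness this is the setup for a Freedman-type or Azuma-type inequality.

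For part~(a), the fixed-time bound, the cleanest route is to apply a Freedman / Bernstein inequality for martingales with bounded increments and bounded predictable quadratic variation directly to $M_T$ itself — or, equivalently, to observe that $M_T = \Gamma_T N_T = \Gamma_T \sum_{t=1}^T \Delta_t$ and control the weighted sum. Actually the slickest argument bypasses $N_t$: one shows $M_T = \sum_{t=1}^T (\Gamma_T/\Gamma_t)\,\xi_t$ where $\xi_t = \ell_t(z_t) - \ell_{t-1}(z_t) + \dots$ are martingale-difference increments (with respect to $\calF_t$) that are bounded and whose conditional variances sum to at most $\Vup_T$ after the $(\Gamma_T/\Gamma_t)^2$ weighting — this is exactly what part~(2) of Proposition~\ref{prop:var_recursion} delivers. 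Then a sub-Gaussian MGF bound $\bbE[e^{\lambda M_T}] \le e^{\lambda^2 \Vup_T/2}$, obtained by iterating $\bbE[e^{\lambda \xi_t (\Gamma_T/\Gamma_t)}\mid\calF_{t-1}] \le \exp(\lambda^2 (\Gamma_T/\Gamma_t)^2 \Var(\xi_t\mid\calF_{t-1})/2 \cdot (\text{const}))$ via Hoeffding's lemma on bounded increments, followed by a Chernoff bound and optimization over $\lambda$, gives $\bbP(|M_T|\ge\epsilon) \le 2\exp(-\epsilon^2/(2\Vup_T))$. The subtlety is matching the constant: Hoeffding's lemma on an increment bounded in a range of width $w$ gives $w^2/8$ in the exponent, while the variance bound gives a possibly smaller quantity, and one must verify that the variance recursion $\Vup_t$ was built so that the sub-Gaussian parameter is exactly $\Vup_T$ and not a constant multiple of it — this is the place to be careful, and I expect the recursion \eqref{eq:var_recursion} with the $(\gamma_t b + (1-\gamma_t)\sigma_t)^2$ term was designed precisely to absorb this.

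For part~(b), the time-uniform bound, the natural tool is Ville's maximal inequality applied to a supermartingale. Define, for a fixed $\lambda$, the process $S_t = \exp\bigl(\lambda M_t/\Gamma_t - \tfrac{\lambda^2}{2}\sum_{i=1}^t \Var(M_i\mid\calF_{i-1})/\Gamma_i^2 \cdot(\text{const})\bigr)$; by the increment MGF bound above this is a supermartingale with $S_0 = 1$, so $\bbP(\exists t\le T: S_t \ge e^c) \le e^{-c}$. Unpacking the event $S_t \ge e^c$ and using $\sum_{i\le t}\Var(M_i\mid\calF_{i-1})/\Gamma_i^2 \le \Vup_t/\Gamma_t^2$ (from Proposition~\ref{prop:var_recursion} — note this needs the sum bounded by $\Vup_t$, not $\Vup_T$, which is why that proposition states \emph{both} bounds) yields a bound of the form $|M_t| \le \Gamma_t(\tfrac{c}{\lambda} + \tfrac{\lambda}{2}\Vup_t/\Gamma_t^2 \cdot \text{const})$ for all $t\le T$ simultaneously. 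The stated form with the symmetric expression $\tfrac{\Vup_T\Gamma_t}{\Gamma_T} + \tfrac{\Vup_t\Gamma_T}{\Gamma_t}$ then comes out of choosing $\lambda = \sqrt{2c/\Vup_T}\cdot(1/\Gamma_T)$ or a similar $T$-calibrated value and doing the algebra; a union bound over $\pm\lambda$ gives the factor $2$. The main obstacle, and the step I'd budget the most care for, is getting the constants and the precise $\Gamma$-weighting right so that the final inequality is exactly as stated — in particular verifying that the supermartingale uses $\Vbar^{\uparrow}_t/\Gamma_t^2$ as its compensator (time-varying, matching each $t$) while the threshold is calibrated with $\Vup_T$, and that optimizing $\lambda$ produces the stated two-term bound rather than a single-term $\sqrt{\Vup_T}$ bound; this asymmetry between the per-$t$ compensator and the $T$-calibrated $\lambda$ is the whole point of the time-uniform statement and is where a careless calculation would go wrong.
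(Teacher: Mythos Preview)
Your overall strategy matches the paper's: work with the martingale $N_t = M_t/\Gamma_t$, control its increments, apply Azuma--Hoeffding for (a) and a Ville-type maximal inequality (the paper cites Howard et al.'s master theorem) for (b). However, the piece you leave vague --- ``a sharper range argument gives a bound that plugs cleanly into the variance recursion'' --- is exactly the heart of the proof, and your written decomposition misses it.

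First, your expression for the increment is off: from $L_t = \ell_t(z_t) + (1-\gamma_t)[L_{t-1} - \ell_{t-1}(z_t)]$ one gets
\[
  \Gamma_t(N_t - N_{t-1}) \;=\; M_t - (1-\gamma_t)M_{t-1} \;=\; \Delta_t(z_t) - (1-\gamma_t)\Delta_{t-1}(z_t),
\]
where $\Delta_s(z) = \ell_s(z) - \bbE[\ell_s(z)\mid\calF_{s-1}]$; you dropped the $(1-\gamma_t)$ on the $\ell_{t-1}$ term. The key step --- which the paper does and you only gesture at --- is to split this as
\[
  \Delta_t(z_t) - (1-\gamma_t)\Delta_{t-1}(z_t)
  \;=\;
  \underbrace{\gamma_t\,\Delta_t(z_t)}_{|\cdot|\le b\gamma_t}
  \;+\;
  (1-\gamma_t)\underbrace{\bigl(\Delta_t(z_t)-\Delta_{t-1}(z_t)\bigr)}_{|\cdot|\le \sigma_t \text{ by uniform stability}},
\]
so $|N_t - N_{t-1}| \le (b\gamma_t + (1-\gamma_t)\sigma_t)/\Gamma_t$. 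Unrolling the recursion \eqref{eq:var_recursion} after dividing by $\Gamma_t^2$ gives the exact identity
\[
  \sum_{t=1}^{T} \frac{(b\gamma_t + (1-\gamma_t)\sigma_t)^2}{\Gamma_t^2} \;=\; \frac{\Vup_T}{\Gamma_T^2},
\]
and Azuma--Hoeffding with these half-ranges yields part (a) with no spare constants. Note that the paper does \emph{not} use the conditional-variance statement from Proposition~\ref{prop:var_recursion} for part (a); it uses the range bound directly. Your attempt to run Hoeffding's lemma with $\Var(\xi_t\mid\calF_{t-1})$ in the exponent conflates range-based and variance-based sub-Gaussianity and would not give the stated constant without the decomposition above. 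For part (b), your Ville-supermartingale plan is the same as the paper's (which invokes Howard et al.), and the same identity $\Vup_t/\Gamma_t^2 = \sum_{i\le t}(\cdot)^2/\Gamma_i^2$ serves as the deterministic upper bound on the predictable variation; the $T$-calibrated choice of $\lambda$ you describe is exactly what the paper does.
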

We prove this theorem in Appendix \ref{apdx:oeuvre_concentration_proof}. Part $(a)$ is an application of the Azuma-Hoeffding inequality \cite{wainwright2019high}, and part (b) is applies the master theorem from \cite{howard2020time} to the martingale $(M_{t} / \Gamma_{t})$ and minimizing the bound for the final time step $T$. 
The confidence interval in part (a) has width $\calO(\sqrt{\Vup_t})$ at time $t$, and the confidence sequence in part (b) decreases in size with time, having width $\calO(\sqrt{\Vup_T})$ at the final time step $T$. Similar results can be obtained for other distributional assumptions on the loss using results in \cite{howard2020time}.

\textbf{Interpretation of convergence results.} We note that Theorems \ref{thm:oeuvre_clt} and \ref{thm:oeuvre_concentration} establish the rate of convergence of $L_t$ to the \textit{conditional expectation} of the loss, which is the expected loss of the currently learned function. If the function class $\calG$ for the learning algorithm is such that exact bounds on the uniform stability are known, then Theorems \ref{thm:oeuvre_clt} and \ref{thm:oeuvre_concentration} can be used to construct confidence intervals/sequences for the OEUVRE estimator. On the other hand, if we only know the \textit{order} of uniform stability instead of exact bounds, the above theorems still show that OEUVRE exhibits strong convergence properties, since a stability rate of order $\beta_t$ implies convergence rates and concentration of order $\sqrt{\beta_t}$ from Theorem \ref{thm:gamma_convergence_rates}.

In our running example of a static sequence of functions, this corresponds to a width of 
The extreme instance of static functions $f_t = f$ results in the well-known $\calO(1/\sqrt{T})$ rate for confidence intervals/sequences. The $\calO(1/\sqrt{T})$ rate also holds for learning algorithms with $\calO(1/T)$ stability rates, which holds from FTL with ERM and SGD for strongly convex loss functions with a learning rate decay of the order $1 / t$. On the other hand, algorithms with $\calO(1/\sqrt{T})$ stability rates such as FTRL, OMD, and RDA result in intervals/sequences with width $\calO(1/T^{1/4})$.


\vspace{-0.5em}
\section{Adaptive Constant Estimation}
\vspace{-0.5em}
\label{sec:adaptive_const_est}


OEUVRE requires two types of constants: 1) the upper bound on the variance of loss function evaluations, $b^2$, and 2) the constants for the stability rate $\sigma_t = c r(t)$, where $r(t)$ is the rate of uniform stability and $c > 0$ is the associated constant. In practice these constants are unknown, and we need to know additional properties of the function class and loss functions to estimate them. However, we can estimate these practically without affecting the rate of convergence of the estimator. We begin by showing that the rate of convergence of is not impacted by misspecifying $\sigma_t$ by a constant factor.
\begin{lemma}
  \label{lem:adaptive_const_est}
  Let $\sigma_t = c r(t)$, where $c > 0$. . Let $\Vup_t$ be the recursion from Equation \ref{eq:var_recursion}.
  \vspace{-0.5em}
  \begin{align*}
    \Vup_1 = b^2, &\qquad \Vup_{t} = (\gamma_{t} b + (1 - \gamma_{t}) c r(t))^{2} + \Vup_{t-1}
    \end{align*}
    Let $\Vdag_t$ be the recursion from Equation \ref{eq:var_recursion} with the same weight sequence $(\gamma_t)$ but with misspecified constants $\hat{b}, \hat{c} > 0$ instead of $b$, $c$.
  \vspace{-0.5em}
\begin{align*}
    \Vdag_{1} = \hat{b}^{2}, &\qquad \Vdag_{t} = (\gamma_{t} \hat{b} + (1 - \gamma_{t}) \hat{c} r(t))^{2} + \Vdag_{t-1}
  \end{align*}
  Then, $\Vup_{t} \leq \Vdag_{t} / \paren{\min\set{1, (\hat{c} / c)^{2}} \cdot \min\set{1, (\hat{b} / b)^2}}$.
  
\end{lemma}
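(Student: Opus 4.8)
The plan is to prove, by induction on $t$, the equivalent rescaled inequality $\kappa\,\Vup_t \leq \Vdag_t$, where $\kappa \coloneqq \min\set{1,(\hat c/c)^2}\cdot\min\set{1,(\hat b/b)^2}$ is the reciprocal of the claimed constant. The single algebraic observation that makes everything go through is that $\sqrt{\kappa} = \min\set{1,\hat b/b}\cdot\min\set{1,\hat c/c}$, so $\sqrt{\kappa}$ is bounded above by each of the two factors $\min\set{1,\hat b/b}$ and $\min\set{1,\hat c/c}$ separately. Note also that all quantities appearing in the recursions are nonnegative: $b,c,r(t),\hat b,\hat c>0$ and $\gamma_t\in[0,1]$, so $1-\gamma_t\ge 0$ and $\Vup_t,\Vdag_t\ge 0$; this lets me pass to square roots freely.

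\emph{Base case.} $\kappa\,\Vup_1 = \kappa b^2 \le \min\set{1,(\hat b/b)^2}\,b^2 \le (\hat b/b)^2 b^2 = \hat b^2 = \Vdag_1$, using $\min\set{1,(\hat c/c)^2}\le 1$ in the first step.

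\emph{Inductive step.} Assume $\kappa\,\Vup_{t-1}\le\Vdag_{t-1}$. Then
\[
\kappa\,\Vup_t = \kappa\bigl(\gamma_t b + (1-\gamma_t)cr(t)\bigr)^2 + \kappa\,\Vup_{t-1} \le \kappa\bigl(\gamma_t b + (1-\gamma_t)cr(t)\bigr)^2 + \Vdag_{t-1},
\]
so it remains to show $\kappa\bigl(\gamma_t b + (1-\gamma_t)cr(t)\bigr)^2 \le \bigl(\gamma_t \hat b + (1-\gamma_t)\hat c r(t)\bigr)^2$. Taking square roots of both (nonnegative) sides, this reduces to $\sqrt{\kappa}\bigl(\gamma_t b + (1-\gamma_t)cr(t)\bigr) \le \gamma_t \hat b + (1-\gamma_t)\hat c r(t)$, which I prove term by term: $\sqrt{\kappa}\,\gamma_t b \le \min\set{1,\hat b/b}\,\gamma_t b \le (\hat b/b)\gamma_t b = \gamma_t\hat b$, and $\sqrt{\kappa}(1-\gamma_t)cr(t) \le \min\set{1,\hat c/c}(1-\gamma_t)cr(t)\le (\hat c/c)(1-\gamma_t)cr(t) = (1-\gamma_t)\hat c r(t)$. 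Summing these two bounds gives the claim, and rearranging $\kappa\,\Vup_t\le\Vdag_t$ into $\Vup_t\le\Vdag_t/\kappa$ finishes the proof.

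There is essentially no hard obstacle here — the only thing requiring care is choosing the right invariant to induct on (the multiplicatively rescaled bound rather than an additive one) and spotting the term-by-term split enabled by $\sqrt{\kappa}$ being dominated by each individual ratio; once that is seen, the rest is a routine two-line induction. One should also double-check that the lemma's displayed recursion for $\Vup_t$ drops the $(1-\gamma_t)^2$ factor on $\Vup_{t-1}$ present in Equation~\ref{eq:var_recursion} (this only makes $\Vup_t$ larger, hence the stated bound is if anything stronger), and handle the same convention consistently for $\Vdag_t$.
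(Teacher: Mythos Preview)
Your proof is correct. Both you and the paper argue by comparing the increments of the two recursions and summing/inducting, but the executions differ. The paper's proof proceeds by case analysis on whether the misspecification ratio is at least $1$ or strictly less than $1$, and in the written version treats only the $c$-misspecification explicitly (leaving the $\hat b$ case to be handled analogously); for $c<1$ it rescales the increment bound by $1/c^2$ and sums. You instead absorb both ratios into the single constant $\kappa$ and avoid any case split by the observation that $\sqrt{\kappa}\le\min\{1,\hat b/b\}$ and $\sqrt{\kappa}\le\min\{1,\hat c/c\}$ simultaneously, which lets you bound the two terms of the increment independently. This is a genuinely cleaner packaging: it handles both parameters at once, eliminates the case distinction, and makes the role of the product constant transparent. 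Your closing remark about the missing $(1-\gamma_t)^2$ factor in the displayed recursion is also well taken; the argument goes through either way since the factor is $\le 1$ and appears symmetrically in both recursions.
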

\vspace{-0.5em}
We prove this lemma in Appendix \ref{apdx:adaptive_const_est_proof}. We can improve the empirical performance of OEUVRE by approximating the constants using the first $k$ samples, setting $\hat{b}^2$ and $\hat{c}^2$ as the empirical variances over $B = \set{\ell_t(z_t)}_{t=1}^{k}$ and $C = \set{(\ell_t(z_t) - \ell_{t-1}(z_t)) / r_t}_{t=1}^{k}$ respectively.



 
We can then run OEUVRE initialized at sample $k + 1$ with $\hat{b}$ and $\hat{c}$. We observe empirically that adaptive estimation in this manner can result in significantly better performance, and all our experiments in Section \ref{sec:experiments} employ this method. Empirically, we observe that the first 30 time steps are sufficient to approximate $\hat{c}$ and $\hat{b}$ for good performance of OEUVRE in later time steps. By estimating these hyperparameters adaptively and providing stability order for the algorithm, we specify OEUVRE completely, and there are no other hyperparameters which need to be set.

\vspace{-0.5em}
\section{Experiments}
\vspace{-0.5em}
\label{sec:experiments}

We test the OEUVRE estimator through a series of experiments on diverse tasks consisting of synthetic and real-world datasets. For all experiments, we compare the performance of our estimator against the following baseline methods: 1) Sliding Window (SW), 2) Fading Factor Prequential (FFPreq) \cite{gama2013evaluating}, 3) Exponential Moving Average (EMA), and 4) Adaptive Windowing (ADWIN) \cite{bifet2007learning}. We evaluate the performance of the estimators against the ground truth expected loss using two metrics: the time-averaged Root Mean Squared Error (RMSE) and Mean Absolute Error (MAE) over all time steps. For all experiments, we use the first $t_0=30$ time steps as a burn-in period to initialize the constants for OEUVRE; these time steps are included in the metric calculation. We repeat each experiment over 10 seeded runs.

For each experimental setting, we sweep over a broad range of hyperparameter values for each baseline estimator, choosing the setting with the lowest mean RMSE in hindsight. Calculating the RMSE to choose the best hyperparameter requires knowledge of the true expected loss--an oracle advantage that is unavailable in practice. This results in a very strong benchmark, and we compare the \textit{best setting} from each baseline estimator against OEUVRE with adaptive tuning, which does not need explicit hyperparameter selection. Please refer to Appendix \ref{apdx:implementation_details} for more details.




\vspace{-1em}
\paragraph{Linear Regression.}
%
%
%
We begin with a synthetic online linear regression task, where the data is generated from a linear model with additive Gaussian noise. For the learning algorithm, we use OGD with the learning rate schedule $\eta_0 / \sqrt{t}$, where $\eta_0$ is the initial learning rate. We see in Table \ref{tab:stability_results} that OGD with this learning rate is $1 / \sqrt{t}$-uniformly stable. The goal is to estimate the scaled mean squared error (MSE) of the linear model. We run each experiment for $T=10,000$ time steps, varying both the number of dimensions and the noise level across experiments. We present our results in Figure \ref{fig:expt_linreg_results}, where we compare OEUVRE with the best baseline estimator and the median baseline across all methods and hyperparameters.

\begin{figure*}[htbp]
  \centering
  \includegraphics[width = 0.7\textwidth]{./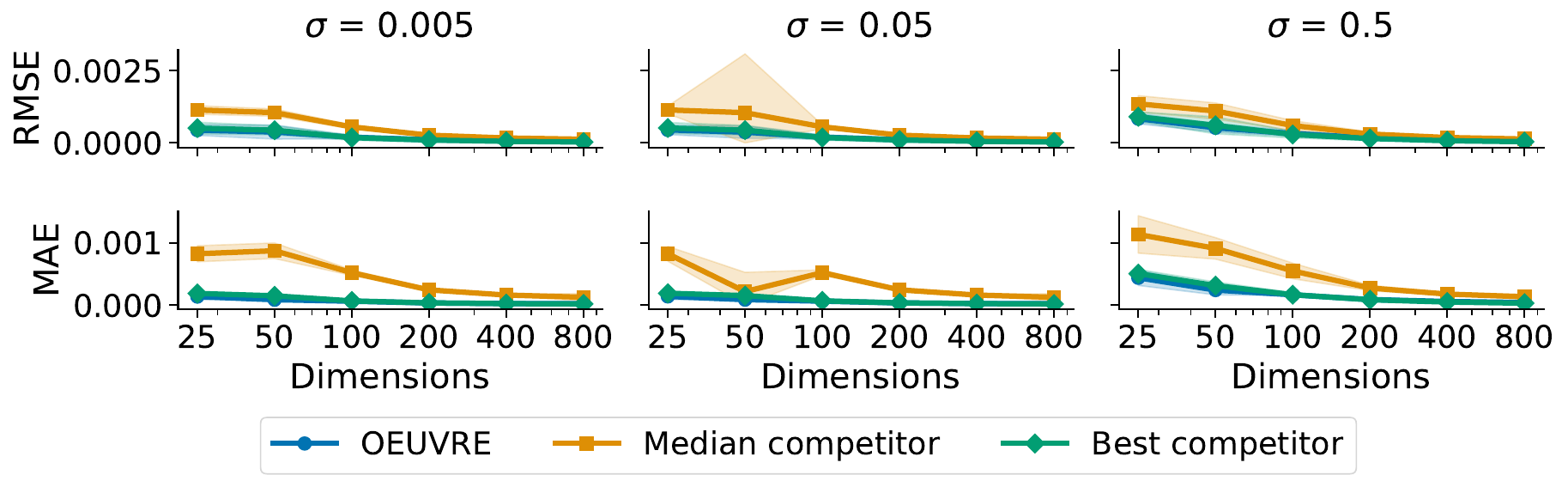}
  \caption{Performance comparison of OEUVRE against the best baseline and the median baseline for the online linear regression task. OEUVRE achieves competitive RMSE and bias when compared to the best baseline.}
  \label{fig:expt_linreg_results}
  \vspace{-1em}
\end{figure*}

We observe that OEUVRE achieves competitive mean RMSE when compared against the best baseline across dimensions and noise levels. While the difference in performance is not large, it is significant given that the competitor is chosen from broad hyperparameter sweeps of four different baseline methods. The MAE for OEUVRE is also very close to the best baseline, being significantly lower than the median baseline for smaller dimensions ($d = 25-100$).




%
%
%
%

\vspace{-1em}
\paragraph{Prediction with Expert Advice.}
%
%
%
We then consider the prediction with expert advice task, where we have $K$ experts, each making a prediction at all time steps. We draw the loss incurred by each expert from either a Bernoulli or a Beta distribution in an i.i.d. manner. We use the Hedge algorithm \cite{mourtada2019optimality} with a learning rate of $\eta = \sqrt{\log(K) / t}$ to learn the distribution over experts. From Table \ref{tab:stability_results}, we see that this algorithm is $1/\sqrt{t}$-uniformly stable. The goal is to estimate the expected loss of the currently learned distribution over experts. We run each experiment for $T = 10,000$ time steps, varying both the number of experts and the loss distributions across experiments. We compare OEUVRE with the best baseline estimator and the median baseline across all methods and hyperparameters in Figure \ref{fig:expt_hedge_results}.


\begin{figure*}[htbp]
    \centering
    \begin{subfigure}[t]{0.48\textwidth}
        \centering
        \includegraphics[width=\textwidth]{./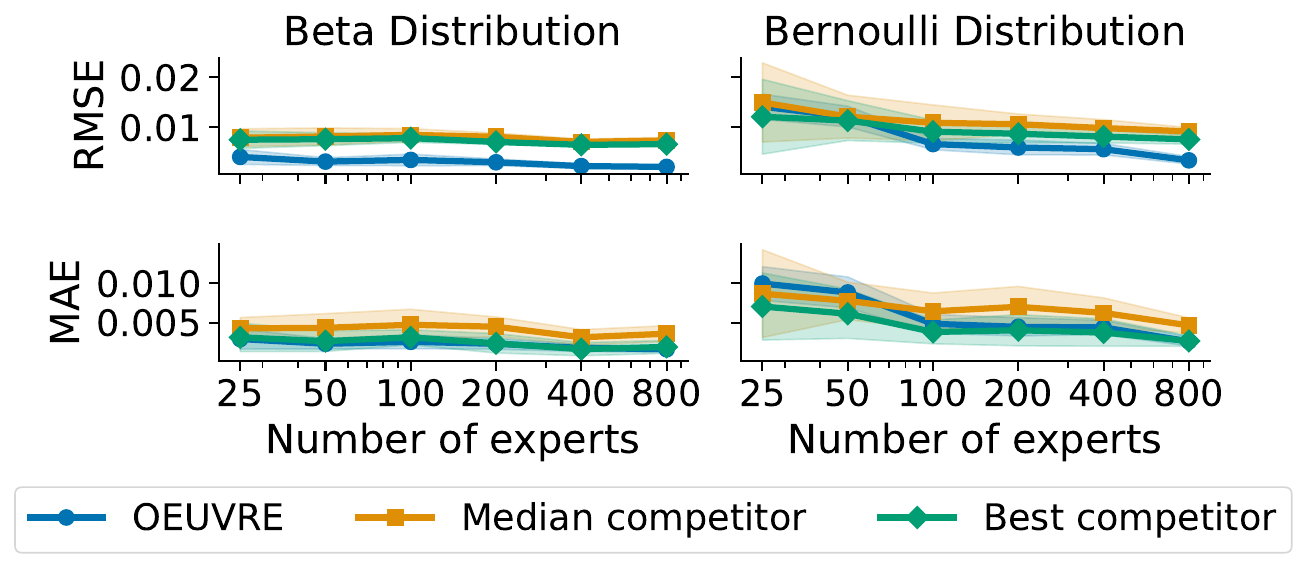}
        \vfill
        \caption{}
        \label{fig:expt_hedge_results}
    \end{subfigure}
    \hfill 
    \begin{subfigure}[t]{0.48\textwidth}
        \centering
        \includegraphics[width=\textwidth]{./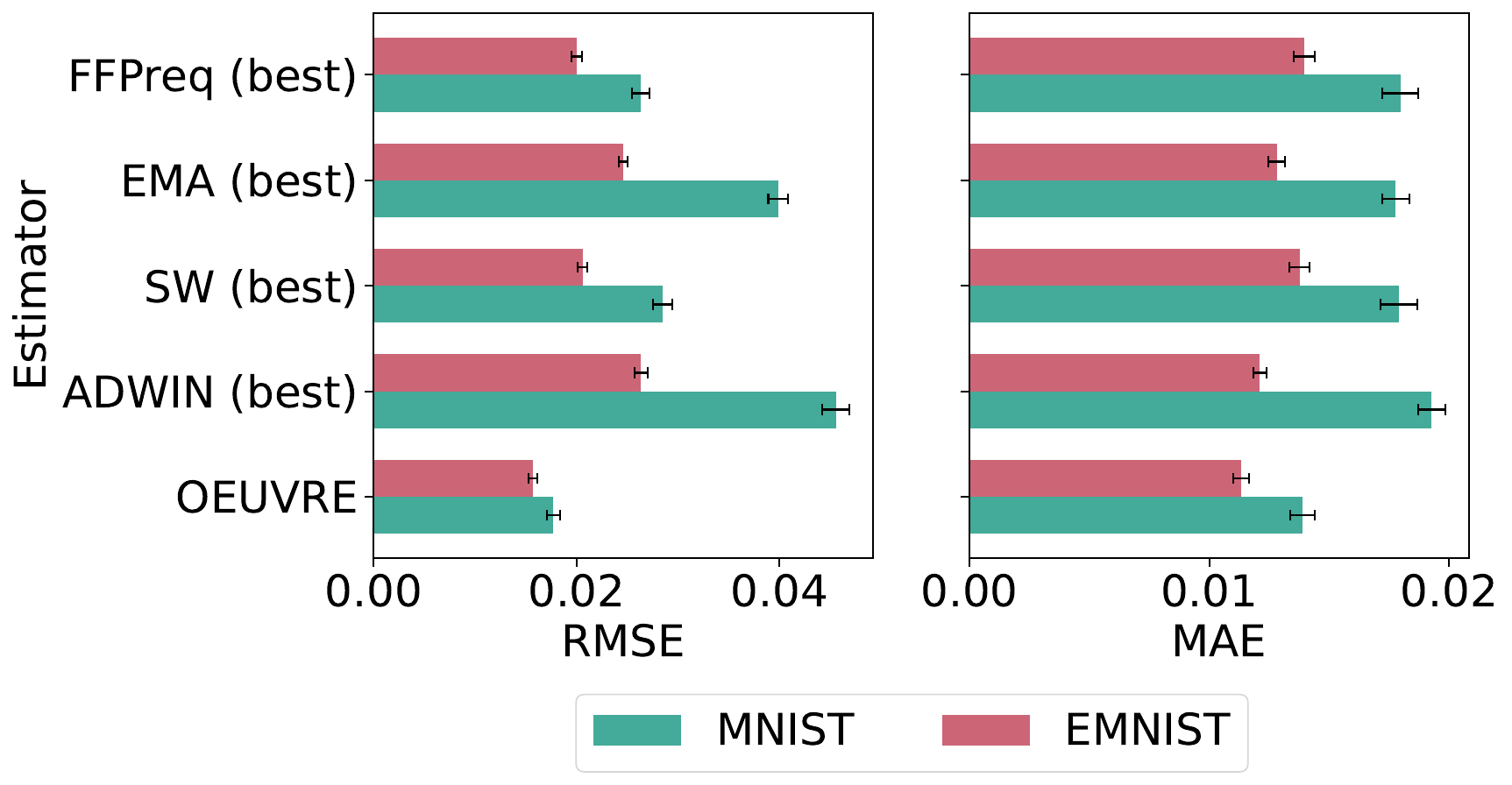}
        \caption{}
        \label{fig:expt_neural_networks_result}
    \end{subfigure}
    \caption{Comparison of OEUVRE with best-performing baseline methods for (a) prediction with expert advice task and (b) neural networks task. In (a), it achieves smaller RMSE and comparable MAE when compared to the best baseline. In (b), it achieves the lowest RMSE and MAE for both datasets.}
    \label{fig:performance_comparison}
    \vspace{-1.2em}
\end{figure*}

We observe that for Beta-distributed losses, the OEUVRE estimator achieves significantly lower RMSE when compared to the best baseline, while achieving comparable MAE. For Bernoulli-distributed losses, the RMSEs are comparable for smaller dimensions $(d = 25, 50)$, after which OEUVRE outperforms the best baseline. The MAE for OEUVRE is closer to the median baseline for smaller dimensions, becoming comparable to the best baseline for larger dimensions.




\vspace{-1em}
\paragraph{Neural Networks.}
Having validated OEUVRE on tasks with known stability bounds, we now stress-test whether its properties extend to settings beyond our theoretical coverage. We consider two popular image classification tasks, MNIST \cite{deng2012mnist} and EMNIST-digits \cite{cohen2017emnist}, and train a simple neural network on them with AdamW. The goal is to estimate the expected cross-entropy (CE) loss on the test set, which we approximate by evaluating the model on 1024 randomly drawn test samples at each time step. To update the estimators at each time step, we evaluate the model on 128 randomly drawn test samples and calculate the mean CE loss. We run each experiment for 10 epochs.

Though stability results are difficult to establish for neural networks with adaptive learning rates, we include these experiments to demonstrate OEUVRE's practical robustness beyond its theoretical guarantees. We approximate $\sigma_t$ as $C' \norm{\eta_t \odot g_t}{\infty}$, where $\eta_t$ is a vector of the learning rate per neuron, and $g_t$ is the gradient vector. This heuristic proxy is guided by the intuition that, assuming the neural network is $C$-Lipschitz, $\abs{\ell_t - \ell_{t-1}} \leq C \norm{w_t - w_{t-1}}{2} \leq C' \norm{\eta_t \odot g_t}{\infty}$, where $w_t$ is the vector of neuron weights at time $t$. Thus, the $L^{\infty}$ norm acts as a bound on the extent of function updates. We present our results in Figure \ref{fig:expt_neural_networks_result}.

We observe that for both MNIST and EMNIST-digits, OEUVRE achieves lower mean RMSE and MAE than the competing methods. In Figure \ref{fig:apdx_neural_networks_batchsz_128_bias}, we observe that the time-averaged bias of OEUVRE is significantly lower than those of the competitors, indicating that its estimate follows the true expected loss more closely. However, the variance across runs for the bias for MNIST is high, indicating that the heuristic $\sigma_t$ approximation may be inaccurate for adaptive optimizers. Further investigation of stability-based tuning for adaptive learning rates remains important future work. We observe similar results when reducing the number of test samples shown to the estimators at each time step from 128 to 8 (Figure \ref{fig:nn_batchsz_8}). 




\vspace{-1.2em}
\section{Discussion}
\vspace{-1.2em}

Our work suggests several directions for future research. The theoretical guarantees for OEUVRE currently assume i.i.d. data; we conduct some experiments on datasets with drift in Appendix \ref{apdx:non_stationary_data_expts}, and OEUVRE performs reasonably well for gradual and seasonal drift, with performance degrading in cases with abrupt drift. Extending OEUVRE to non-stationary settings via mixing assumptions on the data or combining OEUVRE with windowing is a key next step.



Our analysis also requires pre-determined weight sequences, excluding algorithms with adaptive learning rates like Adam, though our experiments suggest heuristic approximations can be effective in practice. Finally, our framework relies on convenient $L^{\infty}$ uniform stability bounds, but could immediately benefit from the development of tighter, systematic $L^2$ stability theory for online learning, which would directly improve our variance bounds and convergence rates.

\subsubsection*{Acknowledgement}
This material is based upon work supported by the AI Research Institutes Program funded by the National Science Foundation under the AI Institute for Societal Decision Making (NSF AI-SDM), Award No. 2229881.

\printbibliography

\clearpage

\clearpage
\onecolumn
\raggedbottom
\appendix
\aistatstitle{Supplement for OEUVRE: OnlinE Unbiased Variance-Reduced loss Estimation}

\section{OEUVRE with Batching}
\label{sec:oeuvre_batching}

The OEUVRE estimator can be modified to be used in batch settings, where the learner updates on a batch of $B$ samples at every time step. This could be the case when training is expensive, which makes it preferable when enough samples have been gathered. It might also be that the distribution of samples over time is non-uniform, with most samples arriving in rapid succession and with large gaps between these spikes. In such situations, training might be done between the spikes of samples.

We assume that at each time step $t$, a batch of $B$ i.i.d. samples $Z_t = \set{z_{t1}, \ldots, z_{tB}}$ arrives and is used to update the learned algorithm. Function $f_t$ is trained on the past set of samples $Z_1 \cup \ldots \cup Z_{t-1}$, and we define $\ell_t = \ell \circ f_t$ as before. Our modified OEUVRE estimator is
\begin{align*}
  L_t &= \frac{1}{B} \sum_{i=1}^{B} \ell_t(z_{ti}) + (1 - \gamma_t) \sqbrac{L_{t-1} - \frac{1}{B} \sum_{i=1}^{B} \ell_{t-1}(z_{ti})}.
\end{align*}
We note that since $Z_t$ is evaluated on $f_t$ and $f_{t-1}$, we need to maintain a cache of size $B$ containing the future samples. Since our learned function is trained on one time step less than those available, the learning algorithms incurs only an additively higher cumulative regret at the benefit of more accurate estimation of their performance. 

For most common algorithms (like FTL with ERM, FTL, and OMD), the batched versions of these algorithms have the same uniform stability for time step $t$ as the single-sample version. Thus, the weight sequence $\set{\gamma_t}$ is identical in both cases. By averaging our loss evaluations over $B$ i.i.d. samples, the variance of the estimator reduces by a factor of $B$. This reduction in variance by batching is also exhibited by other evaluation methods like prequential evaluation, sliding windown estimation and exponential moving average. We also observe that the batches need not be of the same size; the calculation of $\gamma_t$ only depends on uniform stability guarantees which do not get worse with batching, and OEUVRE can be used with dynamic batch sizes while achieving similar convergence guarantees. 

\section{Deferred proofs}

\subsection{Proof of Proposition \ref{prop:mtgle_prop}}
\label{apdx:mtgle_prop_proof}

\begin{proof}
  We define $\Delta_t(x, y) = \ell_t(x, y) - \bbE[\ell_t(x, y) \mid \calF_{t-1}]$. We then have
  \begin{align*}
    \bbE[\Delta_t(x, y) \mid \calF_{t-1}] &= \bbE[\ell_t(x, y) \mid \calF_{t-1}] - \bbE[\ell_t(x, y) \mid \calF_{t-1}] \\
    &= 0
  \end{align*}
  Now, $M_t$ can be expressed as
  \begin{align*}
    M_t &= L_t - \bbE[\ell_t(x, y) \mid \calF_{t-1}] \\
    &= \ell_t(x_t, y_t) - \bbE[\ell_t(x, y) \mid \calF_{t-1}] - (1 - \gamma_t) \sqbrac{\ell_{t-1}(x_t, y_t) - \bbE[\ell_{t-1}(x, y) \mid \calF_{t-1}]} \\
    &\quad + (1 - \gamma_t) \sqbrac{L_{t-1} - \bbE[\ell_{t-1}(x, y) \mid \calF_{t-1}]} \\
    &= \Delta_t(x_t, y_t) - (1 - \gamma_t) \Delta_{t-1}(x_t, y_t) + (1 - \gamma_t) M_{t-1} \\
  \end{align*}
  Dividing throughout by $\Gamma_t$, we have
  \begin{align*}
    \frac{M_t}{\Gamma_t} &= \frac{\Delta_t(x_t, y_t)}{\Gamma_t} - \frac{\Delta_{t-1}(x_t, y_t)}{\Gamma_{t-1}} + \frac{M_{t-1}}{\Gamma_{t-1}}
  \end{align*}
  Taking the conditional expectation on both sides, we get
  \begin{align*}
    \bbE\sqbrac{\frac{M_t}{\Gamma_t} \mid \calF_{t-1}} &= \bbE\sqbrac{\frac{\Delta_t(x_t, y_t)}{\Gamma_t} \mid \calF_{t-1}} - \bbE\sqbrac{\frac{\Delta_{t-1}(x_t, y_t)}{\Gamma_{t-1}} \mid \calF_{t-1}} + \bbE\sqbrac{\frac{M_{t-1}}{\Gamma_{t-1}} \mid \calF_{t-1}} \\
    &= \frac{M_{t-1}}{\Gamma_{t-1}}
  \end{align*}
  Thus, $(M_t / \Gamma_t)$ is a martingale. Moreover, we have
  \begin{align*}
    \bbE\sqbrac{\frac{M_1}{\Gamma_1} \mid \calF_0} &= \bbE[L_1 - \bbE[\ell_1(x, y)]] \\
    &= \bbE[\ell_1(x, y)] - \bbE[\ell_1(x, y)] = 0
  \end{align*}
\end{proof}

\subsection{Proof of Proposition \ref{prop:var_recursion}}
\label{apdx:var_recursion_proof}

\begin{proof}
  We have
  \begin{align*}
    \Var(M_t) &= \Var\paren{\Delta_t(x_t, y_t) + (1 - \gamma_t)[M_{t-1} - \Delta_{t-1}(x_t, y_t)]} \\
    &= \Var\paren{\Delta_t(x_t, y_t) - (1 - \gamma_t) \Delta_{t-1}(x_t, y_t) + (1 - \gamma_t) M_{t-1}} \\
    &= \Var\paren{\Delta_t(x_t, y_t) - (1 - \gamma_t) \Delta_{t-1}(x_t, y_t)} + (1 - \gamma_t)^2 \Var(M_{t-1}) + 2 \bbE\sqbrac{\paren{\Delta_{t}(x_t) - (1 - \gamma_t) \Delta_{t-1}(x_t)} M_{t-1}}
  \end{align*}
  The third time can be simplified as
  \begin{align*}
    &2 \bbE\sqbrac{\bbE\sqbrac{\paren{\Delta_{t}(x_t) - (1 - \gamma_t) \Delta_{t-1}(x_t)} M_{t-1}} \mid \calF_{t-1}} \\
    &\quad= 2 \bbE\sqbrac{M_{t-1} \bbE\sqbrac{\paren{\Delta_{t}(x_t) - (1 - \gamma_t) \Delta_{t-1}(x_t)}} \mid \calF_{t-1}} \\
    &\quad\stackrel{(i)}{=} 0,
  \end{align*}
  where $(i)$ comes from $\bbE[\Delta_t(x_t) \mid \calF_{t-1}] = 0$ and $\bbE{\Delta_{t-1} \mid \calF_{t-1}} = 0$. Thus, our variance simplifies to
  \begin{align*}
    \Var(M_t) &= \Var\paren{\Delta_t(x_t, y_t) - (1 - \gamma_t) \Delta_{t-1}(x_t, y_t)} + (1 - \gamma_t)^2 \Var(M_{t-1}) \\
    &\leq (\sqrt{\Var(\gamma_t \Delta_t(x_t, y_t))} + \sqrt{\Var((1 - \gamma_t)(\Delta_t(x_t, y_t) - \Delta_{t-1}(x_t, y_t)))})^2 + (1 - \gamma_t)^2 \Var(M_{t-1}) \\
    &\leq (\gamma_t b + (1 - \gamma_t) \sigma_t)^2 + (1 - \gamma_t)^2 \Var(M_{t-1}) \\
  \end{align*}
\end{proof}

\subsection{Proof of Lemma \ref{lem:opt_gamma_consistency}}
\label{apdx:opt_gamma_consistency_proof}

\begin{proof}
  We first show that $V_t \to 0$, followed by $\gamma_t \to 0$.
  \paragraph{Showing $V_t \to 0$.} Since $\sigma_t \to 0$, there is some $t_0(\epsilon)$ such that for all $t \geq t_0(\epsilon)$, $ \sigma_t < \epsilon$. We now show that there should exist some $t_V(\epsilon) \geq t_0(\epsilon)$ such that, for all $t \geq t_V(\epsilon)$, $V_t < 2 b \epsilon$. We divide our analysis into three cases:

  \begin{enumerate}[label = (\alph*)]
    \item Case 1, $V_{t - 1} \leq \sigma_t(b - \sigma_t)$: In this case,
      \begin{align*}
        V_t \leq V_{t - 1} + \sigma_t^2 \leq b \sigma_t < 2 b \epsilon
      \end{align*}

    \item Case 2, $\sigma_t > b$: In this case,
      \begin{align*}
        V_t \leq b^2 < b \sigma_t < 2 b \epsilon
      \end{align*}

    \item Case 3, none of the above: For all $t \geq t_0(\epsilon)$, we also have
      \begin{equation*}
        \begin{aligned}
          V_{t} \leq \frac{b^2 V_{t - 1}}{V_{t - 1} + (b - \sigma_t)^2} \leq \frac{b^2 V_{t - 1}}{V_{t - 1} + (b - \epsilon)^2}
        \end{aligned}
      \end{equation*}
      Thus, the third term is an upper bound on $V_t$. Now, consider another recursion $W_t$, $t \geq t_0(\epsilon)$, defined as
      \begin{equation*}
        \begin{aligned}
          W_{t} = \begin{cases} V_{t_0(\epsilon)}, \quad& t = t_0(\epsilon) \\
          \frac{b^2 W_{t - 1}}{W_{t - 1} + (b - \epsilon)^2}, \quad& \text{otherwise} \end{cases}
        \end{aligned}
      \end{equation*}
      We have $W_{t_0(\epsilon)} = V_{t_0(\epsilon)}$, and $W_{t_0(\epsilon) + 1} \geq V_{t_0(\epsilon) + 1}$. Moreover, the following function $f(x)$ is strictly increasing in $x$:
      \begin{align*}
        f(x) = \frac{b^2 x}{x + (b - \epsilon)^2}
      \end{align*}
      Thus, by induction, $W_t \geq V_t \forall t \geq t_0(\epsilon)$. Thus, if we show that there is some $t_V(\epsilon)$ such that for all $t \geq t_V(\epsilon)$, $W_t < b \epsilon$, then $V_t < b \epsilon$ for all $t \geq t_V(\epsilon)$.

      Now, the definition of $W_t$ is of the nature of a fixed point recursion in $W_t$. We calculate the fixed points of this recursion:
      \begin{equation*}
        \begin{aligned}
          x &= \frac{b^2 x}{x + (b - \epsilon)^2} \\
          x(x - (b^2 - (b - \epsilon)^2)) &= 0 \\
          x(x - \epsilon(2b - \epsilon)) &= 0 \\
        \end{aligned}
      \end{equation*}
      Thus, the two fixed points of the recursion are $0$ and $\epsilon(2b - \epsilon)$.

      For all $x \in (0, \epsilon(2b - \epsilon))$, we have $x < f(x)$. Thus, $x$ is mapped to a value higher than itself. For all $x > \epsilon(2b - \epsilon)$, we have $x > f(x)$. Thus, $x$ is mapped to a value lower than itself. This indicates that $\epsilon(2b - \epsilon)$ is the 'attracting' fixed point for $f(x)$. Thus, $W_t \to \epsilon(2b - \epsilon)$.

      This means that there should be some $t_V(\epsilon)$ such that for all $t \geq t_V(\epsilon), W_t < 2b \epsilon$. This concludes the case-wise analysis.
  \end{enumerate}

  Over our three cases, our choice of $\epsilon$ was arbitrary. Thus, we should have $V_t \to 0$.

  \paragraph{Showing $\gamma_t \to 0$.} We observe the recursion for $\gamma_t^*$ in the non-trivial case, i.e., when $\gamma_t \not\in \set{0, 1}$:
  \begin{align*}
    \gamma_t^* &= \frac{V_{t-1} - \sigma_t(b - \sigma_t)}{V_{t-1} + (b - \sigma_t)^2}
  \end{align*}
  Now, as $\sigma_t \to 0$ and $V_t \to 0$, we observe that the numerator tends to zero, and the denominator tends to $b^2$. Thus, $\gamma_t^* \to 0$.
\end{proof}

\subsection{Proof of Theorem \ref{thm:gamma_convergence_rates}}
\label{apdx:gamma_convergence_rates_proof}

We prove this result by establishing a result for non-optimal choices for $\gamma_t$

  \paragraph{Non-optimal choices of $\gamma_t$.} Although the optimal $\gamma_t^*$ sequence results in the lowest variance, it might be difficult to establish the convergence rate for $\Vup_t$ in closed form. The following lemma generalizes the consistency result for $L_t$ for other choices of $\gamma_t$ along with providing an upper bound on the convergence rate.
  \begin{lemma}
    \label{lem:non_opt_gamma_consistency}
    Let $\set{\gamma_t}$ be a weight sequence such that there exists some $t_0 \geq 1$ with $\gamma_t = \Omega(\sigma_t)$, and $1 / \gamma_t - 1 / \gamma_{t-1} \leq 1$ for all $t \geq t_0$. Then $\Gamma_{t}^{2} \Var(M_t) \leq \Vup_t = \calO(\gamma_t)$.
  \end{lemma}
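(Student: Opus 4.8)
The plan is to reduce the statement to the purely deterministic claim $\Vup_t = \calO(\gamma_t)$. Proposition~\ref{prop:var_recursion} already gives $\Var(M_t)\le \Vup_t$, and since $\Gamma_t=\prod_{i=2}^{t}(1-\gamma_i)\in[0,1]$ we get $\Gamma_t^2\Var(M_t)\le \Var(M_t)\le \Vup_t$ for free; so everything comes down to bounding the recursion $\Vup_t=(\gamma_t b+(1-\gamma_t)\sigma_t)^2+(1-\gamma_t)^2\Vup_{t-1}$ from Equation~\ref{eq:var_recursion}. Throughout I work in the regime of interest where $\sigma_t\to 0$, so (for all the $\gamma_t$ constructions covered) $\gamma_t\to 0$, and I may assume $0<\gamma_t<1$ for $t$ past some index.

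Next I would turn the recursion into a clean one-step inequality. Since $\gamma_t=\Omega(\sigma_t)$, there are $c_1>0$ and $t_0$ with $\sigma_t\le c_1\gamma_t$ for $t\ge t_0$; then $(\gamma_t b+(1-\gamma_t)\sigma_t)^2\le(\gamma_t b+\sigma_t)^2\le A\gamma_t^2$ with $A:=(b+c_1)^2$, so $\Vup_t\le(1-\gamma_t)^2\Vup_{t-1}+A\gamma_t^2$ for all $t\ge t_0$. The right object to track is the normalized quantity $R_t:=\Vup_t/\gamma_t$; dividing the recursion by $\gamma_t$ yields $R_t\le(1-\gamma_t)^2\frac{\gamma_{t-1}}{\gamma_t}R_{t-1}+A\gamma_t$.

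The crux — and the step I expect to be the main obstacle — is showing that the effective contraction factor $(1-\gamma_t)^2\frac{\gamma_{t-1}}{\gamma_t}$ is at most $(1-\gamma_t)$, i.e.\ that the hypothesis $1/\gamma_t-1/\gamma_{t-1}\le 1$ is exactly sharp enough that the amplification $\gamma_{t-1}/\gamma_t$ cannot overwhelm the damping $(1-\gamma_t)^2$. Rearranging the hypothesis gives $1/\gamma_{t-1}\ge(1-\gamma_t)/\gamma_t$, hence $\gamma_{t-1}\le\gamma_t/(1-\gamma_t)$, hence $\gamma_{t-1}/\gamma_t\le 1/(1-\gamma_t)$ (and trivially $\le 1$ when $\gamma_t\ge\gamma_{t-1}$); therefore $(1-\gamma_t)^2\frac{\gamma_{t-1}}{\gamma_t}\le(1-\gamma_t)^2\cdot\frac{1}{1-\gamma_t}=1-\gamma_t$. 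Substituting back, $R_t\le(1-\gamma_t)R_{t-1}+A\gamma_t$ for $t\ge t_0$.

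Finally a one-line induction closes the argument: set $B:=\max\{R_{t_0},A\}$; if $R_{t-1}\le B$ then $R_t\le(1-\gamma_t)B+A\gamma_t=B-\gamma_t(B-A)\le B$, so $R_t\le B$, i.e.\ $\Vup_t\le B\gamma_t$ for all $t\ge t_0$. The finitely many indices $t<t_0$ contribute only finitely many finite values with $\gamma_t$ bounded below, so enlarging the constant gives $\Vup_t=\calO(\gamma_t)$ for all $t$, and combined with the first paragraph this proves $\Gamma_t^2\Var(M_t)\le\Vup_t=\calO(\gamma_t)$. (An alternative but more computational route is to unroll the recursion as $\Vup_t=\Gamma_t^2\big(b^2+\sum_{i=2}^{t}(\gamma_i b+(1-\gamma_i)\sigma_i)^2/\Gamma_i^2\big)$ and estimate the sum using $1-\gamma_j\le e^{-\gamma_j}$ together with the fact that $1/\gamma_t-1/\gamma_{t-1}\le1$ forces $\gamma_t=\Omega(1/t)$ and $\Gamma_t=\calO(1/t)$; I would keep the induction above as the main proof since it is shorter.)
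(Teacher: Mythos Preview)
Your proof is correct and follows essentially the same approach as the paper. Both arguments reduce to the deterministic bound $\Vup_t=\calO(\gamma_t)$ via induction, and both hinge on the identical algebraic observation that $1/\gamma_t-1/\gamma_{t-1}\le 1$ is equivalent to $\gamma_{t-1}(1-\gamma_t)\le\gamma_t$, i.e.\ $(1-\gamma_t)^2\gamma_{t-1}/\gamma_t\le 1-\gamma_t$; the paper writes this as $\gamma_t\ge\gamma_t^2+(1-\gamma_t)^2\gamma_{t-1}$ and inducts directly on $\Vup_t\le(b+c)^2\gamma_t$, while you normalize first to $R_t=\Vup_t/\gamma_t$ and induct on $R_t\le B$, which is the same computation rearranged. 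Your handling of the base case via $B=\max\{R_{t_0},A\}$ is also equivalent to the paper's choice of $c=\max\{\sup_{t\ge t_0}\sigma_t/\gamma_t,\sqrt{\Vup_{t_0}/\gamma_{t_0}}-b\}$.
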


  Our choice of $\gamma_t$ in Theorem \ref{thm:gamma_convergence_rates} ensures that the two conditions on $\gamma_t$ are satisfied.




\begin{proof}
  We prove the result for $t_0 = 1$ first. We extend the weight sequence to $t = 1$ by letting  $\gamma_1 = 1$. $c = \sup_{t \geq 1} c_t / \gamma_t$. We prove that $V_t \leq (b + c)^2 \gamma_t$ by induction. For t = 1, we have $V_1 = \Var(\Delta_1) \leq b^2 \leq (b + c)^2 = (b + c)^2 \gamma_1^2$. For $t > 1$, we have
  \begin{align*}
    1 &\stackrel{(i)}{\geq} \frac{1}{\gamma_t} - \frac{1}{\gamma_{t-1}} \\
    \gamma_t \gamma_{t-1} &\geq \gamma_{t-1} - \gamma_t \\
    \gamma_t &\geq \gamma_{t-1} (1 - \gamma_t) \\
    \gamma_t (1 - \gamma_t) &\geq \gamma_{t-1} (1 - \gamma_t)^2 \\
    \gamma_t &\geq \gamma_t^2 + (1 - \gamma_t)^2 \gamma_{t-1} \\
    (b + c)^2 \gamma_t &\geq (b + c^2) \gamma_t^2 + (1 - \gamma_t)^2 (b + c)^2 \gamma_{t-1} \\
    &\geq (b \gamma_t + c \gamma_t)^2 + (1 - \gamma_t)^2 (b + c)^2 \gamma_{t-1} \\
    &\geq (b \gamma_t + \sigma_t)^2 + (1 - \gamma_t)^2 (b + c)^2 \gamma_{t-1} \\
    &\geq (b \gamma_t + (1 - \gamma_t) \sigma_t)^2 + (1 - \gamma_t)^2 (b + c)^2 \gamma_{t-1} \\
    &\stackrel{(ii)}{\geq} (b \gamma_t + (1 - \gamma_t) \sigma_t)^2 + (1 - \gamma_t)^2 V_{t-1} \\
    &= V_t,
  \end{align*}
  where $(i)$ is our assumption on $\set{\gamma_t}$, and $(ii)$ comes from the induction hypothesis. Thus, we have $V_t \leq (b + c)^2 \gamma_t$, which proves our result for $t_0 = 1$.

  We can extend the result for $t_0 > 1$ by letting
  \begin{align*}
    c = \max\set{\sup_{t \geq t_0} \sigma_t / \gamma_t, \sqrt{\frac{V_{t_0}}{\gamma_{t_0}}} - b}.
  \end{align*}
  This would mean that at time step $t_0$, we would have $V_{t_0} \leq (b + c)^2 \gamma_{t_0}$, along with $\sigma_t \leq c \gamma_t$ for all $t \geq t_0$. We can then do induction on $t$ starting from $t_0$ as done for $t_0 = 1$.  
\end{proof}

\subsection{Proof of Theorem \ref{thm:oeuvre_clt}}
\label{apdx:oeuvre_clt_proof}
\paragraph{Elaborating on conditions.} We discuss the conditions in detail here. Condition $(i)$ assumes that the conditional variance of the loss at time $t$ is both upper and lower bounded by positive constants. While the upper bound is something we assume originally, the lower bound typically holds in non-realizable settings. In realizable settings, it can be made to hold by inserting a small amount of white noise in the labels $y_t$, which would lead to similar convergence rates without impacting the estimator performance. 

Condition $(ii)$ states that the conditional variances converge in probability to some random variable $v^*$. This condition is satisfied by strongly convex loss landscapes, where the sequence of loss functions converge in probability to the optimal loss function. However, the condition we state is more general and can hold for non-convex settings, where the sequence of loss functions can be attracted to multiple local optima. Condition $(iii)$ is a restriction on the kurtosis of the update of the OEUVRE estimator. Importantly, the kurtosis is total, i.e., it is calculated over the entire data sequence $\set{Z_i}_{i \leq t}$. The bound on the kurtosis need not be known, and this condition should be satisfied in most practical settings with light tails for the losses. Condition $(iv)$ is a restriction on $\set{\gamma_t}$ similar to that in Lemma \ref{lem:non_opt_gamma_consistency}, with the chief difference being $\gamma_t$ having to decay strictly slower than $\sigma_t$.

\paragraph{Proof of \ref{thm:oeuvre_clt}.} 
We define a triangular array $\set{U_{n,i}}$ of normalized martingale differences, with $U_{n,1} = M_1 / \Vbar_n$, and
\begin{align*}
  U_{n,i} = \sqbrac{\frac{M_i}{\Gamma_i} - \frac{M_{i-1}}{\Gamma_{i-1}}} \frac{1}{\sqrt{\Vbar_n}} = \sqbrac{\frac{\Delta_i(x_i, y_i)}{\Gamma_i} - \frac{\Delta_{i-1}(x_i, y_i)}{\Gamma_{i-1}}} \frac{1}{\sqrt{\Vbar_n}},
\end{align*}
where $\Delta_i(\cdot) = \ell_t(\cdot) - \bbE[\ell_t(x, y) \mid \calF_{t-1}$, and $\Vbar_n = \Var(M_n / \Gamma_n)$. Since $U_{n,i}$ is a martingale difference sequence for each $n$,
\begin{align*}
  \Var(\sum_{i=1}^{n} U_{n,i}) = \Var\paren{\frac{M_n}{\Gamma_n} \cdot \frac{1}{\sqrt{\Vbar_n}}} = 1,
\end{align*}
which means that each $U_{n,i}$ is normalized to have variance 1.

We also define $\Lambda_1 = M_1 / \Gamma_1 = M_1$, and
  \begin{align*}
    \Lambda_i = U_{n,i} \cdot \sqrt{\Vbar_n} = \sqbrac{\frac{\Delta_i(x_i)}{\Gamma_i} - \frac{\Delta_{i-1}(x_i)}{\Gamma_{i-1}}},
  \end{align*}
  which is the un-normalized martingale difference. Clearly, we have
  \begin{align*}
    M_n = \sum_{i = 1}^{n} \Lambda_i \implies \Vbar_n = \Var(M_n) = \sum_{i = 1}^{n} \Var(\Lambda_i).
  \end{align*}

We now state three lemmas which will help us prove the theorem.

\begin{lemma}
  \label{lem:single_term_total_var_limit}
  Let $\Var(\Delta_i(x, y)) \to \vbar^*$ and $\Var(\ell(x, y)) \in [\nu^2, b^2]$ for all $f \in \calG$. Then, 
  \begin{enumerate}
    \item $\lim\inf_i \Var(\Lambda_i) \geq \gamma_1 \nu^2 > 0$.
    \item $\Var(\Lambda_i) / \Var(\Lambda_{i-1}) \to 1$.
  \end{enumerate}
\end{lemma}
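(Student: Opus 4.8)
The plan is to reduce both claims to the behaviour of a single martingale increment through its conditional variance. Since $(M_t/\Gamma_t)_{t\ge1}$ is a martingale (Proposition~\ref{prop:mtgle_prop}), $\Lambda_i=M_i/\Gamma_i-M_{i-1}/\Gamma_{i-1}$ is a mean-zero martingale difference, so $\Var(\Lambda_i)=\bbE[\Var(\Lambda_i\mid\calF_{i-1})]$. The centering terms inside $\Delta_i$ and $\Delta_{i-1}$ are $\calF_{i-1}$-measurable and $\Gamma_i=(1-\gamma_i)\Gamma_{i-1}$, so modulo an $\calF_{i-1}$-measurable constant
\[
  \Lambda_i=\frac{1}{\Gamma_i}\bigl(\ell_i(z_i)-(1-\gamma_i)\ell_{i-1}(z_i)\bigr)=\frac{1}{\Gamma_i}\bigl(\xi_i(z_i)+\gamma_i\,\ell_{i-1}(z_i)\bigr),\qquad \xi_i(z):=\ell_i(z)-\ell_{i-1}(z).
\]
Conditionally on $\calF_{i-1}$ the sample $z_i$ is a fresh draw from $\calD$ while $f_{i-1},f_i$ are fixed, hence
\[
  \Var(\Lambda_i\mid\calF_{i-1})=\frac{1}{\Gamma_i^2}\Bigl(\gamma_i^2\,s_{i-1}^2+2\gamma_i\,c_i+w_i\Bigr),
\]
where $s_{i-1}^2$ is the variance of $\ell_{i-1}$ over a fresh sample (so $s_{i-1}^2\in[\nu^2,b^2]$ since $f_{i-1}\in\calG$, and $\bbE[s_{i-1}^2]=\Var(\Delta_{i-1}(x,y))$), $w_i:=\Var(\xi_i\mid\calF_{i-1})\le\sigma_i^2$ by the defining property of $\sigma_i$ in Proposition~\ref{prop:var_recursion}, and $c_i$ is the covariance of $\ell_{i-1}$ and $\xi_i$ over a fresh sample, with $|c_i|\le\sqrt{s_{i-1}^2 w_i}\le b\sigma_i$ by Cauchy--Schwarz.

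For part (1), I would bound the bracket below by $\bigl(\gamma_i s_{i-1}-\sqrt{w_i}\bigr)^2\ge(\gamma_i\nu-\sigma_i)_+^2$, so $\Var(\Lambda_i)\ge(\gamma_i/\Gamma_i)^2\,(\nu-\sigma_i/\gamma_i)_+^2$. The key structural observation is that $q_i:=\gamma_i/\Gamma_i$ is nondecreasing: clearing denominators, the constraint $1/\gamma_i-1/\gamma_{i-1}\le1$ imposed on $(\gamma_t)$ (the same one used in Lemma~\ref{lem:non_opt_gamma_consistency}, part of condition~(iv) of Theorem~\ref{thm:oeuvre_clt}) is exactly $\gamma_i\ge(1-\gamma_i)\gamma_{i-1}$, i.e.\ $q_i\ge q_{i-1}$, so $q_i\ge q_1=\gamma_1$ for all $i$. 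Since $\sigma_i/\gamma_i\to0$ by condition~(iv), taking $\liminf$ gives $\liminf_i\Var(\Lambda_i)\ge\gamma_1\nu^2>0$ (with the convention $\gamma_1=1$ this reads $\nu^2$, and the $i=1$ term is $\Var(\Lambda_1)=\Var(\Delta_1)\ge\nu^2$).

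For part (2), I would factor the bracket as $\gamma_i^2 s_{i-1}^2\bigl(1+2c_i/(\gamma_i s_{i-1}^2)+w_i/(\gamma_i^2 s_{i-1}^2)\bigr)$; using $|c_i|\le b\sigma_i$, $w_i\le\sigma_i^2$ and $s_{i-1}^2\ge\nu^2$, the correction factor is $1+O(\sigma_i/\gamma_i)$ with a \emph{deterministic} $O(\cdot)$ (this is where the uniform lower bound $s_{i-1}^2\ge\nu^2$ is essential), hence $1+o(1)$ by condition~(iv). Taking expectations, $\Var(\Lambda_i)=(\gamma_i^2/\Gamma_i^2)\,\bbE[s_{i-1}^2]\,(1+o(1))$ with $\bbE[s_{i-1}^2]=\Var(\Delta_{i-1}(x,y))\to\vbar^*\ge\nu^2>0$. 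Consequently $\Var(\Lambda_i)/\Var(\Lambda_{i-1})$ equals $(q_i/q_{i-1})^2$ times $\bbE[s_{i-1}^2]/\bbE[s_{i-2}^2]$ times a $1+o(1)$ factor; the middle ratio tends to $1$ since both terms converge to $\vbar^*>0$, and $q_i/q_{i-1}=\gamma_i/(\gamma_{i-1}(1-\gamma_i))\to1$ because $\gamma_i\to0$ and $\gamma_i/\gamma_{i-1}\to1$ (this holds for the weight schedules of Theorem~\ref{thm:gamma_convergence_rates}; in any case the matching lower bound $\gamma_i/\gamma_{i-1}\ge(1+\gamma_{i-1})^{-1}\to1$ already follows from $1/\gamma_i-1/\gamma_{i-1}\le1$). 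Multiplying the three convergent factors gives the claim.

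The main obstacle is controlling the cross term $c_i$: it must be shown negligible next to the leading term $\gamma_i^2 s_{i-1}^2$ in a manner that is uniform over the conditioning $\calF_{i-1}$, which is precisely where both the slow-decay condition $\sigma_i=o(\gamma_i)$ and the uniform variance lower bound $s_{i-1}^2\ge\nu^2$ are used; the only other delicate point is the mild regularity $\gamma_i/\gamma_{i-1}\to1$ of the weight sequence required for part (2).
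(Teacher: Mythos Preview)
Your proposal is correct and follows essentially the same route as the paper: you use the identical decomposition $\Lambda_i=\Gamma_i^{-1}\bigl(\gamma_i\,\ell_{i-1}(z_i)+(\ell_i-\ell_{i-1})(z_i)\bigr)$, show the second piece is negligible via $\sigma_i/\gamma_i\to0$, and derive the monotonicity of $q_i=\gamma_i/\Gamma_i$ from the constraint $1/\gamma_i-1/\gamma_{i-1}\le1$, exactly as the paper does. The only cosmetic difference is that you pass through the conditional variance and then take expectations (using $\Var(\Lambda_i)=\bbE[\Var(\Lambda_i\mid\calF_{i-1})]$), whereas the paper bounds the total variance directly; both routes hinge on the same Cauchy--Schwarz control of the cross term and the same limit $\Var(\Delta_{i-1})\to\vbar^*$, and both invoke the mild extra regularity $\gamma_i/\gamma_{i-1}\to1$ for part~(2).
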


\begin{lemma}
  \label{lem:total_vars_sums_ratio}
  Let $\set{A_i}_{i \in \bbN^+}$ be a sequence of positive real numbers such that 1) $A_i / A_{i-1} \to 1$, and 2) there exists some $m > 0$ such that $\lim\inf_i A_i \geq m$. Then,
  \begin{align*}
    R_n = \frac{\sum_{i = 1}^{n} A_i^2}{\paren{\sum_{i = 1}^{n} A_i}^2} \to 0
  \end{align*}
\end{lemma}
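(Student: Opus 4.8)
The plan is to write $S_n := \sum_{i=1}^n A_i$ and reduce the claim $R_n\to 0$ to two facts: (i) $S_n\to\infty$, and (ii) $A_n/S_n\to 0$. Fact (i) is immediate from the second hypothesis, since $\liminf_i A_i\ge m>0$ forces $A_i\ge m/2$ for all large $i$, so $S_n$ grows at least linearly. Fact (ii) is the substantive step and is where the ratio hypothesis $A_i/A_{i-1}\to1$ enters. Given both, one short splitting estimate finishes the proof.

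To establish (ii), I would fix a positive integer $J$ and note that for each $j\in\{0,\dots,J-1\}$ we can write $A_{n-j}/A_n=\prod_{l=1}^{j}\bigl(A_{n-l}/A_{n-l+1}\bigr)$, a product of $j$ factors each tending to $1$ as $n\to\infty$ by hypothesis~1; hence $A_{n-j}/A_n\to 1$ for every fixed $j$. Summing over $j$, $\sum_{j=0}^{J-1} A_{n-j}/A_n\to J$, and since $S_n/A_n=\sum_{i=1}^n A_i/A_n\ge\sum_{j=0}^{J-1}A_{n-j}/A_n$ for $n\ge J$, we obtain $\liminf_n S_n/A_n\ge J$. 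As $J$ is arbitrary, $S_n/A_n\to\infty$, i.e.\ $A_n/S_n\to 0$. I would also point out that the second hypothesis is genuinely needed and is not implied by (ii): for $A_i=i^{-2}$ one has $A_i/A_{i-1}\to1$ and $A_n/S_n\to0$, yet $S_n$ stays bounded and $R_n\not\to0$.

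For the final step, fix $\epsilon>0$; by (ii) choose $N$ with $A_i\le\epsilon S_i$ for all $i\ge N$, and set $C:=\sum_{i=1}^{N-1}A_i^2$. Then for $n\ge N$, using $S_i\le S_n$,
\begin{align*}
  \sum_{i=1}^{n}A_i^2 &= C+\sum_{i=N}^{n}A_i^2 \le C+\sum_{i=N}^{n}(\epsilon S_i)A_i \\
  &\le C+\epsilon S_n\sum_{i=N}^{n}A_i \le C+\epsilon S_n^2,
\end{align*}
so $R_n\le C/S_n^2+\epsilon$. Letting $n\to\infty$ and invoking (i) gives $\limsup_n R_n\le\epsilon$, and since $\epsilon>0$ was arbitrary, $R_n\to0$. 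The main (indeed only) obstacle is proving $A_n/S_n\to0$ — everything after that is routine bookkeeping — and even there the argument is elementary once one observes that hypothesis~1 lets a fixed-length block of recent terms $A_{n-J+1},\dots,A_n$ each be compared to $A_n$, so that the partial sum is at least $J$ times $A_n$ for arbitrarily large $J$.
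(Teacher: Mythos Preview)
Your proof is correct and takes a genuinely different route from the paper's. The paper first bounds $R_n \le \bigl(\max_{i\le n} A_i\bigr)/S_n$ and then splits into two cases according to whether the running maximum changes finitely or infinitely often; in the latter case it shows $A_n/S_n\to 0$ via the recursion $S_n/A_n = 1 + (A_{n-1}/A_n)\,S_{n-1}/A_{n-1}$ compared against an auxiliary sequence with $A_{n-1}/A_n$ replaced by $1-\epsilon$, and then argues the max-over-sum ratio inherits the same limit. Your argument sidesteps both the max bound and the case split: you prove $A_n/S_n\to 0$ directly by the fixed-length block comparison $S_n/A_n \ge \sum_{j=0}^{J-1} A_{n-j}/A_n \to J$, and then finish with a clean early/late splitting of $\sum A_i^2$ rather than controlling the maximum. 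This is shorter and more elementary, and it also avoids the paper's unjustified assertion that $A_{n-1}/A_n\le 1$ in Case~2. The paper's approach does give the slightly stronger intermediate fact $\max_{i\le n} A_i/S_n\to 0$, but that is not needed for the lemma as stated. Your counterexample $A_i=i^{-2}$ showing that hypothesis~2 is not redundant is a nice addition not present in the paper.
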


\begin{lemma}
  \label{lem:var_sum_ratio_limit}
  Let $\Var(\Delta_i(x_i, y_i) \mid \calF_{i-1}) \stackrel{\text{p}}{\to} v^*$, where $v^*$ is a random variable with domain $[\nu^2, b^2]$. Then,
  \begin{align*}
    \frac{V_n}{\Vbar_n} = \frac{\sum_{i=1}^{n} v_i}{\sum_{i=1}^{n} \vbar_i} \stackrel{\text{p}}{\to} \frac{v^*}{\vbar^*},
  \end{align*}
    where $V_n = \sum_{i=1}^{n} \Var(M_i / \Gamma_i \mid \calF_{i-1}) = \sum_{i=1}^{n} \Var(\Lambda_i / \Gamma_i \mid \calF_{i-1})$, and $V_n = \sum_{i=1}^{n} \Var(U_{n,i} \mid \calF_{i-1}) = \sum_{i=1}^{n} \Var(\Lambda_i \mid \calF_{i-1})$.
\end{lemma}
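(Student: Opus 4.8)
The plan is to reduce the claim to a weighted Cesàro (Toeplitz) argument driven by the conditional loss variances $v_i^{\mathrm{loss}}:=\Var(\ell_i(Z)\mid\calF_{i-1})$, which by hypotheses (i)--(ii) of Theorem~\ref{thm:oeuvre_clt} lie in $[\nu^2,b^2]$ and satisfy $v_i^{\mathrm{loss}}\stackrel{\text{p}}{\to}v^*$. Write $v_i:=\Var(\Lambda_i\mid\calF_{i-1})$ and $\vbar_i:=\Var(\Lambda_i)$ for the $i$-th summands of $V_n$ and $\Vbar_n$, so $\vbar_i=\bbE[v_i]$ because $\Lambda_i$ has zero conditional mean; and set $\vbar^*:=\lim_i\bbE[v_i^{\mathrm{loss}}]$, which equals $\bbE[v^*]$ (the limit existing by bounded convergence, using (i)--(ii)). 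The target $V_n/\Vbar_n\stackrel{\text{p}}{\to}v^*/\vbar^*$ is precisely the conditional-variance condition one feeds to the (mixing) martingale central limit theorem for the martingale $(M_t/\Gamma_t)$ of Proposition~\ref{prop:mtgle_prop}, so this lemma is the core of the proof of Theorem~\ref{thm:oeuvre_clt}.

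The first and main step is to pin down one term. Using $\Gamma_{i-1}=\Gamma_i/(1-\gamma_i)$ I would rewrite $\Gamma_i\Lambda_i=\gamma_i\Delta_i(x_i)+(1-\gamma_i)\paren{\Delta_i(x_i)-\Delta_{i-1}(x_i)}$; then, since $(x_i,y_i)$ is independent of $\calF_{i-1}$ while $f_i$ and $f_{i-1}$ are $\calF_{i-1}$-measurable, I can identify $\Var(\Delta_i(x_i)\mid\calF_{i-1})=v_i^{\mathrm{loss}}$, $\tau_i:=\Var(\Delta_i(x_i)-\Delta_{i-1}(x_i)\mid\calF_{i-1})=\Var(\ell_i(Z)-\ell_{i-1}(Z)\mid\calF_{i-1})\le\sigma_i^2$, and, by Cauchy--Schwarz together with (i), $\abs{\rho_i}\le b\sigma_i$ for $\rho_i:=\mathrm{Cov}(\Delta_i(x_i),\,\Delta_i(x_i)-\Delta_{i-1}(x_i)\mid\calF_{i-1})$. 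Expanding the conditional variance,
\[
  \Gamma_i^2 v_i=\gamma_i^2\,v_i^{\mathrm{loss}}+2\gamma_i(1-\gamma_i)\rho_i+(1-\gamma_i)^2\tau_i ,
\]
where the last two terms are bounded by the deterministic quantity $2\gamma_i b\sigma_i+\sigma_i^2=\gamma_i^2\paren{2b\sigma_i/\gamma_i+\sigma_i^2/\gamma_i^2}$, which is $o(\gamma_i^2)$ precisely because condition (iv) strengthens the schedule to $\gamma_i=\omega(\sigma_i)$. Since $v_i^{\mathrm{loss}}\ge\nu^2>0$, this yields $v_i=c_iv_i^{\mathrm{loss}}+\eta_i$ with $c_i:=\gamma_i^2/\Gamma_i^2$ and $\abs{\eta_i}\le c_i\epsilon_i$ for a deterministic $\epsilon_i\to0$; taking total expectations gives $\vbar_i=c_i\bbE[v_i^{\mathrm{loss}}]+\eta_i'$ with the same bound on $\abs{\eta_i'}$.

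Next I would check that the weights accumulate, $S_n:=\sum_{i\le n}c_i\to\infty$. The telescoping identity $\gamma_i/\Gamma_i=1/\Gamma_i-1/\Gamma_{i-1}$ together with $\sum_i\gamma_i=\infty$ (Equation~\ref{eq:gamma_constraint} keeps $\gamma_i$ of order at least $1/i$) gives $1/\Gamma_n\to\infty$, and $\gamma_j\ge1/j$ for all large $j$ gives $\Gamma_i\le\mathrm{const}\cdot\prod_{2\le j\le i}(1-1/j)=\calO(1/i)$, so $c_i=(\gamma_i/\Gamma_i)^2$ is bounded below by a positive constant eventually and hence $S_n\to\infty$. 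Now I would apply the Toeplitz lemma to $V_n/S_n$ and $\Vbar_n/S_n$ using the one-term estimates: the deterministic parts $\tfrac1{S_n}\sum_i c_i\bbE[v_i^{\mathrm{loss}}]\to\vbar^*$ and $\tfrac1{S_n}\sum_i c_i\epsilon_i\to0$ are standard (convergent summands, $S_N/S_n\to0$ for fixed $N$), while for the random part $\tfrac1{S_n}\sum_i c_i v_i^{\mathrm{loss}}\stackrel{\text{p}}{\to}v^*$ I would argue in $L^1$: given $\epsilon,\delta>0$ pick $N$ with $\bbP(\abs{v_i^{\mathrm{loss}}-v^*}>\epsilon)<\delta$ for $i>N$ (hypothesis (ii)), and since $\abs{v_i^{\mathrm{loss}}-v^*}\le b^2-\nu^2$,
\[
  \bbE\,\abs{\tfrac1{S_n}\textstyle\sum_{i\le n}c_i(v_i^{\mathrm{loss}}-v^*)}\le\frac{(b^2-\nu^2)S_N}{S_n}+\epsilon+(b^2-\nu^2)\delta ,
\]
so letting $n\to\infty$ and then $\epsilon,\delta\downarrow0$ yields $L^1$, hence in-probability, convergence. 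Combining, $V_n/S_n\stackrel{\text{p}}{\to}v^*$ and $\Vbar_n/S_n\to\vbar^*\ge\nu^2>0$, so $V_n/\Vbar_n=(V_n/S_n)/(\Vbar_n/S_n)\stackrel{\text{p}}{\to}v^*/\vbar^*$ by the continuous mapping theorem.

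The step I expect to be hardest is the single-term expansion: the clean factorization $v_i=c_iv_i^{\mathrm{loss}}(1+o(1))$, which is what makes the $c_i$ cancel in the ratio and leaves exactly $v^*/\vbar^*$ in the limit, depends entirely on $\gamma_i=\omega(\sigma_i)$ forcing both the covariance $\rho_i$ and the difference variance $\tau_i$ to be lower order than $\gamma_i^2$; under only $\gamma_i=\Omega(\sigma_i)$ (as in Theorem~\ref{thm:gamma_convergence_rates}) the cross term is merely $\calO(\gamma_i\sigma_i)$ and $V_n$ could pick up contributions not proportional to $v^*$. A secondary delicate point is the divergence $S_n\to\infty$, which relies on the lower bound $\gamma_i\gtrsim1/i$ built into Equation~\ref{eq:gamma_constraint}; were $\sum_i c_i$ to converge, the ratio would retain the influence of the initial terms and the asserted limit would be false.
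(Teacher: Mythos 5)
Your proof is correct and follows essentially the same route as the paper's: factor out $c_i = \gamma_i^2/\Gamma_i^2$ from the single-term conditional variance $v_i$, show the remainder is $v_i^{\mathrm{loss}}(1+o(1))$ using $\gamma_i = \omega(\sigma_i)$, establish $\sum_i c_i \to \infty$, and then run a weighted-averaging argument before dividing. The paper bounds the remaining factor $u_i$ via the two-sided inequality $(\sqrt{v_i^{\mathrm{loss}}}-\sigma_i/\gamma_i)^2 \le u_i \le (\sqrt{v_i^{\mathrm{loss}}}+\sigma_i/\gamma_i)^2$ where you instead expand the bilinear form with an explicit covariance term — an equivalent estimate — and the paper closes with the Stolz–Cesàro theorem applied to $U_n = \sum_i c_i\,\bbE\abs{u_i-v^*}$ versus $W_n = \sum_i c_i$, where you instead carry out a Toeplitz/truncation $L^1$ argument; the two are interchangeable for showing $V_n/W_n \stackrel{\mathrm{p}}{\to} v^*$ and $\Vbar_n/W_n \to \vbar^*$. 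No gap.
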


We can now prove the original theorem.
\begin{proof}[Proof of Theorem \ref{thm:oeuvre_clt}]

  We prove the following two conditions which are sufficient for the martingale CLT to hold \cite{hall2014martingale}:
  \begin{enumerate}
    \item \textit{Lindeberg condition}: For all $\epsilon > 0$, $\sum_{i=1}^{n} \bbE[U_{n,i}^2 \bbI(\abs{U_{n,i}} > \epsilon)] \to 0$.

      For each term in the Lindeberg condition, we have the following inequality:
      \begin{equation*}
        \begin{aligned}
          \bbE[U_{n,i}^2 \bbI(\abs{U_{n,i}} > \epsilon)] &\stackrel{(i)}{\leq} \sqrt{\bbE[U_{n,i}^4]} \cdot \sqrt{\bbE[\bbI(\abs{U_{n,i}} > \epsilon)^2]} \\
          &\leq \sqrt{\bbE[U_{n,i}^4]} \cdot \sqrt{\bbE[\bbI(U_{n,i}^4 > \epsilon^4)]} \\
          &\leq \sqrt{\bbE[U_{n,i}^4]} \cdot \sqrt{\bbP(U_{n,i}^4 > \epsilon^4)} \\
          &\stackrel{(ii)}{\leq} \sqrt{\bbE[U_{n,i}^4]} \cdot \sqrt{\frac{\bbE[U_{n,i}^4]}{\epsilon^4}} \\
          &= \frac{\bbE\sqbrac{U_{n,i}^4}}{\epsilon^2}
        \end{aligned}
      \end{equation*}
      $(i)$ is a consequence of Holder's inequality, and $(ii)$ is obtained through Markov's inequality applied to the final term. Now, if $U_{n,i}$ has kurtosis upper bounded by $\kappa_{\max}$, we would have
      \begin{equation*}
        \begin{aligned}
          \bbE[U_{n,i}^2 \bbI(\abs{U_{n,i}} > \epsilon)] &\leq \frac{\kappa_{\max} \paren{\bbE[U_n,i^2]}^2}{\epsilon^2} \\
          &= \frac{\kappa_{\max}}{\epsilon^2} \cdot \Var(U_{n,i})^2 \\
          &= \frac{\kappa_{\max}}{\epsilon^2} \cdot \paren{\frac{\Var(\Lambda_i)}{\Vbar_n}}^2 \\
          &= \frac{\kappa_{\max}}{\epsilon^2} \cdot \paren{\frac{\Var(\Lambda_i)}{\sum_{j=1}^{n} \Var(\Lambda_j)}}^2 \\
        \end{aligned}
      \end{equation*}

      The sum of all these expectations is thus upper bounded by
      \begin{equation*}
        \begin{aligned}
          \sum_{i = 1}^{n} \bbE\sqbrac{U_{ni}^2 \bbI(\abs{U_{ni} > \epsilon})} &\leq \frac{\kappa_{\max}}{\epsilon^2} \frac{\sum_{i=1}^{n} \Var(\Lambda_i)^2}{\paren{\sum_{j=1}^{n} \Var(\Lambda_j)}^2} \\
        \end{aligned}
      \end{equation*}

      We now apply Lemma \ref{lem:single_term_total_var_limit} followed by Lemma \ref{lem:total_vars_sums_ratio} to get
      \begin{align*}
        \frac{\sum_{i=1}^{n} \Var(\Lambda_i)^2}{\paren{\sum_{j=1}^{n} \Var(\Lambda_j)}^2} \to 0,
      \end{align*}
      which proves the Lindeberg condition
    \item \textit{Convergence of quadratic variation}: $\sum_{i=1}^{n} \Var(U_{n,i} \mid \calF_{i-1}) \to \eta^2$, where $\eta^2$ is a bounded random variable with $\bbP(\eta^2 > 0) = 1$.

      From Lemma \ref{lem:var_sum_ratio_limit}, we infer that
      \begin{align*}
        \sum_{i=1}^{n} \Var(U_{n,i} \mid \calF_{i-1}) = \frac{1}{\Vbar_n} \sum_{i=1}^{n} \Var(\Lambda_i \mid \calF_{i-1}) = \frac{V_n}{\Vbar_n} \stackrel{\text{p}}{\to} \frac{v^*}{\vbar^*},
      \end{align*}
      where $v^* / \vbar^*$ is a strictly positive bounded random variable from our assumption that $\Var(\ell(z)) \in [\nu^2, b^2]$ for all $\ell \in \calG$.
  \end{enumerate}
  Since the two conditions hold, the martingale CLT follows.
\end{proof}

\subsubsection{Proofs of supporting lemmas}
\begin{proof}[Proof of Lemma \ref{lem:single_term_total_var_limit}]
  We observe that
  \begin{align*}
    \Var(\Lambda_i) &= \Var\paren{\frac{\Delta_i(x_i, y_i)}{\Gamma_i} - \frac{\Delta_{i-1}(x_i, y_i)}{\Gamma_{i-1}}} \\
    &= \Var\paren{\frac{\Delta_i(x_i, y_i)}{\Gamma_i} - \frac{(1 - \gamma_i) \Delta_{i-1}(x_i, y_i)}{\Gamma_i}} \\
    &= \Var\paren{\frac{\gamma_i \Delta_{i-1}(x_i, y_i)}{\Gamma_i} + \frac{\Delta_i(x_i, y_i) - \Delta_{i-1}(x_i, y_i)}{\Gamma_i}} \\
    &= \frac{\gamma_i^2}{\Gamma_i^2} \Var\paren{\underbrace{\Delta_{i-1}(x_i, y_i)}_{\text{Term 1}} + \underbrace{\frac{\Delta_i(x_i, y_i) - \Delta_{i-1}(x_i, y_i)}{\Gamma_i}}_{\text{Term 2}}}
  \end{align*}
  We calculate the variances of the two terms separately. 
  \begin{enumerate}
    \item Term 1: We have
      \begin{align*}
        \Var(\Delta_{i-1}(x_i, y_i)) &= \vbar_{i-1}.
      \end{align*}
      From our assumption of $\min_{f \in \calG} \Var(f(x, y)) \geq \nu^2$, we have $\vbar_{i-1} \geq \nu^2$.
    \item Term 2: We have
      \begin{align*}
        \Var(\Delta_i(x, y) - \Delta_{i-1}(x, y)) &= \Var(\Delta_i(x, y) - \Delta_{i-1}(x, y)) \\
        &= \bbE\sqbrac{\paren{(\ell_i(x, y) - \ell_{i-1}(x, y)) - (\bbE[\ell_i(x, y) \mid \calF_{i-1}] - \bbE[\ell_{i-1}(x, y) \mid \calF_{i-1}])}^2} \\
        &= \bbE\sqbrac{(\ell_i(x, y) - \ell_{i-1}(x, y))^2} + \bbE\sqbrac{(\bbE[\ell_i(x, y) \mid \calF_{i-1}] - \bbE[\ell_{i-1}(x, y) \mid \calF_{i-1}])^2} \\
        &\quad -2\bbE\sqbrac{(\ell_i(x, y) - \ell_{i-1}(x, y))(\bbE[\ell_i(x, y) \mid \calF_{i-1}] - \bbE[\ell_{i-1}(x, y) \mid \calF_{i-1}])} \\
      \end{align*}
      Simplifying the third term, we get
      \begin{align*}
        &\bbE\sqbrac{(\ell_i(x, y) - \ell_{i-1}(x, y))(\bbE[\ell_i(x, y) \mid \calF_{i-1}] - \bbE[\ell_{i-1}(x, y) \mid \calF_{i-1}])} \\
        &\quad= \bbE\sqbrac{\bbE\sqbrac{(\ell_i(x, y) - \ell_{i-1}(x, y))(\bbE[\ell_i(x, y) \mid \calF_{i-1}] - \bbE[\ell_{i-1}(x, y) \mid \calF_{i-1}])}\mid \calF_{i-1}} \\
        &\quad= \bbE\sqbrac{(\bbE[\ell_i(x, y) \mid \calF_{i-1}] - \bbE[\ell_{i-1}(x, y) \mid \calF_{i-1}]) \cdot \bbE\sqbrac{(\ell_i(x, y) - \ell_{i-1}(x, y))}\mid \calF_{i-1}} \\
        &\quad= \bbE\sqbrac{(\bbE[\ell_i(x, y) \mid \calF_{i-1}] - \bbE[\ell_{i-1}(x, y) \mid \calF_{i-1}])^2} \\
      \end{align*}
      Substituting this term back in the variance expression, we get
      \begin{align*}
        \Var(\Delta_i(x, y) - \Delta_{i-1}(x, y)) &= \bbE\sqbrac{(\ell_i(x, y) - \ell_{i-1}(x, y))^2} - \bbE\sqbrac{(\bbE[\ell_i(x, y) \mid \calF_{i-1}] - \bbE[\ell_{i-1}(x, y) \mid \calF_{i-1}])^2} \\
        &\leq \bbE\sqbrac{(\ell_i(x, y) - \ell_{i-1}(x, y))^2} \stackrel{(i)}{\leq} \sigma_i^2,
      \end{align*}
      where $(i)$ comes from our uniform stability assumption on the learning algorithm.

      The second term's variance is thus upper bounded as
      \begin{align*}
        \Var\paren{\frac{\Delta_i(x, y) - \Delta_{i-1}(x, y)}{\gamma_i}} &\leq \frac{\sigma_i^2}{\gamma_i^2}.
      \end{align*}
      From our choice of $\gamma_i$ such that $\gamma_i = \omega(\sigma_i)$, we have that $\Var\paren{\frac{\Delta_i(x, y) - \Delta_{i-1}(x, y)}{\gamma_i}} \to 0$.
  \end{enumerate}

  We thus infer that the variance is completely dominated by the first term as $i$ increases. Since we assume that $\vbar_i \to \vbar^* > 0$, we thus have
  \begin{align}
    \label{eq:single_term_var_limit}
    \Var\paren{\Delta_{i-1}(x_i, y_i) + \frac{\Delta_i(x_i, y_i) - \Delta_{i-1}(x_i, y_i)}{\Gamma_i}} \to \vbar^* > 0.
  \end{align}

  \begin{align*}
    \frac{1}{\gamma_i} - \frac{1}{\gamma_{i-1}} &\stackrel{(i)}{\leq} 1 \\
    \gamma_{i-1} - \gamma_i &\leq \gamma_{i-1} \gamma_i \\
    \gamma_{i-1}(1 - \gamma_i) &\leq \gamma_i \\
    1 &\leq \frac{\gamma_i}{\gamma_{i-1}(1 - \gamma_i)} \\
    \gamma_{i-1} &\leq \frac{\gamma_i}{(1 - \gamma_i)} \\
    \frac{\gamma_{i-1}}{\Gamma_{i-1}} &\leq \frac{\gamma_i}{\Gamma_i},
  \end{align*}
  where $(i)$ comes from our assumption on $\set{\gamma_i}$. Thus, $\gamma_i / \Gamma_i$ is a non-increasing sequence. Coupled with \ref{eq:single_term_var_limit}, we can conclude that $\lim\inf_i \Var(\Lambda_i) \geq \gamma_1 \vbar^* > 0$.

  Now, let us consider $\Var(\Lambda_i) / \Var(\Lambda_{i-1})$. We have
  \begin{align*}
    \frac{\Var(\Lambda_i)}{\Var(\Lambda_{i-1})} &= \frac{\gamma_i^2}{\Gamma_{i}^2} \cdot \frac{\Gamma_{i-1}^2}{\gamma_{i-1}^2} \cdot \frac{\Var\paren{\Delta_{i-1}(x_i, y_i) + \frac{\Delta_i(x_i, y_i) - \Delta_{i-1}(x_i, y_i)}{\Gamma_i}}}{\Var\paren{\Delta_{i-2}(x_{i-1}, y_{i-1}) + \frac{\Delta_{i-1}(x_{i-1}, y_{i-1}) - \Delta_{i-2}(x_{i-1}, y_{i-1})}{\Gamma_{i-1}}}} \\
    &= \paren{\frac{\gamma_i}{\gamma_{i-1}(1 - \gamma_i)}}^2 \cdot  \frac{\Var\paren{\Delta_{i-1}(x_i, y_i) + \frac{\Delta_i(x_i, y_i) - \Delta_{i-1}(x_i, y_i)}{\Gamma_i}}}{\Var\paren{\Delta_{i-2}(x_{i-1}, y_{i-1}) + \frac{\Delta_{i-1}(x_{i-1}, y_{i-1}) - \Delta_{i-2}(x_{i-1}, y_{i-1})}{\Gamma_{i-1}}}} \\
  \end{align*}
  From \ref{eq:single_term_var_limit}, we know that the second term goes to one. For the first term, we know from proposition [PROPOSITION] that $\gamma_i / \gamma_{i-1} \to 1$ and $(1 - \gamma_i) \to 1$, and thus, $\gamma_i / (\gamma_{i-1}(1 - \gamma_i)) \to 1$. Thus, we conclude that $\Var(\Lambda_i) / \Var(\Lambda_{i-1}) \to 1$.
\end{proof}

\begin{proof}[Proof of Lemma \ref{lem:total_vars_sums_ratio}]
We have
  \begin{equation*}
    \begin{aligned}
      R_n = \frac{\sum_{i=1}^{n} A_i^2}{\paren{\sum_{i=1}^{n} A_i}^2} &\leq \frac{\paren{\sum_{i=1}^{n} A_i} \cdot \max_i \in [n] A_i}{\paren{\sum_{i=1}^{n} A_i}^2} \\
      &= \frac{\max_{i \in [n]} A_i}{\sum_{i=1}^{n} A_i} \\
    \end{aligned}
  \end{equation*}
  We now consider two cases based on how many times $A_{\max, n} = \max_{i \in [n]} A_i$ changes with $n$.

  \begin{enumerate}
    \item \textit{Case 1: $A_{\max, n}$ changes finitely many times}: In this case, there exists some $n_0$ such that for all $n \geq n_0$, $A_{\max, n} = A_{\max, n_0}$. Since $\lim\inf_i A_i \geq m$ for all $i$, we would have $\sum_{i=1}^{n} A_i \to \infty$. Thus, $A_{\max, n} / \sum_{i=1}^{n} A_i \to 0$.

    \item \textit{Case 2: $A_{\max, n}$ changes infinitely often}: Let us consider the quantity $A_n / (\sum_{i=1}^{n} A_i)$, i.e., the ratio of the last term to the sum of all the terms. We have two situations depending on whether $A_n$ is the maximum term:
    \begin{enumerate}[label = (\roman*)]
      \item If $A_n$ is the maximum term, then $A_{\max, n} / (\sum_{i=1}^{n} A_i) = A_n / (\sum_{i=1}^{n} A_i)$.
      \item If $A_n$ is not the maximum term, then $A_{\max, n} = A_{\max, n-1}$, and $A_{\max, n} / (\sum_{i=1}^{n} A_i) \leq A_{\max, n} / (\sum_{i=1}^{n-1} A_i)$.
    \end{enumerate}
    Thus, the sequence $A_{\max, n} / (\sum_{i=1}^{n} A_i)$ can increase only when $A_{\max, n} = A_n$, decreasing otherwise. If we show that $A_n / (\sum_{i=1}^{n} A_i) \to 0$, then we can conclude that $A_{\max, n} / (\sum_{i=1}^{n} A_i) \to 0$ as well.

    To do this, we define $S_n = (\sum_{i=1}^{n} A_i) / A_n$, which is the inverse of the ratio defined above.
    We can express $S_n$ through the following recursion:
    \begin{align*}
      S_n = \begin{cases} 1, \quad& n = 1 \\
      1 + \frac{A_{n-1}}{A_n} S_{n-1}, \quad& n > 1 \end{cases}
    \end{align*}
    Consider some arbitrary $\epsilon > 0$. We know that $A_{n-1}/A_n \leq 1$, with $A_{n-1}/A_n \to 1$. Thus, there exists some $n_0(\epsilon)$ such that for all $n \geq n_0(\epsilon)$, $A_{n-1}/A_n \geq 1 - \epsilon$. Let $S_n'$ be another sequence defined for $n \geq n_0(\epsilon)$ as
    \begin{align*}
      S_n' = \begin{cases} 1, \quad& n = n_0(\epsilon) \\
      1 + (1 - \epsilon) S_{n-1}', \quad& n > n_0(\epsilon) \end{cases}
    \end{align*}
    Clearly, $S_n' \leq S_n$ for all $n \geq n_0(\epsilon)$. We also note that $\lim_{n \to \infty} S_n' = \frac{1}{\epsilon}$. Thus, we have
    \begin{align*}
      \frac{1}{\epsilon} = \lim_{n \to \infty} S_n' \leq \lim_{n \to \infty} S_n.
    \end{align*}
    Since $\epsilon$ is arbitrary, we have $\lim_{n \to \infty} S_n = \infty$. Thus, $A_n / \sum_{i=1}^{n} A_i \to 0$.
  \end{enumerate}
  Since we have shown that $A_{\max, n} / (\sum_{i=1}^{n} A_i \to 0)$ for both cases, we can conclude that $R_n \to 0$.
\end{proof}

\begin{proof}[Proof of Lemma \ref{lem:var_sum_ratio_limit}]
  Let $v_i = \Var(\Lambda_i \mid \calF_{i-1})$ and $\vbar_i = \Var(\Lambda_i)$. We then have $V_n = \sum_{i=1}^{n} v_i$ and $\Vbar_n = \sum_{i=1}^{n} \vbar_i$. We also have
  \begin{align*}
    v_i &= \Var(\Lambda_i \mid \calF_{i-1}) \\
    &= \Var\paren{\frac{\Delta_i(x_i, y_i)}{\Gamma_i} - \frac{\Delta_{i-1}(x_i, y_i)}{\Gamma_{i-1}} \mid \calF_{i-1}} \\
    &\stackrel{(i)}{=} \frac{\gamma_i^2}{\Gamma_i^2} \Var\paren{\Delta_i(x_i, y_i) + \frac{(\Delta_i(x_i, y_i) - \Delta_{i-1}(x_i, y_i))}{\gamma_i} \mid \calF_{i-1}},
  \end{align*}
  where $(i)$ comes from a decomposition similar to that in Lemma \ref{lem:single_term_total_var_limit}. For notational convenience we use 
  \begin{align*}
    u_i = \Var\paren{\Delta_i(x_i, y_i) + \frac{(\Delta_i(x_i, y_i) - \Delta_{i-1}(x_i, y_i))}{\gamma_i} \mid \calF_{i-1}}
  \end{align*}

  From our assumption that $\gamma_i = \omega(\sigma_i)$, we know that $\sigma_i / \gamma_i \to 0$. Moreover, as $\Var(\ell(x, y) \in [\nu^2, b^2]$ for all $\ell \in \calG$, we know that $\Var(\Delta_i(x_i, y_i)) \geq \nu^2$ for all $i$. Thus, there should be some $i_0$ such that for all $i \geq i_0$, $\sigma_i / \gamma_i < \nu^2$, where we would have
  We then have
  \begin{align*}
    \paren{\sqrt{\Var(\Delta_i(x_i, y_i) \mid \calF_{t-1})} - \frac{\sigma_i}{\gamma_i}}^2 \leq u_i \leq \paren{\sqrt{\Var(\Delta_i(x_i, y_i) \mid \calF_{t-1})} + \frac{\sigma_i}{\gamma_i}}^2
  \end{align*}
  As $\sigma_i / \gamma_i \to 0$, we would thus have $u_i \stackrel{\text{p}}{\to} v^*$.

  We also note that $u_i$ is bounded from above by $(b + \sup_i \sigma_i / \gamma_i)$, where the supremum is finite since $\sigma_i, \gamma_i > 0$ and $\sigma_i / \gamma_i \to 0$. Thus, we would have $u_i \stackrel{L_1}{\to} v^*$, which implies that $\bbE[\abs{u_i - v^*}] \to 0$.

  Let us define two sequences:
  \begin{align*}
    U_n &= \sum_{i=1}^{n} \frac{\gamma_i^2}{\Gamma_i^2} \bbE[\abs{u_i -v^*}], \quad \text{and} \quad W_n = \sum_{i=1}^{n} \frac{\gamma_i^2}{\Gamma_i^2}
  \end{align*}
  We note that $W_n$ is a monotonically increasing sequence, and from our observation that $\gamma_i^2 / \Gamma_i^2$ is an increasing sequence in Lemma \ref{lem:single_term_total_var_limit}, we have $W_n \to \infty$. We then have, for $n > 1$,
  \begin{align*}
    \frac{U_n - U_{n-1}}{W_n - W_{n-1}} &= \bbE[\abs{u_i - v^*}] \to 0
  \end{align*}
  Thus, by the Stolz-Cesaro theorem, we have $U_n / W_n \to 0$. By repeated application of the triangle inequality, we have
  \begin{align*}
    \bbE\sqbrac{\abs{\frac{V_n}{W_n} - v^*}} &= \bbE\sqbrac{\abs{\frac{\sum_{i=1}^{n} \gamma_i^2 u_i / \Gamma_i^2}{W_n} - v^*}} \\
    &\leq \frac{1}{W_n} \cdot \sum_{i=1}^{n} \frac{\gamma_i^2}{\Gamma_i^2} \bbE[\abs{u_i - v^*}] = \frac{U_n}{W_n} \to 0
  \end{align*}
  Thus, we have $V_n / W_n \stackrel{L_1}{\to} v^*$, which in turn implies convergence in probability.

  By a similar argument but for total variance, we obtain $\Vbar_n / W_n \to \vbar^* = \bbE[v^*]$ (since $\vbar_n$ is a deterministic quantity, $\Vbar_n / W_n$ is also deterministic). By applying Slutsky's theorem, we thus get
  \begin{align*}
    \frac{V_n}{\Vbar_n} = \frac{V_n / W_n}{\Vbar_n / W_n} \to \frac{v^*}{\vbar^*}
  \end{align*}
\end{proof}

\subsection{Proof of Theorem \ref{thm:oeuvre_concentration}}
\label{apdx:oeuvre_concentration_proof}

\paragraph{Fixed-time concentration bound}
\begin{proof}
  We consider the martingale $M_t / \Gamma_t$ defined on the filtration $\set{\calF_{t}}$. The corresponding martingale difference sequence is given by
  \begin{align*}
    U_t &= \frac{M_t}{\Gamma_t} - \frac{M_{t-1}}{\Gamma_{t-1}} \\
    &= \frac{\Delta_t(z_t) - (1 - \gamma_t) \Delta_{t-1}(z_t)}{\Gamma_t} \\
    &= \frac{\gamma_t \Delta_t(z_t)}{\Gamma_t} + \frac{(\Delta_t(z_t) - \Delta_{t-1}(z_t))}{\Gamma_{t-1}}
  \end{align*}
  We note that the first term is bounded between an interval of size $[-b \gamma_t / \Gamma_t, b \gamma_t / \Gamma_t]$. From our assumption of uniform stability, the second term is bounded within an interval of size $[-\sigma_t / \Gamma_{t-1}, \sigma_t / \Gamma_{t-1}]$. Thus, the sum of the two terms is bounded between $[-(b \gamma_t + (1 - \gamma_t) \sigma_t) / \Gamma_t, (b \gamma_t + (1 - \gamma_t) \sigma_t) / \Gamma_t]$. From the Azuma-Hoeffding inequality \cite{wainwright2019high}, we thus have
  \begin{align*}
    \bbP\paren{\abs{\frac{M_T}{\Gamma_T}} \geq \frac{\epsilon}{\Gamma_T}} &\leq 2 \exp\paren{-\frac{2 \epsilon^2 / \Gamma_T^2}{4 \sum_{t=1}^{T} (b \gamma_t + (1 - \gamma_t) \sigma_t)^2 / \Gamma_t^2}} \\
    &= 2 \exp\paren{-\frac{\epsilon^2 / \Gamma_T^2}{2 \Vup_T / \Gamma_T^2}} \\
    \implies \bbP\paren{\abs{M_T} \geq \epsilon} &\leq 2 \exp\paren{-\frac{\epsilon^2 }{2 \Vup_T}}
  \end{align*}
\end{proof}

\paragraph{Time-uniform concentration bound}
\begin{proof}
  From the master theorem from \cite{howard2020time}, we know that for a martingale with sub-Gaussian increments $U_t$, we have
  \begin{align*}
    \bbP\paren{\exists t \in [T]: \abs{\sum_{i=1}^{T} U_T} \geq \frac{x}{2} + \frac{x}{m} V_t} &\leq 2\exp\paren{-\frac{x^2}{2m}},
  \end{align*}
  where $U_t = M_t / \Gamma_t - M_{t-1} / \Gamma_{t-1}$, and $V_T = \sum_{t=1}^{T} \Var(U_t \mid \calF_{t-1})$. For our martingale $M_t / \Gamma_t$, we know that $\Vup_t \Gamma_t^2 \geq V_t$ for all $t$, which implies that
  \begin{align*}
    \bbP\paren{\exists t \in [T]: \abs{\sum_{i=1}^{T} U_T} \geq \frac{x}{2} + \frac{x}{m} \frac{\Vup_t}{\Gamma_t^2}} &\leq \bbP\paren{\exists t \in [T]: \abs{\sum_{i=1}^{T} U_T} \geq \frac{x}{2} + \frac{x}{m} V_t} \\
    &\leq 2\exp\paren{-\frac{x^2}{2m}} \\
    \implies \bbP\paren{\exists t \in [T]: \abs{\frac{M_t}{\Gamma_t}} \geq \frac{x}{2} + \frac{x}{m} \frac{\Vup_t}{\Gamma_t^2}} &\leq 2\exp\paren{-\frac{x^2}{2m}} \\
    \bbP\paren{\exists t \in [T]: \abs{M_t} \geq \frac{x}{2} \Gamma_t + \frac{x}{m} \Vup_t \Gamma_t} &\leq 2\exp\paren{-\frac{x^2}{2m}} \\
  \end{align*}
  Let $x^2/2m = c$, which means that $m = x^2 / 2c$. The bound inside the probability at time $T$ is
  \begin{align*}
    \frac{x}{2} \Gamma_T + \frac{x}{m} \Vup_T \Gamma_T &= \frac{x}{2} \Gamma_T + \frac{2c}{x} \cdot \frac{\Vup_T}{\Gamma_T}.
  \end{align*}
  Minimizing this quantity w.r.t. $x$, we get
  \begin{align*}
    x = \frac{\sqrt{2c \Vup_T}}{\Gamma_T}, \quad \text{and} \quad m = \frac{c \Vup_T}{\Gamma_T^2}
  \end{align*}
  Thus, our bound becomes
  \begin{align*}
      \bbP\paren{\exists t \in [T]: \abs{M_t} \geq \sqrt{\frac{c}{2 V_T}} \cdot \frac{(V_T \Gamma_t^2 + V_t \Gamma_t^2)}{\Gamma_T \Gamma_t}} &\leq 2\exp\paren{-c}
  \end{align*}

\end{proof}

\subsection{Proof of Lemma \ref{lem:adaptive_const_est}}
\label{apdx:adaptive_const_est_proof}

\begin{proof}
  Let
  \begin{align*}
    V_{1} &= V_{1}' = b^{2} \\
    V_{t} &= (\gamma_{t} b + (1 - \gamma_{t}) \gamma_{t})^{2} + V_{t-1} \\
    V_{t}' &= (\gamma_{t} b + (1 - \gamma_{t}) c \gamma_{t})^{2} + V_{t-1}' \\
  \end{align*}

  We consider two cases:
  \begin{enumerate}
          \item $c \geq 1$: Here, we have
          \begin{align*}
            V_{1} &= b^{2} = V_{1}'
            V_{t} - V_{t-1} &= (\gamma_{t} b + (1 - \gamma_{t}) \sigma_{t})^{2} \leq (\gamma_{t} b + (1 - \gamma_{t}) c \sigma_{t})^{2} = V_{t}' - V_{t-1}'
          \end{align*}
          Adding the inequalities up for $t \in \set{1, \ldots, T}$, we get $V_{T} \leq V_{T}'$.
          \item $c < 1$: Here, we have
          \begin{align*}
            V_{1} &= b^{2} < b^{2} / c^{2} = V_{1}' / c^{2} \\
            V_{t} - V_{t-1} &= (\gamma_{t} b + (1 - \gamma_{t}) \sigma_{t})^{2} = \frac{(\gamma_{t} bc + (1 - \gamma_{t}) c \sigma_{t})^{2}}{c^{2}} \\
            &\leq \frac{(\gamma_{t} b + (1 - \gamma_{t}) c \sigma_{t})^{2}}{c^{2}} = V_{t}' - V_{t-1}'
          \end{align*}
          Adding the inequalities up for $t \in \set{1, \ldots, T}$, we get $V_{T} \leq V_{T}' / c^{2}$.
  \end{enumerate}
\end{proof}

\section{Implementation Details}
\label{apdx:implementation_details}

\paragraph{Metrics.} Let $\overline{\ell}_t = \bbE[\ell_t(Z) \mid \calF_{t-1}]$ be the true expected loss of the model at time $t$, and $\hat{\ell}_t$ be the estiamted loss. We consider the following metrics to compare the performance of different estimators:
\begin{itemize}
  \item \textit{Root mean squared error (MSE)}: $\sqrt{\frac{1}{T} \sum_{t=1}^{T} (\hat{\ell}_t - \overline{\ell_t})^2}$.
  \item \textit{Mean absolute error (MAE)}: $\frac{1}{T} \sum_{t=1}^{T} \abs{\hat{\ell}_t - \overline{\ell_t}}$.
  \item \textit{Bias}: $\frac{1}{T} \sum_{t=1}^{T} (\hat{\ell}_t - \overline{\ell_t})$.
\end{itemize}

\paragraph{Baseline estimators.} We consider the following hyperparameter sweeps for the baseline estimators:
\begin{itemize}
  \item \textit{Sliding window (SW)}: $\text{Window size } \in \set{10, 50, 50, 100, 200, 400, 600, 800, 1000}$.
  \item \textit{Exponential moving average (EMA)}: $\text{Decay factor } \in \set{0.1, 0.05, 0.01, 0.005, 0.001}$.
  \item \textit{Fading factor prequential (FFPreq)}: $\text{Decay factor } \in \set{0.8, 0.9, 0.95, 0.99, 0.999, 0.9999, 0.99999}$.
  \item \textit{ADWIN}: $\text{Sensitivity } \delta \in \set{10^{-2}, 10^{-3}, 10^{-4}, 10^{-5}, 10^{-6}} \cup \set{5 \cdot 10^{-2}, 5 \cdot 10^{-3}, 5 \cdot 10^{-4}, 5 \cdot 10^{-5}} \cup \set{0.1, 0.2, \ldots, 0.9}$.
\end{itemize}

At time step $t$, these estimators are given the prequential evaluation $\ell_t(z_t)$ for update.

\subsection{Linear Regression}
\paragraph{Data.} We generate synthetic data for $d$-dimensional linear regression by first sampling the true weight $\bfw^*$ uniformly from the $d$-dimensional unit hypersphere. We then sample $\bfx_t \sim \calN(0, \Sigma)$, where $\Sigma \in \bbR^{d \times d}$ is a covariance matrix such that $\Sigma_{ii} = 1$ for all $i \in [d]$. We then set $y_t = (\bfw^*)^T \bfx_t + \epsilon$, where $\epsilon \sim \calN(0, \sigma^2)$ We vary the standard deviation of the noise, $\sigma \in \set{0.005, 0.05, 0.5}$.

\paragraph{Loss.} We consider the scaled mean squared error $\ell_t(\hat{y}, y) = (y - \hat{y})^2 / d$ as the loss function. The division by $d$ ensures that we can use the same learning rate across all dimensions and noise levels. We set the true loss at time $t$ as $\overline{\ell}_t = (\bfw_t - \bfw^*)^T \Sigma (\bfw_t - \bfw^*) + \sigma^2$, where $\bfw_t$ is the weight vector of the model at time $t$. All estimators try to estimate this true loss with access to $\ell_t(z_t) = \ell(\bfw_t^T \bfx_t, y_t)$. OEUVRE additionally has access to $\ell_{t-1}(z_t) = \ell(\bfw_{t-1}^T \bfx_t, y_t)$.

\paragraph{Learning algorithm.} We use online gradient descent with a learning rate of $\eta_t = \eta_0 / \sqrt{t}$, with $\eta_0 = 0.01$. We initialize the weight vector $\bfw_0$ to be the zero vector.

\subsection{Prediction with Expert Advice}
\paragraph{Data.} We generate synthetic data for prediction with expert advice with $d$ experts by sampling each expert $i$'s loss at time $t$ from distribution $\calD_i$ in an iid manner. The distributions $\calD_i$ are independent of each other. We use the following choices of distributions:
\begin{itemize}
  \item \textit{Beta distribution}: $\calD_i = \text{Beta}(\alpha_i, \beta_i)$, where $\alpha_i, \beta_i \in \set{1, \cdots, 9}$.
  \item \textit{Bernoulli distribution}: $\calD_i = \text{Bernoulli}(p_i)$, where $p_i \sim \calU(0.01, 0.99)$ are sampled uniformly.
\end{itemize}

\paragraph{Loss.} Let $\overline{\bfell}$ be the vector of expected losses of the experts, and $\bfp_t$ be the distribution over the experts learned by the algorithm at time $t$. The expected loss is given by $\overline{\bfell}^T \bfp_t$. The loss observed at time $t$ is given by $\ell_t(z_t) = \bfell_t^T \bfp_t$, where $\bfell_t$ is the vector of losses of the experts at time $t$. OEUVRE additionally has access to $\ell_{t-1}(z_t) = \bfell_t^T \bfp_{t-1}$.

\paragraph{Learning algorithm.} We use the Hedge algorithm \cite{mourtada2019optimality} with a learning rate of $\eta_t = \sqrt{\log(d) / t}$. We initialize the distribution over experts $\bfp_0$ to be the uniform distribution.

\subsection{Neural Networks}
\paragraph{Data.} We use the MNIST \cite{deng2012mnist} and EMNIST-digits \cite{cohen2017emnist} datasets for our experiments.

\paragraph{Loss.} We use the cross-entropy loss for our experiments. The goal is to estimate the expected loss of the currently learned model on the test split of the data. We obtain a proxy for the expected test loss by evaluating on 1024 test samples at each time step. Each estimator is given 128 independently sampled test samples at each time to update its estimate. OEUVRE additionally has access to the loss of the previous model on the same 128 samples.

\paragraph{Learning algorithm.} We use a simple neural network with one hidden layer consisting of 128 ReLU units. We train the model using mini-batch SGD using the AdamW optimizer with a learning rate of $5 \times 10^{-4}$. Each experiment is repeated for 10 epochs.

\begin{figure}
  \centering
  \includegraphics[width = 0.6\textwidth]{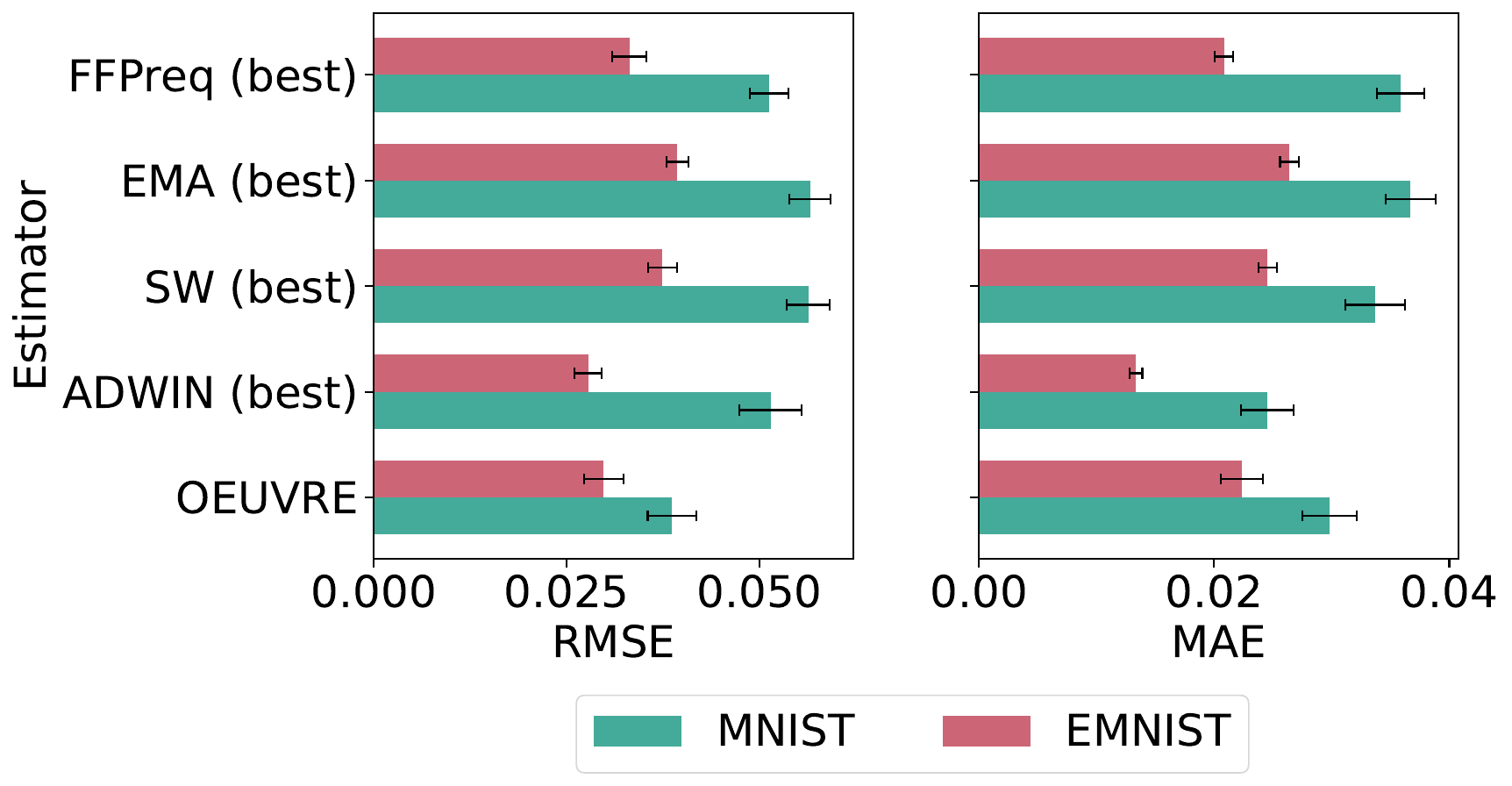}
  \caption{Performance of different loss estimation methods on the MNIST and EMNIST dataset with batch size 8.}
  \label{fig:nn_batchsz_8}
\end{figure}

\subsection{Logistic Regression}
\paragraph{Data.} We use the Diabetes Health Indicators dataset to perform binary classification. We shuffle the data randomly and use the first 10,000 samples to train our model.

\paragraph{Loss.} We use the cross-entropy loss for our experiments. The goal is to estimate the expected loss of the currently learned model on the entire training set.

\paragraph{Learning algorithm.} We use a logistic regression model trained using online gradient with a learning rate of $\eta_t = \eta_0 / \sqrt{t}$, with $\eta_0 = 0.05$ and use Polyak-Ruppert (PR) averaging. PR-averaging results in an algorithm which is $1/t$-uniformly stable (Table \ref{tab:stability_results}). We initialize the weight vector $\bfw_0$ to be the zero vector.

\section{Additional experimental results}
In this section, we provide two additional sets of experimental results. We first show the comparative performance of OEUVRE against the best settings of all baselines for the linear regression and prediction with expert advice tasks in Figures \ref{fig:apdx_linreg_all_estimators} and \ref{fig:apdx_expert_advice_all_estimators}, respectively. We also compare the performance of these estimators on another metric: the time-averaged bias against the ground truth. We present these results for all three tasks. For the linear regression and prediction with expert advice tasks, we also add the simple Prequential estimator, which computes the time-averaged prequential loss at each time step, as a baseline.

In general, we observe that the best-performing abaseline varies across tasks and metrics. OEUVRE, however, consistently matches or outperforms the best-performing baseline, proving to be a good general performer across tasks and metrics.

\paragraph{Linear regression.} We present the results for the linear regression task in Figure \ref{fig:apdx_linreg_all_estimators}. We observe that OEUVRE's performance is comparable to the best settings of SW, EMA, and FFPreq, with ADWIN and the Prequential estimator performing much worse. This is true across all three metrics of RMSE, MAE, and bias.

\begin{figure}[h]
  \centering
  \includegraphics[width = 0.7\textwidth]{./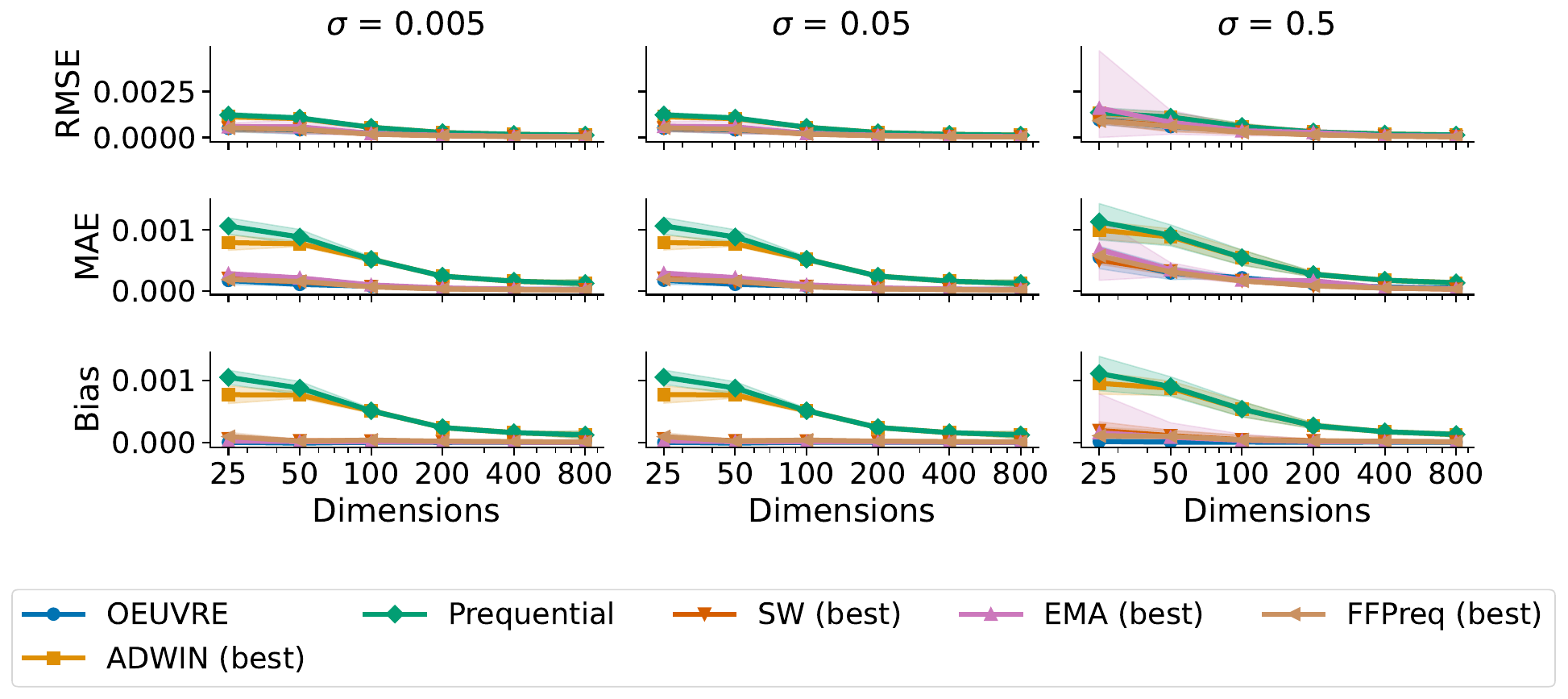}
  \caption{Performance comparison of OEUVRE against the best hyperparameter setting for each baseline for the linear regression task. OEUVRE achieves competitive RMSE, MAE, and bias when compared to the best baseline.}
  \label{fig:apdx_linreg_all_estimators}
\end{figure}

\paragraph{Prediction with expert advice.} We present the results for the prediction with expert advice task in Figure \ref{fig:apdx_expert_advice_all_estimators}. EMA performs the worst across both distributions and all metrics. We observe that for the Beta distribution, OEUVRE clearly gives the lowest RMSE and bias across dimensions. For the Bernoulli distribution, OEUVRE's RMSE is comparable to the best, and its bias is the lowest across all dimensions. OEUVRE's MAE is high for lower dimensions ($d = 25, 50$), but it then decreases and becomes comparable to the other estimators for higher dimensions.

\begin{figure}[h]
  \centering
  \includegraphics[width = 0.7\textwidth]{./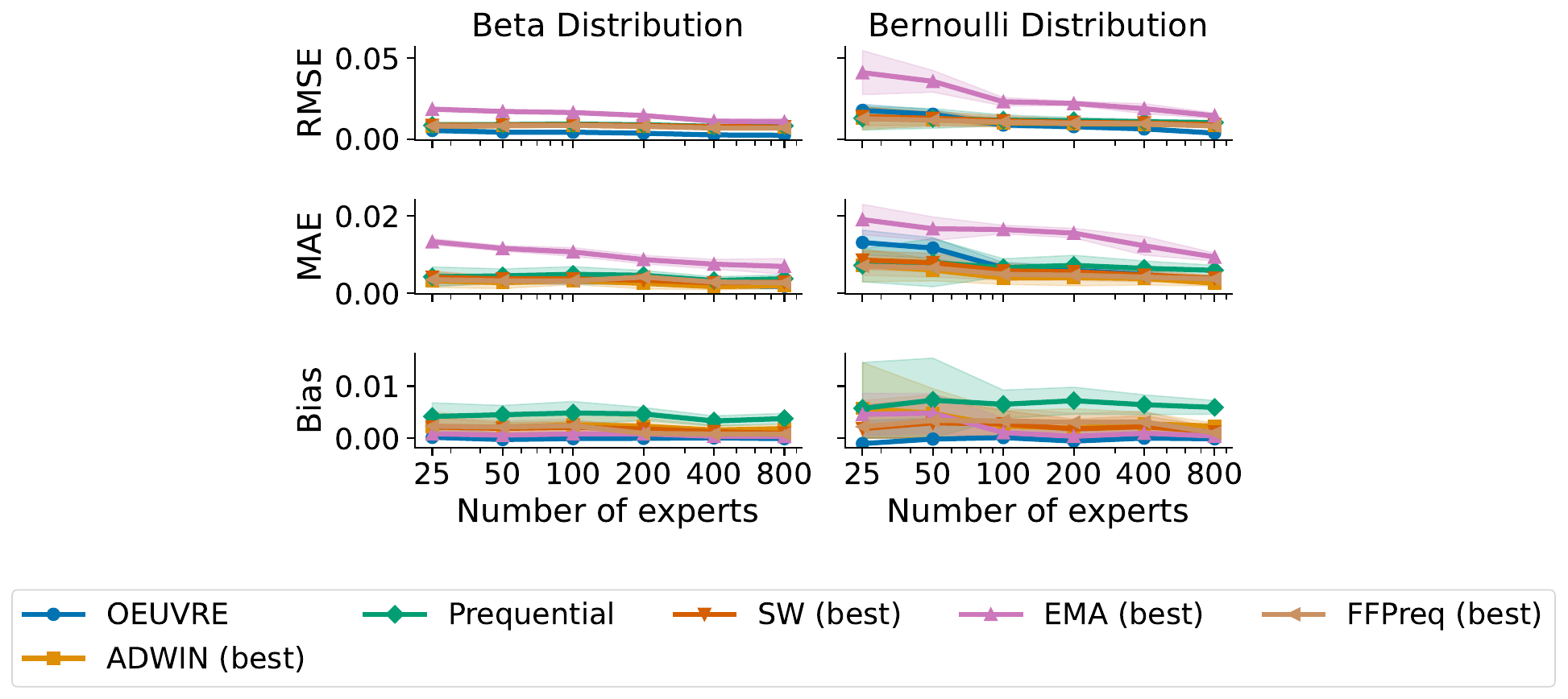}
  \caption{Performance comparison of OEUVRE against the best hyperparameter setting for each baseline for the prediction with expert advice task. OEUVRE achieves competitive RMSE, MAE, and bias when compared to the best baseline.}
  \label{fig:apdx_expert_advice_all_estimators}
\end{figure}

\paragraph{Neural networks.} We present the results for the neural network task in Figures \ref{fig:apdx_neural_networks_batchsz_128_bias} and \ref{fig:apdx_neural_networks_batchsz_8_bias} for batch sizes of 128 and 8, respectively. We observe that OEUVRE has the lowest RMSE, MAE and bias for both MNIST and EMNIST datasets for batch size 128. However, the variance of the bias across runs is high. For batch size 8, the performance is more mixed. For the MNIST dataset, ADWIN has lower RMSE and MAE than OEUVRE. For EMNIST, OEUVRE has the lowest RMSE and the second-lowest MAE. OEUVRE's bias is the lowest acros bothd datasets; however, we again observe high variance across runs.

\begin{figure}[h]
  \centering
  \includegraphics[width = 0.7\textwidth]{./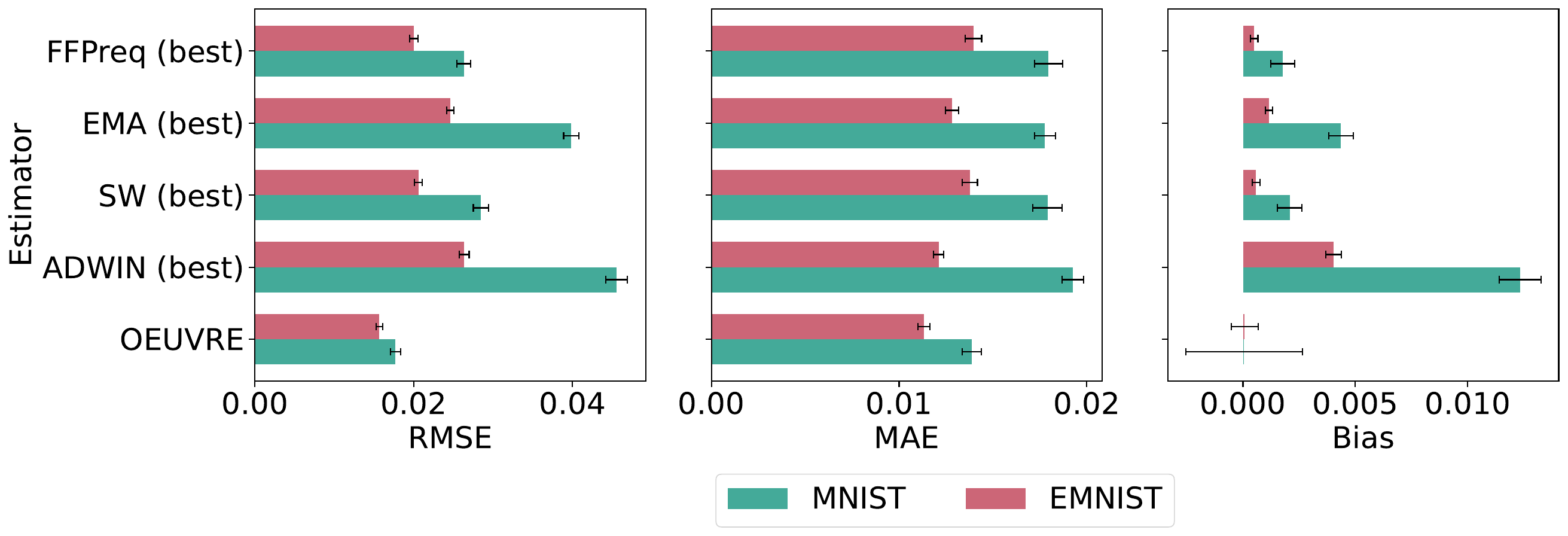}
  \caption{Performance comparison of OEUVRE against the best hyperparameter setting for each baseline for the neural network task with 128 samples for estimation at each time step. OEUVRE achieves competitive RMSE, MAE, and bias when compared to the best baseline.}
  \label{fig:apdx_neural_networks_batchsz_128_bias}
\end{figure}

\begin{figure}[h]
  \centering
  \includegraphics[width = 0.7\textwidth]{./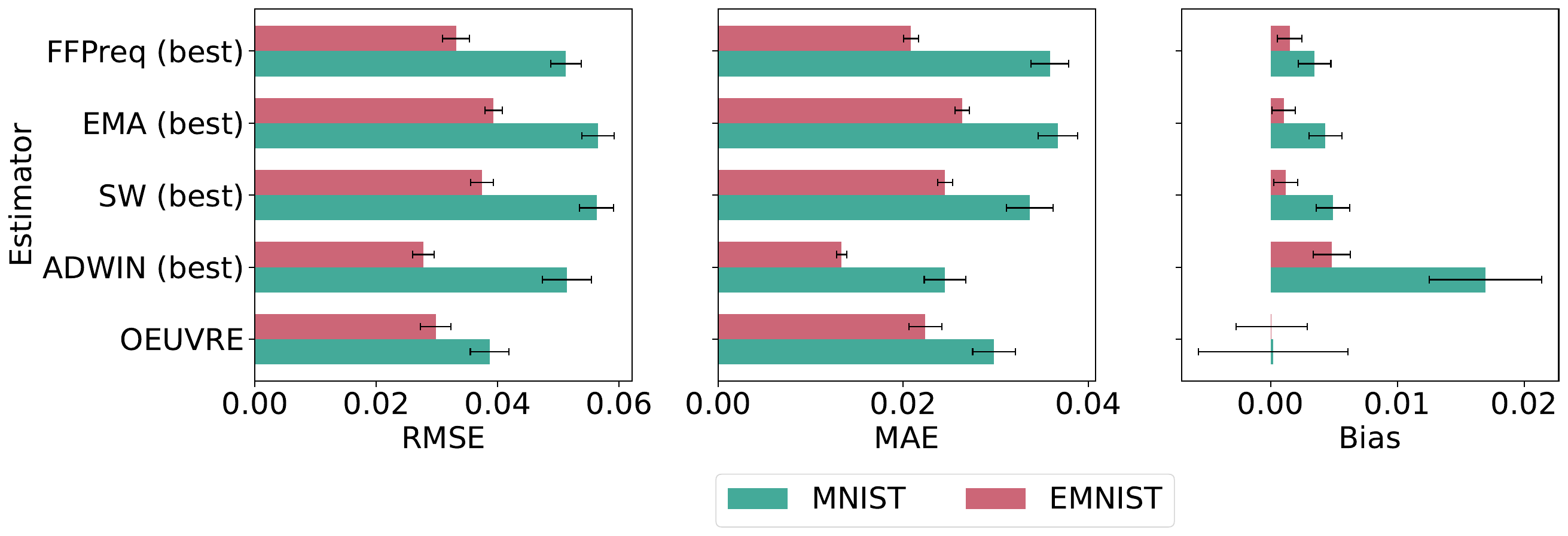}
  \caption{Performance comparison of OEUVRE against the best baseline and the median baseline for the neural network task with 8 samples for estimation at each time step. OEUVRE achieves competitive RMSE and bias when compared to the best baseline.}
  \label{fig:apdx_neural_networks_batchsz_8_bias}
\end{figure}

\section{Experiments on non-stationary datasets}
\label{apdx:non_stationary_data_expts}
We conduct further experiments comparing OEUVRE to the baselines on real-world non-stationary datasets. We consider the following datasets: 1) \textit{UCI Bike sharing} \cite{bike_sharing_275}, 2) \textit{Bikes} from the River package \cite{montiel2021river}, 3) \textit{Chick Weights} from the River package \cite{montiel2021river}, and 4) \textit{Wine Quality} from the River package \cite{montiel2021river}. We use a linear regression model trained using online gradient descent to perform regression on these datasets with learning rate $\eta_t = \eta_0 / \sqrt{t}$, with $\eta_0 = 0.01$. The goal is to estimate the expected MSE of the currently learned model at the current time step. The ground truth is estimated by calculating the loss of the model on a lookahead of 50 samples. 

\paragraph{Bike Sharing dataset.} We present our results for this dataset in Figure \ref{fig:perf_comp_bike_sharing_dataset} and Table \ref{tab:perf_comp_bike_sharing_dataset}. We observe that OEUVRE follows the true loss fairly well, achieving performance comparable to the best EMA and SW baselines. This dataset has gradual and seasonal concept drift, which also allows OEUVRE's estimates to change gradually.

\begin{figure}[htbp]
  \centering
  \includegraphics[width = \textwidth]{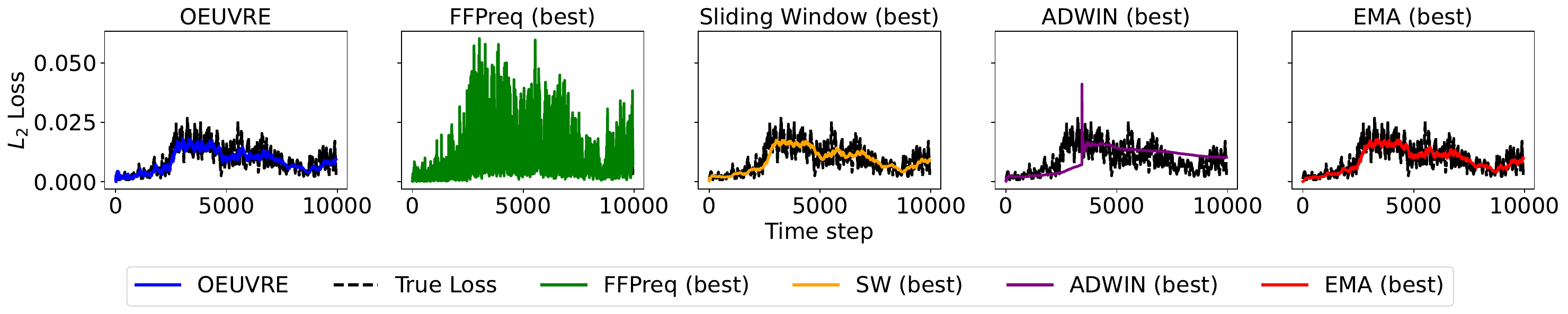}
  \caption{Performance of different algorithms on the Bike Sharing dataset.}
  \label{fig:perf_comp_bike_sharing_dataset}
\end{figure}

\paragraph{Bikes dataset.} We present our results for this dataset in Figure \ref{fig:perf_comp_bikes} and Table \ref{tab:perf_comp_bikes}. We observe that EMA and SW with optimal hyperparameters perform the best, followed by OEUVRE and FFPreq. The Bikes dataset has more extreme changes in the observed loss as compared to the Bike Sharing dataset, which makes it slightly more difficult for OEUVRE to adapt quickly enough.
\begin{figure}[htbp]
  \centering
  \includegraphics[width = \textwidth]{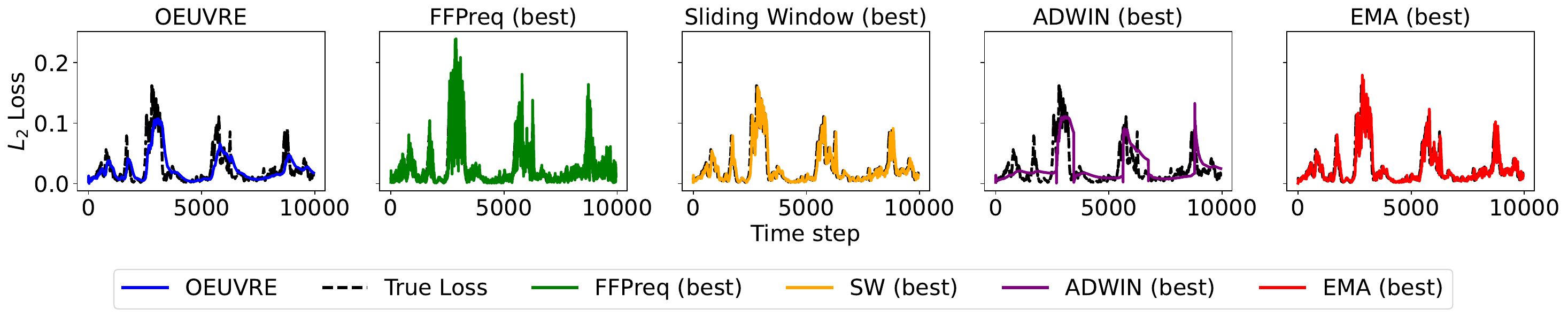}
  \caption{Performance of different algorithms on the Bikes dataset.}
  \label{fig:perf_comp_bikes}
\end{figure}

\paragraph{Chick Weights dataset.} We present our results for this dataset in Figure \ref{fig:perf_comp_chick_weights} and Table \ref{tab:perf_comp_chick_weights}. We observe that the true loss increases rapidly with time, which makes it difficult for all estimators to track the true loss. Although the best settings of EMA, SW, and FFPreq perform better than OEUVRE, OEUVRE still achieves a competitive result when compared with the simple prequential and ADWIN estimators.

\begin{figure}[h]
  \centering
  \includegraphics[width = \textwidth]{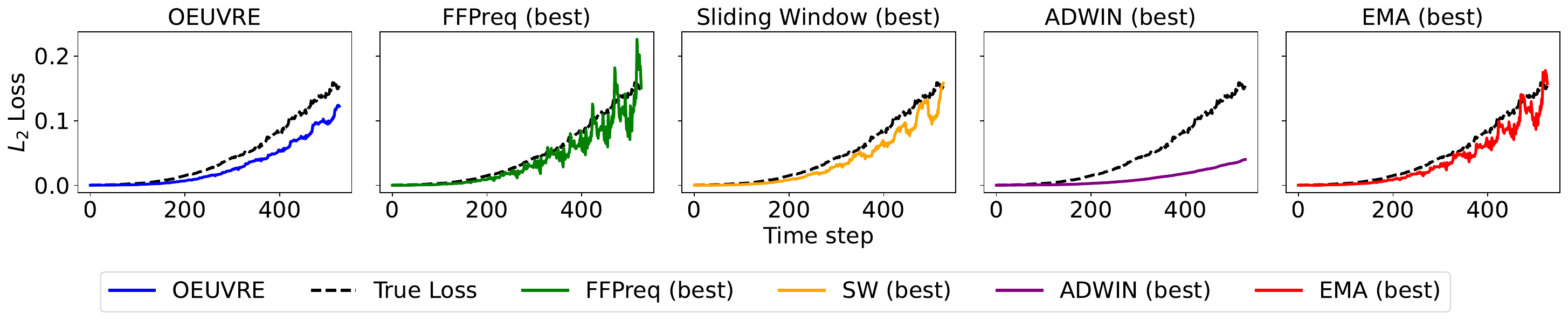}
  \caption{Performance of different algorithms on the Chick Weights dataset.}
  \label{fig:perf_comp_chick_weights}
\end{figure}

\paragraph{Wine Quality dataset.} We present our results for this dataset in Figure \ref{fig:perf_comp_wine_quality_dataset} and Table \ref{tab:perf_comp_wine_quality_dataset}. The ground truth has a lot of variability with time, which makes id difficult for OEUVRE to track it accurately -- we observe in Figure \ref{fig:perf_comp_wine_quality_dataset} that OEUVRE's estimate does not change as rapidly. SW and EMA with optimal hyperparameters perform the best, and OEUVRE's performance is third-best overall.

\begin{figure}[h]
  \centering
  \includegraphics[width = \textwidth]{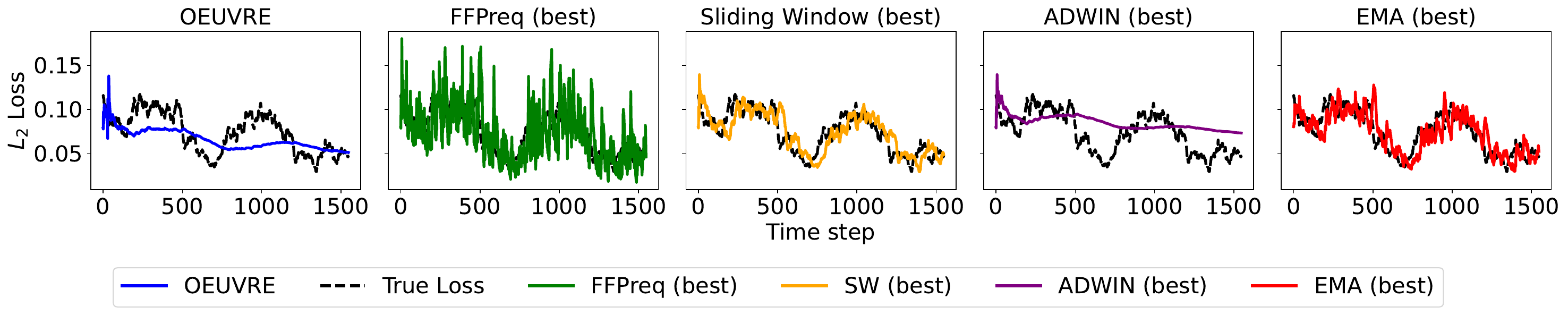}
  \caption{Performance of different algorithms on the Wine Quality dataset.}
  \label{fig:perf_comp_wine_quality_dataset}
\end{figure}

\begin{table}[h]
  \centering
  \begin{subtable}[t]{0.45\textwidth}
    \centering
    \begin{tabular}[t]{lrrr}
      \toprule
      Estimator & RMSE & MAE & Bias \\
      \midrule
      Prequential & 0.0293 & 0.0209 & -0.0012 \\
      ADWIN & 0.0245 & 0.0161 & 0.0012 \\
      SW (best) & 0.0142 & 0.0084 & -0.0003 \\
      EMA (best) & 0.0138 & 0.0080 & -0.0004 \\
      OEUVRE & 0.0158 & 0.0096 & -0.0010 \\
      FFPreq (best) & 0.0176 & 0.0098 & -0.0003 \\
      \bottomrule
    \end{tabular}
    \subcaption{Bikes dataset}
    \label{tab:perf_comp_bikes}
  \end{subtable}
  \begin{subtable}[t]{0.45\textwidth}
    \centering
    \begin{tabular}[t]{lrrr}
      \toprule
      Estimator & RMSE & MAE & Bias \\
      \midrule
      Prequential & 0.0054 & 0.0040 & -0.0019 \\
      ADWIN & 0.0053 & 0.0040 & 0.0004 \\
      SW (best) & 0.0033 & 0.0026 & -0.0002 \\
      EMA (best) & 0.0033 & 0.0026 & -0.0002 \\
      OEUVRE & 0.0035 & 0.0026 & -0.0004 \\
      FFPreq (best) & 0.0080 & 0.0055 & -0.0000 \\
      \bottomrule
    \end{tabular}
    \subcaption{Bike Sharing dataset}
    \label{tab:perf_comp_bike_sharing_dataset}
  \end{subtable}

  \begin{subtable}[t]{0.45\textwidth}
    \centering
    \begin{tabular}{lrrr}
      \toprule
      Estimator & RMSE & MAE & Bias \\
      \midrule
      Prequential & 0.0520 & 0.0369 & -0.0369 \\
      ADWIN & 0.0520 & 0.0369 & -0.0369 \\
      SW (best) & 0.0155 & 0.0112 & -0.0111 \\
      EMA (best) & 0.0151 & 0.0108 & -0.0100 \\
      OEUVRE & 0.0171 & 0.0127 & -0.0127 \\
      FFPreq (best) & 0.0169 & 0.0110 & -0.0084 \\
      \bottomrule
    \end{tabular}
    \subcaption{Chick Weights dataset}
    \label{tab:perf_comp_chick_weights}
  \end{subtable}
  \begin{subtable}[t]{0.45\textwidth}
    \centering
    \begin{tabular}{lrrr}
      \toprule
      Estimator & RMSE & MAE & Bias \\
      \midrule
      Prequential & 0.0230 & 0.0192 & 0.0110 \\
      ADWIN & 0.0230 & 0.0192 & 0.0110 \\
      SW (best) & 0.0151 & 0.0119 & 0.0004 \\
      EMA (best) & 0.0163 & 0.0126 & -0.0001 \\
      OEUVRE & 0.0193 & 0.0164 & -0.0044 \\
      FFPreq (best) & 0.0259 & 0.0200 & -0.0003 \\
      \bottomrule
    \end{tabular}
    \subcaption{Wine Quality dataset}
    \label{tab:perf_comp_wine_quality_dataset}
  \end{subtable}
  \caption{Performance comparison of different estimators on non-stationary datasets. OEUVRE achieves competitive RMSE, MAE, and bias when compared to the best baseline.}
\end{table}

\end{document}